\tikzset{
    -Latex,auto,node distance =1 cm and 1 cm,semithick,
    state/.style ={ellipse, draw, minimum width = 0.7 cm},
    point/.style = {circle, draw, inner sep=0.04cm,fill,node contents={}},
    bidirected/.style={Latex-Latex,dashed},
    el/.style = {inner sep=2pt, align=left, sloped}
}
\def\0{{\bf 0}}
\def\1{{\bf 1}}
\def\BM{{\mathcal B}}
\def\CM{{\mathcal C}}
\def\FM{{\mathcal F}}
\def\HM{{\mathcal H}}
\def\TM{{\mathcal T}}
\def\NM{{\mathcal N}}
\def\GM{{\mathcal G}}
\def\DM{{\mathcal D}}
\def\RB{{\mathbb R}}
\def\NB{{\mathbb N}}
\def\EB{{\mathbb E}}
\def\PB{{\mathbb P}}
\newcommand{\Amse}{\text{AMSE}}
\newcommand{\fTar}{f_\text{tar}}
\def\@centernot#1#2{%
  \mathrel{%
    \rlap{%
      \settowidth\dimen@{$\m@th#1{#2}$}%
      \kern5\dimen@
      \settowidth\dimen@{$\m@th#1=$}%
      \kern-5\dimen@
      $\m@th#1\not$%
    }%
    {#2}%
  }%
}
\newcommand{\independent}{\perp\mkern-9.5mu\perp}
\newcommand{\notindependent}{\centernot{\independent}}
\newcommand{\CardC}{\left|\DM\right|}
\newcommand{\ChoiceSimplex}{\Delta_{\DM}}
\newcommand{\ctrSim}{\kappa_{\text{ctr}}}
\newcommand{\ConvProb}{\overset{p}{\to}}
\newcommand{\ConvAS}{\overset{a.s.}{\to}}
\newcommand{\ConvDist}{\overset{d}{\to}}
\newcommand{\litOp}{o_{\text{p}}}
\newcommand{\litOas}{o_{\text{a.s.}}}
\newcommand{\bigOp}{O_{\text{p}}}
\newcommand{\bigOas}{O_{\text{a.s.}}}
\newcommand{\hEta}{\widehat{\eta}_{t-1}}
\newcommand{\hVar}{\widehat{V}}
\newtheorem{remark}{Remark}
\newtheorem{theorem}{Theorem}
\newtheorem{lemma}{Lemma}
\newtheorem{definition}{Definition}
\newtheorem{proposition}{Proposition}
\newtheorem{cor}{Corollary}
\newtheorem{assumption}{Assumption}
\newtheorem{property}{Property}
\newtheorem{example}{Example}
\DeclarePairedDelimiter\floor{\lfloor}{\rfloor}
\begin{document}


\title{Online Data Collection for \\Efficient Semiparametric Inference}

\author[1]{Shantanu Gupta \footnote{Corresponding author: \texttt{\href{mailto:shantang@cs.cmu.edu}{shantang@cs.cmu.edu}}}}
\author[1]{Zachary C. Lipton}
\author[2]{David Childers}
\affil[1]{Carnegie Mellon University}
\affil[2]{Bowdoin College}
\date{}
    
\maketitle

\begin{abstract}
While many works have studied statistical \emph{data fusion},
they typically assume that 
the various datasets are given in advance.
However, in practice, estimation requires
difficult data collection decisions
like determining the available data sources,
their costs, and how many samples to collect from each source.
Moreover, this process is often sequential
because the data collected at a given time 
can improve collection decisions in the future.
In our setup, given access to multiple data sources
and budget constraints,
the agent must sequentially decide
which data source to query
to efficiently estimate a target parameter.
We formalize this task using \emph{Online Moment Selection},
a semiparametric framework that applies to
any parameter identified by
a set of moment conditions.
Interestingly, the optimal budget allocation 
depends on the (unknown) true parameters.
We present two online data collection policies,
Explore-then-Commit and Explore-then-Greedy,
that use the parameter estimates
at a given time to optimally allocate 
the remaining budget in the future steps.
We prove that both policies achieve zero regret
(assessed by asymptotic MSE) relative to an oracle policy.
We empirically validate our methods on
both synthetic and real-world causal effect estimation tasks,
demonstrating that the online 
data collection policies outperform
their fixed counterparts
\footnote{The code and data are available at 
\href{https://github.com/shantanu95/online-moment-selection}{https://github.com/shantanu95/online-moment-selection}.}.
\end{abstract}

\section{Introduction}
\label{sec:intro}
The estimation of a statistical or causal parameter
often requires combining multiple datasets,
potentially containing different subsets of variables
or measurements collected under different conditions.
Many works have studied \emph{data fusion} 
to identify and efficiently estimate a target parameter
(like a treatment effect) under a variety of
different structural assumptions 
\citep{shi2023data, hunermund2019causal, bareinboim2016causal, ridder2007econometrics, d2006statistical}.
Most of these works proceed from the assumption that the 
multiple datasets are given in advance and the focus is typically 
on efficiently combining them,
failing to account for
the difficult data collection decisions
involved in the practice of statistical estimation.
Tasked with an estimation problem and budget constraints, 
a practitioner must reason about the available data sources,
their costs, and allocate their budget across the data sources.

In many applications, data is prohibitively expensive
and it is infeasible to collect everything.
Medical tests can cost thousands of dollars.
In survey design, asking every question can result in
poor data quality and survey fatigue \citep{jeong2023exhaustive}.
Secondly, we often lack complete control over
which variables are measured,
e.g., when relying on third-party data sources,
each capturing different subsets or types of variables.
Thirdly, if the data sources do not suffice,
a practitioner may run their own data collection
studies, requiring decisions about what to measure, 
and how many samples to collect.
Moreover, data collection is an ongoing process,
with the data collected at a particular time allowing 
us to make better decisions in the future steps.

In this work, we bring data collection decisions
within the scope of statistical estimation.
Instead of assuming that the datasets are given in advance,
in our setup, the agent has access to multiple
data sources (with an associated cost structure) 
that they can \emph{sequentially} query (i.e., sample from).
The data sources can be arbitrary probability distributions
returning marginals over different variable subsets,
measurements collected under different conditions, etc.
We present \emph{Online Moment Selection} (OMS),
a framework to formalize
the sequential problem of deciding, at each time step,
which data source to query to efficiently estimate 
a target parameter (Section~\ref{sec:setup}).
In OMS, we apply the generalized method of moments (GMM) \citep{hansen1982large} to
estimate the target parameter,
augmenting the moment conditions by 
a vector that determines which moments
(or data source) gets selected at a given time.
We require that the agent has sufficient structural knowledge
to construct moment conditions that
uniquely identify the target parameter
and that each moment condition can
be estimated using samples returned 
by at least one data source.

Our formulation is semiparametric.
Each moment condition is indexed 
by a finite-dimensional parameter $\theta \in \RB^D$
and a possibly high-dimensional or nonparametric nuisance parameter $\eta$ \citep{tsiatis2006semiparametric, bickel1993efficient}.
We assume that the target parameter $\beta \in \RB$ is a function of $\theta$.
The nuisance parameters
are not of primary interest but must
still be estimated for inferring $\beta$,
e.g., the propensity score in causal inference
\citep{kennedy2016semiparametric}.
By avoiding restrictive parametric assumptions 
on the underlying distribution,
we can model the nuisance parameters
using flexible machine learning or nonparametric estimators.
To construct a $\sqrt{n}$-consistent estimator
for $\beta$ in the presence of nuisance estimators that converge at slower rates,
we assume \emph{Neyman orthogonality},
which states that the moment conditions are locally insensitive to
perturbations of the nuisance parameters \citep{chernozhukov2018double, neyman1959optimal}.

The optimal allocation of the budget across
the data sources for estimating $\beta$
depends on the (unknown) true model parameters
$(\theta^*, \eta^*)$,
motivating our online data collection strategy:
we use the estimate
of the model parameters at time $t$ to allocate
the remaining budget in the future steps.
In particular, we use the estimated model parameters
to estimate the asymptotic variance of $\widehat{\beta}$
and allocate the data collection budget
in the future steps to minimize this estimated variance.

Addressing the setting with 
a uniform cost structure over the data sources (Section~\ref{sec:adaptive-data-collection}),
we show that any fixed data collection 
policy suffers constant regret, 
as assessed by
the asymptotic MSE relative to the (unknown) \emph{oracle policy},
the policy with the lowest asymptotic MSE.
To overcome this limitation, we propose two online data collection policies,
Explore-then-Commit (OMS-ETC) and Explore-then-Greedy (OMS-ETG),
and prove that both policies achieve zero regret.
Under OMS-ETC (Section~\ref{sec:oms-etc}), 
we use some fraction of the budget to
estimate the model parameters
by querying the data sources uniformly (the \emph{explore} phase).
In the subsequent steps (the \emph{commit} phase), 
we allocate the remaining budget across the data sources 
such that the 
(estimated) asymptotic variance 
of the target parameter $\beta$ is minimized.
Under OMS-ETG (Section~\ref{sec:oms-etg}), 
instead of committing 
to the optimal allocation
determined after exploration, 
we continually update the model parameters
and thereby the optimal allocation
of the remaining budget.
We then extend our analysis to account for 
a non-uniform cost structure
over the data sources (Section~\ref{sec:oms-cost-structure}),
proposing variants of OMS-ETC and OMS-ETG for this setting.

Next, we develop asymptotic confidence sequences
(Section~\ref{sec:asymp-cs}),
a time-uniform counterpart of
CLT-style confidence intervals \citep{waudby2021time}.
Unlike confidence intervals,
confidence sequences
are valid at all time steps simultaneously.
This gives the practitioner more flexibility as
the experiment can be continuously monitored and
the data collection can be adaptively stopped or continued.
Finally, we validate our methods experimentally
on causal effect estimation tasks (Section~\ref{sec:experiments}),
comparing our online strategies against fixed
data collection policies
on synthetic causal models (Section~\ref{sec:expr-synthetic})
and two real-world datasets (Section~\ref{sec:expr-real-world}).
We observe that the online strategies 
have lower regret and MSE and 
better coverage than the fixed policies.

\section{Related Work}
\label{sec:related}
There is a rich literature on semiparametric estimation
under data fusion for causal inference \citep{shi2023data, li2022robust}
and econometrics \citep{buchinsky2022estimation, ridder2007econometrics}.
Many works have studied the \emph{two-sample} instrumental
variable (IV) setting where the
IV, treatment, and outcome are not jointly observed
\citep{shuai2023identifying, shinoda2022estimation, sun2022semiparametric, zhao2019two, graham2016efficient, angrist1990effect}.
Others have studied data fusion for 
combining randomized control trial and observational datasets \citep{lin2024data, li2024efficient, colnet2024causal, carneiro2020optimal},
estimating long-term treatment effects \citep{chen2023semiparametric, ghassami2022combining, imbens2022long, li2021efficient, athey2020combining},
improving efficiency using external auxiliary datasets \citep{li2024identification, hu2022paradoxes, chen2022improving, li2021improving, evans2021doubly, yang2019combining, sturmer2005adjusting},
combining multiple IVs \citep{wu2023learning, burgess2016combining}, and
leveraging additional datasets with selection bias \citep{guo2022multi}.
\citet{graham2024unified} and \citet{li2021improving}
present a semiparametrically efficient data fusion strategy
under assumptions on the alignment of the multiple data
distributions. 
Our work is complementary and addresses the challenge of
sequentially allocating a given data collection budget 
across the given data sources.

Several works have studied the relative efficiencies of different adjustment sets
for causal inference
\citep{henckel2019graphical, rotnitzky2020efficient, witte2020efficient}.
Others have compared the relative efficiencies of the backdoor
and frontdoor estimators in linear \citep{gupta2021estimating, ramsahai2012supplementary}
and semiparametric causal models \citep{gorbach2023contrasting}.
These works show that the relative efficiencies
cannot always be known a priori and depend on the underlying model
parameters, motivating our work on online data collection.

Another related line of work studies inference
from adaptively collected data.
A commonly studied setting is adaptive experimental design,
where the probability of treatment is sequentially updated 
to identify the best treatment or reduce the variance of a causal effect estimator
\citep{li2023double, zhao2023adaptive, cook2023semiparametric, hadad2021confidence, kato2020adaptive, hahn2011adaptive}.
Others have extended this to the 
indirect experimentation setting \citep{zhao2024adaptive, morrison2024constrained, ailer2023sequential, chandak2023adaptive}.
\citet{lin2023semi} study semiparametric inference
for generalized linear regression from adaptively collected data.
While many of the theoretical tools we use are similar 
(like martingale asymptotics), 
our setting is different from these works.

Some works have studied the identification of causal effects
from data sources 
collected under heterogeneous conditions
\citep{bareinboim2016causal, hunermund2019causal}.
Others have studied identification 
with observational and interventional distributions involving
different sets of variables \citep{lee2024a, kivva2022revisiting, lee2021causal, lee2020identification, lee2020causal, tikka2019causal}.
In this work, we take identification for granted and
focus on efficient estimation.

Our work also shares motivation with \emph{active learning},
where the goal is to sequentially decide which data points to label
to learn a predictor or parameter efficiently \citep{zrnic2024active, zhao2012sequential, settles2009active, cohn1996active}.
Another related area is \emph{active feature acquisition},
where the goal is to incrementally acquire the 
most informative or cost-efficient feature subset
for training a predictive model \citep{li2021active, shim2018joint, hu2018survey, attenberg2011selective, saar2009active}.

The authors also studied OMS in \citet{gupta2021efficient}.
We build on this work in three key ways:
(1) we consider the semiparametric setting
allowing for flexible nuisance estimation,
(2) we make weaker assumptions, and 
(3) our results enable time-uniform inference.

\section{Online Moment Selection}
\label{sec:setup}
\subsection{Setup}

We represent the available data sources by
$\DM := ( \PB^{(i)} )_{i=1}^{\CardC}$,
a collection of probability distributions.
Querying a data source $\PB^{(i)} \in \DM$ is equivalent to
drawing an independent and identically distributed (i.i.d.)
sample from $\PB^{(i)}$.
The collection $\DM$ can include 
marginals over different subsets of variables, 
observational or interventional distributions, 
measurements under heterogeneous conditions, etc.
To simplify exposition, 
we make the following assumption 
(which we relax in Section~\ref{sec:oms-cost-structure} to allow for non-uniform costs):
\begin{assumption}[Equal cost]\label{assum:uniform-cost-structure}
Each sample from every data source has an equal cost.
\end{assumption}
We denote by $T$, the horizon or the known total number of queries the agent can make.
The \emph{selection vector}, 
denoted by $s_t \in \left\{ 0, 1 \right\}^{\CardC}$ for $t \in [T]$, 
is a random binary one-hot vector (i.e., $\sum_{d} s_{t, d} = 1$) 
indicating the data source queried by the agent at time $t$.
That is, $s_{t, j} = 1$ indicates that the agent queried $\PB^{(j)}$
at time $t$.
For convenience, we define $\PB(s_t) := \PB^{(j)}$.
Let $H_t = \{ Z_1, \hdots, Z_t \} \in \HM_t$ denote the \emph{history}, or the data collected until time $t$
(with $H_0 = \emptyset$),
where $Z_t$ is the sample from the data source queried at time $t$
and $\HM_t$ is the set of possible histories.
A \emph{data collection policy}, denoted by $\pi$, is a sequence of
functions $\pi_t : \HM_{t-1} \mapsto \{ 0, 1 \}^{\CardC}$ with
$s_t = \pi_t(H_{t-1})$.
Thus, $s_t$ can depend on
the past data $H_{t-1}$.

\paragraph{Notation.}
We use $o$ and $O$ to denote the classical order notation;
$\litOp$ ($\litOas$) to denote convergence in probability (almost surely);
and $\bigOp$ ($\bigOas$) to denote stochastic boundedness
in probability (almost surely).
The set $\ChoiceSimplex$ denotes the $(\CardC - 1)$ probability simplex
and $\kappa_{\text{ctr}}$ denotes the center of $\ChoiceSimplex$:
$\kappa_{\text{ctr}} := \left[ 1/{\CardC}, \hdots, 1/{\CardC} \right]^\top$.
We use $\|.\|$ to denote the spectral and $\text{\emph{l}}_2$
norms for matrices and vectors, respectively.
For two functions $\widehat{f}, f^{*} : \mathcal{X} \mapsto \RB$, we 
denote their $\text{\emph{l}}_2$ distance as
$\| \widehat{f} - f^{*}  \|^2 := \EB_{\PB(x)} [ (\widehat{f}(x) - f^{*}(x))^2 ]$.
If $\widehat{f}$ is an estimated function, 
then $\| \widehat{f} - f^{*}  \|$ will be a random variable
(with randomness over the estimation of $\widehat{f}$).
For a vector of functions 
$\widehat{\eta} = \left( \widehat{f}_1, \hdots, \widehat{f}_k \right)$
and $\eta^{*} = \left( f^{*}_1, \hdots, f^{*}_k \right)$,
we define $\| \widehat{\eta} - \eta^{*}  \| := \sum_{i=1}^{k} \| \widehat{f}_i - f^{*}_i  \|$.
We use $\BM_\epsilon(x)$ to denote the $\epsilon$-ball
around $x$: $\BM_\epsilon(x) = \{ x' : \| x - x'\| < \epsilon \}$.
For a vector $v$, we use $v_j$ to denote its $j^{\text{th}}$ coordinate.
We use $\NM(\mu, \sigma^2)$ to denote the normal distribution
with mean $\mu$ and variance $\sigma^2$;
and $\NM_{\sigma}(\mu, \sigma^2)$ to denote a 
mixture of normals (the mixture is taken over $\sigma$), 
with characteristic function $\EB_{\sigma}[\exp\{-i \mu t - \sigma^2 t / 2\}]$.
We use $\EB_{t-1}[.] := \EB[. | H_{t-1}]$ to denote the
conditional expectation given the past data $H_{t-1}$.

\paragraph{Constructing the moment conditions.}
Our framework is applicable 
if the target parameter can be identified 
by a set of moment conditions
such that each moment relies on samples
returned by at least one data source.
We assume that the moment conditions can be written as
\begin{align}
    g_t(\theta, \eta) = m(s_t) \odot \underbrace{\begin{bmatrix}
    \psi^{(1)}(Z^{(1)}_t; \theta, \eta^{(1)}) \\
    \vdots \\
    \psi^{(M)}(Z^{(M)}_t; \theta, \eta^{(M)})
    \end{bmatrix}}_{:= \Tilde{g}_t(\theta, \eta)} \in \RB^{M}, \label{eq:augmented-moment-conditions}
\end{align}
where $\odot$ is the element-wise product, 
$\theta \in \Theta \subset \RB^{D}$ is a
finite-dimensional parameter, and
$\eta := \left( \eta^{(1)}, \hdots, \eta^{(M)} \right) \in \TM$
are the nuisance parameters.
For brevity, we use $\psi^{(i)}_t(\theta, \eta) := \psi^{(i)}(Z^{(i)}_t; \theta, \eta)$.
At the true parameter values $(\theta^*, \eta^*)$, the moment conditions satisfy
\begin{align*}
    \forall i \in [M], \,\, \EB[ \psi^{(i)}(Z^{(i)}_t; \theta^*, \eta^*) ] = 0.
\end{align*}
We augment the original moment conditions
$\Tilde{g}_t(\theta, \eta)$ with the vector $m(s_t)$.
The function $m : \{0, 1\}^{\CardC} \mapsto \{0, 1\}^M$ is a fixed known
function that determines which moments get selected based
on the selection vector $s_t$.
That is, $m_i(s_t) = 1$ indicates that 
the moment condition $\psi^{(i)}$ can
be estimated from the data source selected in $s_t$
(i.e., $Z^{(i)}_t \overset{iid}{\sim} \PB(s_t)$).
The target parameter is $\beta^* := \fTar(\theta^*)$
for some known function $\fTar : \Theta \mapsto \RB$.

\paragraph{Estimating the nuisance parameters.}
Since the moment conditions depend on 
the nuisance parameters $\eta$, 
they need to be estimated. 
We make the following assumption:
\begin{assumption}\label{assumption:nuisance-common}
(a) The nuisance estimator can be constructed 
without knowledge of $\theta$ and 
(b) at time $t$, the nuisances are estimated 
using the data $H_{t-1}$,
that we denote by $\hEta$.
\end{assumption}
Assumption~\ref{assumption:nuisance-common}a
states that it is possible to construct an estimator
for $\eta$ independently of $\theta$
(see \citet{kallus2024localized} for more discussion and
a strategy for relaxing this condition).
Assumption~\ref{assumption:nuisance-common}b states that,
at time $t$, the nuisance estimator $\hEta$ only depends 
on data collected until time $t-1$.
This ensures that the nuisance estimators are trained and evaluated
on independent samples,
and plays a similar role as the \emph{sample splitting}
technique used to avoid Donsker assumptions on the nuisance function class
\citep[Sec.~4.2]{kennedy2022semiparametric}.

\paragraph{Estimating the target parameter.}
We estimate $\theta$ by plugging $\hEta$ into the moment conditions 
$g_t$ and minimizing the GMM objective $\widehat{Q}_T$:
\begin{align}
    & \widehat{\theta}_T = \arg\min_{\theta \in \Theta} \widehat{Q}_T(\theta, (\hEta)_{t=1}^{T}), \label{eq:gmm-estimator-argmin} \\
    \text{where} \,\,\, & 
    \widehat{Q}_T(\theta, (\hEta)_{t=1}^{T}) = \left[ \frac{1}{T} \sum_{t=1}^T 
    g_t(\theta, \hEta) \right]^\top \widehat{W}_T \left[ \frac{1}{T} \sum_{t=1}^T g_t(\theta, \hEta) \right], \nonumber
\end{align}
$\widehat{W}_T \in \RB^{M \times M}$ is a (possibly data dependent)
positive semidefinite matrix.
We then plug in $\widehat{\theta}_T$ to
estimate the target parameter: $\widehat{\beta}_T = \fTar(\widehat{\theta}_T)$.
We use the \emph{two-step} GMM estimator,
which is computed as follows:
we first compute the \emph{one-step} estimator 
$\widehat{\theta}^{(\text{os})}_T$ with $\widehat{W}_T = I$ (identity),
and then compute the two-step estimator
with $\widehat{W}_T = \widehat{\Omega}_T(\widehat{\theta}^{(\text{os})}_T)^{-1}$, where $ \widehat{\Omega}_T(\theta) = \left[ \sum_{t=1}^T g_t(\theta, \hEta) g_t(\theta, \hEta)^\top / T \right]$.
Informally, $\widehat{W}_T$ determines the importance given to each 
moment condition in the minimization problem and the choice
in the two-step estimator is asymptotically
the most efficient \citep[Sec.~5]{newey1994large}.

\subsection{Examples}

We give three examples to instantiate our
framework (see Appendix~\ref{sec:apdx-data-fusion-examples} for additional examples).
We begin with the parametric case (where $\eta$ is empty):
\begin{example}[Two-sample IV]\label{example:iv-graph}
Consider a linear IV causal model 
(Fig.~\ref{fig:disjoint-iv-graph}) 
with instrument $Z$, treatment $X$, and outcome $Y$
(and empty $W$);
with the following data-generating process:
\begin{align*}
    X &:= \alpha Z + \epsilon_X, \\
    Y &:= \beta X + \epsilon_Y, \\
    \epsilon_X &\notindependent \epsilon_Y,\, \epsilon_Y \independent Z,\, \epsilon_X \independent Z. 
\end{align*}
In the two-sample IV setting, we have 
two data sources that return an i.i.d. sample 
of $(Z, X)$ and $(Z, Y)$.
Thus, $\DM = \left\{ \PB(Z, X), \, \PB(Z, Y) \right\}$.
The target parameter $\beta^*$ is the 
average treatment effect (ATE) of $X$ on $Y$ 
that can be estimated using:
\begin{align*}
    g_t(\theta) = \underbrace{\begin{bmatrix} 
        s_{t, 1} \\
        s_{t, 2}
    \end{bmatrix}}_{=m(s_t)} \odot \underbrace{\begin{bmatrix}
        Z_t (X_t - \alpha Z_t) \\
        Z_t (Y_t - \alpha \beta Z_t)
    \end{bmatrix}}_{=\Tilde{g}_t(\theta)} = \begin{bmatrix}
        s_{t, 1} \\
        1 - s_{t, 1}
    \end{bmatrix} \odot \begin{bmatrix}
        Z_t (X_t - \alpha Z_t) \\
        Z_t (Y_t - \alpha \beta Z_t)
    \end{bmatrix},
\end{align*}
where $\theta = [\beta, \alpha]^\top$, and $\fTar(\theta) = \beta$.
\end{example}

\begin{figure}
\centering
\begin{subfigure}[b]{0.3\textwidth}
\centering
\begin{tikzpicture}
    \node[state] (1) {$X$};
    \node[state] (2) [left =of 1] {$Z$};
    \node[state] (3) [right =of 1] {$Y$};
    \node[state] (4) [left =of 1, xshift=1cm, yshift=1.5cm] {$W$};
    
    \path (2) edge node[]{} (1);
    \path (1) edge node[]{} (3);
    \path (4) edge node[]{} (1);
    \path (4) edge node[]{} (2);
    \path (4) edge node[]{} (3);
    \path[bidirected] (1) edge[bend left=65] node[el,above]{} (3);
\end{tikzpicture}
\caption{IV graph.}
\label{fig:disjoint-iv-graph}
\end{subfigure}
\hfill
\begin{subfigure}[b]{0.33\textwidth}
\centering
\begin{tikzpicture}
    \node[state] (1) {$W$};
    \node[state] (2) [left =of 1,yshift=-1.5cm] {$X$};
    \node[state] (3) [right =of 2] {$M$};
    \node[state] (4) [right =of 3] {$Y$};
    
    \path (1) edge node[above]{} (2);
    \path (1) edge node[above]{} (4);
    \path (2) edge node[above]{} (3);
    \path (3) edge node[above]{} (4);
\end{tikzpicture}
\caption{Confounder-mediator graph.}
\label{fig:confounder-mediator-graph}
\end{subfigure}
\hfill
\begin{subfigure}[b]{0.3\textwidth}
\centering
\begin{tikzpicture}
    \node[state] (1) {$W$};
    \node[state] (2) [right =of 1] {$U$};
    \node[state] (3) [left =of 1, xshift=1.5cm, yshift=-1.5cm] {$X$};
    \node[state] (4) [right =of 3, xshift=0.8cm] {$Y$};
    
    \path (1) edge node[above]{} (3);
    \path (1) edge node[above]{} (4);
    \path (2) edge node[above]{} (3);
    \path (2) edge node[above]{} (4);
    \path (3) edge node[above]{} (4);
    \path (1) edge[-,dashed] node[above]{} (2);
\end{tikzpicture}
\caption{Two confounders graph.}
\label{fig:two-covariates-graph}
\end{subfigure}
\caption{Examples of causal models---with treatment $X$ and outcome $Y$---where the ATE can be identified by different data sources returning different subsets of variables.}
\end{figure}
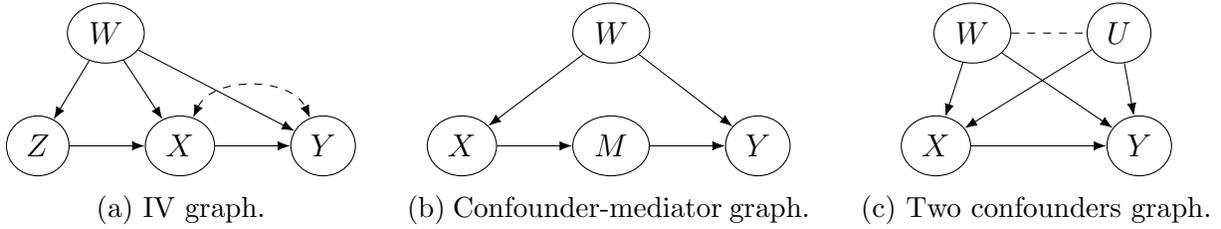

The following examples demonstrate the semiparametric case where 
the moment conditions are also indexed by nuisance parameters.

\begin{example}[Two-sample LATE]\label{example:two-sample-late}
Consider the IV causal model (Fig.~\ref{fig:disjoint-iv-graph})
with $\DM = \{ \PB(W, Z, X), \, \PB(W, Z, Y) \}$.
The target parameter $\beta^*$ is the unconditional local average treatment effect (LATE), 
which is the ATE on the compliers \citep{frolich2007nonparametric}.
The LATE can be estimated
using the following moment conditions \citep[Sec.~5.2]{chernozhukov2018double}:
\begin{align*}
    g_t(\theta, \eta) = \begin{bmatrix}
        s_{t, 1} \\
        1-s_{t, 1}
    \end{bmatrix} \odot \begin{bmatrix}
    \psi_{\text{AIPW}}(W_t, Z_t, Y_t; \eta^{(1)}) - \beta \alpha \\
    \psi_{\text{AIPW}}(W_t, Z_t, X_t; \eta^{(2)}) - \alpha
    \end{bmatrix},
\end{align*}
where $\psi_{\text{AIPW}}$ is the augmented inverse propensity score
influence function \citep[Sec.~3.4]{kennedy2016semiparametric},
$\theta = [\beta, \alpha]^\top$, and $\fTar(\theta) = \beta$. 
For $O = (W, Z, R) \in (\mathcal{W} \times \{0, 1\} \times \RB)$, $\psi_{\text{AIPW}}$ is
\begin{align}\label{eq:aipw-influence-func}
    \psi_{\text{AIPW}}(O) := \left( \frac{Z}{\pi^{*}(W)} - \frac{1 - Z}{1 - \pi^{*}(W)} \right) \left( R - \mu^{*}(Z, W) \right) + \left( \mu^{*}(1, W) - \mu^{*}(0, W) \right),
\end{align}
where $\pi^{*}(W) = \PB(Z=1|W)$ and
$\mu^{*}(Z, W) = \EB[R|Z, W]$
are the nuisance parameters.
\end{example}

\begin{example}\label{example:confounder-mediator}
Consider the confounder-mediator causal graph (Fig.~\ref{fig:confounder-mediator-graph})
with a binary treatment $X$, mediator $M$, outcome $Y$, confounder $W$.
The target parameter is the causal effect of $X$ on $Y$,
i.e., $\beta^* = \EB[Y|\text{do}(X=1)] - \EB[Y|\text{do}(X=0)]$ \citep{pearl2009causality}.
With $\DM = \{ \PB(W, X, Y), \PB(X, M, Y) \}$,
$\beta^*$ can be estimated with
the backdoor 
or the frontdoor criterion \citep[Sec.~3.3]{pearl2009causality}. 
The moment conditions can be written as
\begin{align*}
    g_t(\theta) = \begin{bmatrix}
        s_{t, 1} \\
        (1-s_{t, 1})
    \end{bmatrix} \odot \begin{bmatrix}
    \psi_{\text{AIPW}}((W_t, X_t, Y_t); \eta^{\text{(AIPW)}}) - \beta \\
    \psi_{\text{fd}}((X_t, M_t, Y_t); \eta^{\text{(fd)}}) - \beta
    \end{bmatrix},
\end{align*}
where $\psi_{\text{AIPW}}$ is defined in Eq.~\ref{eq:aipw-influence-func}, $\psi_{\text{fd}}$ is the efficient
influence function for the frontdoor criterion \citep{fulcher2020robust} (see Eq.~\ref{eq:apdx-frontdoor-eif} in Appendix~\ref{sec:apdx-experiments}),
$\theta = [\beta]$, and $\fTar(\theta) = \beta$.
\end{example}

\subsection{Consistency, Asymptotic Inference, and Regret}\label{sec:setup-cons-norm}

We present sufficient conditions for consistency (Prop.~\ref{prop:consistency-gmm})
and asymptotic inference (Props.~\ref{prop:asymp-normality}, \ref{prop:asymp-inference}) in the OMS setting.
We then use these results to define the regret of
a data collection policy (Definition~\ref{defn:regret}).
The proofs are deferred to Appendices~\ref{sec:apdx-proof-consistency} and \ref{sec:apdx-proof-asymp-norm}.

\begin{assumption}\label{assump:standard-gmm}
Let $\mathcal{I} = \{ i \in [M] : \liminf_{T \to \infty} \sum_{t=1}^{T} m_i(s_t) / T > 0 \}$.
(a) (Identification)
$\forall \theta \neq \theta^*, \, \EB\left[ \left\|  \Tilde{g}_{t, \mathcal{I}}\left( \theta, \eta^* \right) \right\| \right] \neq 0$,
where $\Tilde{g}_{t, \mathcal{I}} = [\Tilde{g}_{t, i}]_{i \in \mathcal{I}}$ 
is the subset of moments determined by $\mathcal{I}$;
(b) $\Theta \subset \RB^D$ is compact; and
(c) $\fTar$ is continuously differentiable at $\theta^*$.
\end{assumption}
The index set $\mathcal{I}$ denotes the moments
that get selected an asymptotically non-negligible fraction 
of times by the policy.
Assumption~\ref{assump:standard-gmm}a states that 
the moment conditions selected in $\mathcal{I}$
are sufficient to uniquely identify $\theta^*$
\citep[Sec.~2.2.3]{newey1990semiparametric}.
If there are as many moments as parameters (i.e., $M = D$),
this requires that $|\mathcal{I}| = M$.

\begin{property}[ULLN]\label{property:ulln}
For a function $a_t(\theta, \eta) := a(X_t; \theta, \eta) \in \RB$ 
with $X_t$ sampled i.i.d.,
we say that $a_t(\theta, \eta)$ satisfies the ULLN property if 
(i) $\exists \delta > 0, \, \forall \theta, \EB[ |a_t(\theta, \eta^{*})|^{2 + \delta} ] < \infty$;
(ii) (Lipschitzness) For some constant $L$, $\forall \theta, \theta' \in \Theta, \forall
\eta \in \TM, \, |a_t(\theta', \eta) - a_t(\theta, \eta) | \leq L(X_t) \| \theta' - \theta \|$ with $\EB[L^2(X_t)] \leq L$
and 
$\forall \theta \in \Theta, \forall
\eta, \eta' \in \TM, \, \EB[(a_t(\theta, \eta) - a_t(\theta, \eta'))^2] \leq L \| \eta - \eta' \|^2$;
(iii) (Nuisance consistency) $\| \hEta - \eta^* \| \ConvProb 0$; and
(iv) $\exists \delta > 0, \sup_{\eta \in \TM} \EB[\| \eta - \eta^* \|^{2 + \delta}] < \infty$.
\end{property}

\begin{proposition}[Consistency]\label{prop:consistency-gmm}
Suppose that (i) Assumption~\ref{assump:standard-gmm} holds;
(ii) $\forall i \in [M], \psi^{(i)}$ satisfies Property~\ref{property:ulln};
(iii) $\forall (i, j) \in [M]^2, \psi^{(i)} \psi^{(j)}$ satisfies Property~\ref{property:ulln}.
Then $\widehat{\beta}_T \ConvProb \beta^*$.
\end{proposition}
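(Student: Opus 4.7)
The plan is to adapt the standard GMM consistency argument to the OMS setting, where the two wrinkles beyond the textbook case are (a) the adaptive selection $s_t$, making the samples non-i.i.d.\ in $t$, and (b) the plug-in nuisance $\hEta$ that depends on the past data. I would carry out three blocks: (1) a uniform law of large numbers for $\bar{g}_T(\theta,\widehat{\eta}) := T^{-1}\sum_t g_t(\theta,\hEta)$ and for the cross-moment matrix $\widehat{\Omega}_T(\theta)$; (2) identification of the resulting population objective, using a subsequence argument on the empirical selection frequencies $q_{i,T} := T^{-1}\sum_t m_i(s_t)$; and (3) a standard argmin-consistency result followed by continuity of $\fTar$.

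For (1), fix $\theta\in\Theta$ and $i\in[M]$, and decompose
\begin{align*}
\frac{1}{T}\sum_{t=1}^T m_i(s_t)\,\psi^{(i)}_t(\theta,\hEta)
&= \frac{1}{T}\sum_{t=1}^T m_i(s_t)\bigl[\psi^{(i)}_t(\theta,\hEta)-\psi^{(i)}_t(\theta,\eta^*)\bigr] \\
&\quad + \frac{1}{T}\sum_{t=1}^T m_i(s_t)\bigl[\psi^{(i)}_t(\theta,\eta^*)-\EB[\psi^{(i)}(\theta,\eta^*)]\bigr] \\
&\quad + q_{i,T}\,\EB[\psi^{(i)}(\theta,\eta^*)].
\end{align*}
Under Assumption~\ref{assumption:nuisance-common}b, $\hEta$ is $H_{t-1}$-measurable, so given $H_{t-1}$ the sample $Z^{(i)}_t$ is drawn independently of $\hEta$ from the selected source. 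The $L_2$-Lipschitz bound of Property~\ref{property:ulln}(ii) then supplies a conditional second-moment bound of $L\|\hEta-\eta^*\|^2$, which is $\litOp(1)$ by Property~\ref{property:ulln}(iii), so the first term is $\litOp(1)$. The second term is a martingale difference array with conditional $(2+\delta)$-moments controlled by Property~\ref{property:ulln}(i); a martingale SLLN (e.g., Chow's theorem) yields $\litOp(1)$. Pointwise convergence is upgraded to uniform convergence over the compact $\Theta$ (Assumption~\ref{assump:standard-gmm}b) via the $\theta$-Lipschitz bound in Property~\ref{property:ulln}(ii) and a standard finite-cover argument. The same reasoning applied to $\psi^{(i)}\psi^{(j)}$, using hypothesis (iii), gives uniform convergence of $\widehat{\Omega}_T(\cdot)$.

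For (2), $q_T\in[0,1]^M$ lies in a compact set, so every subsequence has a further convergent subsequence along which $q_T\to q$ with $q_i>0$ precisely when $i\in\mathcal{I}$. Along such a subsequence the $\mathcal{I}^c$-block of moments contributes vanishingly to the GMM criterion, while the $\mathcal{I}$-block converges uniformly in $\theta$ to $\mathrm{diag}(q_{\mathcal{I}})\,\EB[\Tilde{g}_{t,\mathcal{I}}(\theta,\eta^*)]$. Consistency of the one-step estimator $\widehat{\theta}^{(\mathrm{os})}_T$ (same argument with $\widehat{W}_T=I$) together with uniform convergence of $\widehat{\Omega}_T$ implies $\widehat{W}_T\ConvProb W$ for some positive definite $W$ on the $\mathcal{I}$-coordinates. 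The limit objective $Q(\theta)=\bar{g}(\theta)^\top W\bar{g}(\theta)$ is then uniquely minimized at $\theta^*$ by Assumption~\ref{assump:standard-gmm}(a). A standard argmin-consistency theorem (uniform convergence + compactness + unique minimizer) delivers $\widehat{\theta}_T\ConvProb\theta^*$ along every subsequence, and therefore unconditionally. Continuity of $\fTar$ at $\theta^*$ (Assumption~\ref{assump:standard-gmm}(c)) yields $\widehat{\beta}_T=\fTar(\widehat{\theta}_T)\ConvProb\fTar(\theta^*)=\beta^*$.

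The hardest step is the uniform LLN in block (1): because both $s_t$ and $\hEta$ are functions of $H_{t-1}$, one cannot appeal to i.i.d.\ empirical-process machinery, and instead everything must be controlled via conditional mean/variance bounds given $H_{t-1}$. This is exactly the role of the sample-splitting built into Assumption~\ref{assumption:nuisance-common}b combined with the Lipschitz-and-moment conditions in Property~\ref{property:ulln}. A secondary subtlety is that the identification hypothesis is stated only relative to the random, policy-dependent active set $\mathcal{I}$, which forces the subsequence argument on $q_T$ rather than a direct population-limit statement.
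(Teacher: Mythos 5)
Your proposal follows the paper's proof in all essential blocks: a martingale/MDS uniform law of large numbers over the compact $\Theta$ via a finite cover and the Lipschitz conditions of Property~\ref{property:ulln}, the nuisance handled through conditional moments given $H_{t-1}$ (Assumption~\ref{assumption:nuisance-common}b), consistency of the one-step estimator with $\widehat{W}_T = I$ followed by convergence of the two-step weight matrix using hypothesis (iii) and continuity of the matrix inverse, and finally continuity of $\fTar$. The one place you genuinely diverge is how the random selection frequencies enter the identification step: the paper never extracts limits of $q_{i,T} = T^{-1}\sum_t m_i(s_t)$; instead it compares $\widehat{Q}_T$ to a random, $T$-dependent population criterion $\bar{Q}_T(\theta)$ built from the realized frequencies, in which $\theta^*$ is the unique minimizer because the $\liminf$ in the definition of $\mathcal{I}$ supplies a separation uniform in $T$, and then runs the Newey--McFadden argmin argument directly. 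Your subsequence route can be made to work but needs two repairs as written. First, the characterization ``$q_i>0$ precisely when $i\in\mathcal{I}$'' is wrong in the only-if direction: a coordinate with $\liminf_T q_{i,T}=0$ can still have positive subsequential limits, so the $\mathcal{I}^c$-block need not vanish along your subsequence; this is harmless (extra active moments cannot spoil identification since every moment has mean zero at $\theta^*$), but the argument should be phrased so it does not rely on that block vanishing --- and note also that pointwise compactness extraction for the random sequence $q_T$ gives $\omega$-dependent subsequences, which is why the paper's device of keeping the frequencies inside $\bar{Q}_T$ is cleaner. Second, your treatment of the nuisance-difference term compresses a step the paper spells out: from $\EB_{t-1}[(\psi^{(i)}_t(\theta,\hEta)-\psi^{(i)}_t(\theta,\eta^*))^2]\le L\|\hEta-\eta^*\|^2 \ConvProb 0$ alone you cannot conclude the time average is $\litOp(1)$; the paper splits this term into a martingale part plus the conditional means $\EB_{t-1}[D_t]$, and uses Property~\ref{property:ulln}(iv) together with convergence of moments to turn $\|\hEta-\eta^*\|\ConvProb 0$ into $\EB\|\hEta-\eta^*\|\to 0$ so that Markov's inequality controls $T^{-1}\sum_t\|\hEta-\eta^*\|$. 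With those two points tightened, your argument matches the paper's.
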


For $\psi_{\text{AIPW}}$ (Eq.~\ref{eq:aipw-influence-func}),
Property~\ref{property:ulln}(ii) holds under 
boundedness of the data and nuisance parameters \citep[Example~2 (Sec.~4.2)]{kennedy2022semiparametric}.
Condition~(iii) of Prop.~\ref{prop:consistency-gmm}
will hold if $\forall i \in [M], \psi^{(i)}$ are uniformly bounded
(see Prop.~\ref{prop:apdx-ulln-product-fn} in Appendix~\ref{sec:apdx-proof-consistency}).

\begin{assumption}[Nuisance estimation]\label{assum:nuisance-rates}
For all $i \in [M]$,
(a) (Neyman orthogonality) $\forall \eta, \,\, \partial_{r} \EB[\psi^{(i)}_t(\theta^*, \eta^* + r ( \eta - \eta^*))]|_{r=0} = 0$; and
(b) (Second-order remainder) For $R_t := \int_0^1 \partial_{r^2} \EB_{t-1}[ \psi^{(i)}_t(\theta^*, \eta^* + r(\hEta - \eta^*)) ] \, dr$, $R_t = \litOp(t^{-1/2})$ and $\exists \delta > 0, \sup_t \EB[R^{1 + \delta}_t] < \infty$.
\end{assumption}
Assumption~\ref{assum:nuisance-rates}a states that 
every moment condition
is robust to local perturbations of nuisance parameters up to the first
order and \ref{assum:nuisance-rates}b states that
the second-order
remainder converges at a faster-than-CLT rate.
Assumption~\ref{assum:nuisance-rates} 
ensures that the impact of 
the nuisance estimators $\widehat{\eta}$ 
on the estimate of $\theta$ is higher order
\citep{chernozhukov2018double, belloni2017program, neyman1959optimal}.
For $\psi_{\text{AIPW}}$ (Eq.~\ref{eq:aipw-influence-func}),
$R_t$ scales as the product of the errors of the two nuisance estimators:
$R_t = O(\| \widehat{\pi}_{t-1} - \pi^* \| \| \widehat{\mu}_{t-1} - \mu^* \|)$,
allowing each nuisance estimator to converge at a 
slower rate of $o_p( t^{-1/4} )$
\citep[Sec.~4.3]{kennedy2022semiparametric}.
The phenomenon of $R_t$ taking a product form,
also known as \emph{double robustness}, is more general
and holds for many influence functions \citep{chernozhukov2022locally, bhattacharya2022semiparametric, rotnitzky2021characterization}.
Convergence rates exist for many estimation problems \citep{gao2022towards, farrell2021deep, kohler2021rate, wainwright2019high, gyorfi2002distribution}.

\begin{definition}[Selection simplex]\label{defn:selection-simplex}
The selection simplex, denoted by $\kappa_T \in \ChoiceSimplex$, represents the
fraction of times each data source has been queried until time $T$:
\begin{align*}
    \kappa_T := \frac{1}{T} \sum_{t=1}^{T} s_t.
\end{align*}
\end{definition}

\begin{assumption}\label{assump:kappa-T-converges}
The policy $\pi$ is such that
$\kappa_T \ConvProb \kappa_{\infty}$,
for some random variable $\kappa_{\infty}$.
\end{assumption}

\begin{proposition}[Asymptotic normality]\label{prop:asymp-normality}
Suppose that (i) Conditions of Prop.~\ref{prop:consistency-gmm} hold;
(ii) Assumptions~\ref{assum:nuisance-rates} and \ref{assump:kappa-T-converges} hold; and
(iii) $\forall i \in [M], j \in [D], \partial_{\theta_j} 
\left[\psi^{(i)}_t(\theta, \eta) \right]$
satisfies Property~\ref{property:ulln}.
Then $\widehat{\beta}_T$ converges to a mixture of normals,
where the mixture is over $\kappa_{\infty}$:
\begin{align*}
    \sqrt{T} (\widehat{\beta}_T - \beta^*) &\ConvDist \NM_{\kappa_\infty}\left( 0, V_{*}(\kappa_\infty) \right),
\end{align*}
where $V_{*}(\kappa)$ is a constant
that depends on $(\theta^*, \eta^*, \kappa)$ 
(see Eq.~\ref{eq:apdx-var-star-expression} in Appendix~\ref{sec:apdx-proof-asymp-norm}).
If $\kappa_{\infty}$ is almost surely constant, then
$\widehat{\beta}_T$ is asymptotically normal.
\end{proposition}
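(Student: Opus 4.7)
The plan is to follow the standard two-step GMM asymptotic-normality argument, adapting it in two places: (a) the sampling is adaptive, with $s_t$ measurable with respect to $\HM_{t-1}$, so the usual i.i.d.\ CLT is replaced by a martingale CLT, and (b) the nuisances $\hEta$ are estimated sequentially, so a Neyman-orthogonality argument is needed to reduce their contribution to a higher-order error. Once a mixed-normal limit is established for $\sqrt{T}(\widehat{\theta}_T - \theta^*)$, the delta method (using continuous differentiability of $\fTar$ at $\theta^*$ from Assumption~\ref{assump:standard-gmm}c) transfers the result to $\widehat{\beta}_T = \fTar(\widehat{\theta}_T)$.

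First I would write the first-order condition of the GMM minimizer and Taylor-expand in $\theta$ around $\theta^*$, using the consistency from Prop.~\ref{prop:consistency-gmm}, to obtain the linearization
\[
\sqrt{T}(\widehat{\theta}_T - \theta^*) = -\bigl(\widehat{G}_T^\top \widehat{W}_T \bar{G}_T\bigr)^{-1} \widehat{G}_T^\top \widehat{W}_T \cdot \sqrt{T}\, \bar{g}_T(\theta^*, (\hEta)_t) + \litOp(1),
\]
where $\bar{g}_T$ is the averaged moment vector and $\widehat{G}_T, \bar{G}_T$ are Jacobians at $\widehat{\theta}_T$ and an intermediate point. Hypothesis~(iii) via Property~\ref{property:ulln} combined with Assumption~\ref{assump:kappa-T-converges} gives a ULLN forcing $\widehat{G}_T, \bar{G}_T \ConvProb G_*(\kappa_\infty)$, and applying a ULLN to $\psi^{(i)}\psi^{(j)}$ (reused from Prop.~\ref{prop:consistency-gmm}) gives $\widehat{W}_T \ConvProb \Omega_*(\kappa_\infty)^{-1}$; both limits are random through $\kappa_\infty$.

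The core of the argument is a mixed CLT for $\sqrt{T}\,\bar{g}_T(\theta^*, (\hEta)_t)$. I would decompose
\[
\sqrt{T}\,\bar{g}_T(\theta^*, (\hEta)_t) = \frac{1}{\sqrt{T}}\sum_{t} g_t(\theta^*, \eta^*) + \frac{1}{\sqrt{T}}\sum_{t} \EB_{t-1}[g_t(\theta^*, \hEta) - g_t(\theta^*, \eta^*)] + \frac{1}{\sqrt{T}}\sum_{t} \xi_t,
\]
where $\xi_t := g_t(\theta^*, \hEta) - g_t(\theta^*, \eta^*) - \EB_{t-1}[\cdot]$ is a martingale difference. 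A second-order Taylor expansion along the nuisance, together with Neyman orthogonality (Assumption~\ref{assum:nuisance-rates}a), collapses the conditional-bias term to the remainder $R_t = \litOp(t^{-1/2})$ of Assumption~\ref{assum:nuisance-rates}b, whose partial sum divided by $\sqrt{T}$ is $\litOp(1)$. The martingale-difference sum is $\litOp(1)$ because Property~\ref{property:ulln}(ii) and nuisance consistency yield $\EB_{t-1}[\xi_t^2] \leq L\|\hEta - \eta^*\|^2 \ConvProb 0$, so by orthogonality of martingale increments the $L^2$ norm of the average vanishes. Finally, the leading sum $T^{-1/2}\sum_t m(s_t) \odot \Tilde{g}_t(\theta^*,\eta^*)$ is itself a martingale-difference sum (since $s_t$ is $\HM_{t-1}$-measurable and $\Tilde{g}_t(\theta^*,\eta^*)$ has mean zero), whose time-averaged conditional covariance is an explicit $s_t$-dependent function of the fixed moment covariances $\EB[\psi^{(i)}\psi^{(j)}]$ and, by Assumption~\ref{assump:kappa-T-converges}, converges in probability to $\Omega_*(\kappa_\infty)$. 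The conditional Lindeberg condition follows from the $(2+\delta)$-moment bound in Property~\ref{property:ulln}(i). A stable martingale CLT (Hall and Heyde) then delivers $T^{-1/2}\sum_t m(s_t)\odot\Tilde{g}_t \ConvDist \NM_{\kappa_\infty}(0, \Omega_*(\kappa_\infty))$.

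Assembling the three pieces via Slutsky gives mixed normality of $\sqrt{T}(\widehat{\theta}_T - \theta^*)$ with sandwich variance $(G_*(\kappa_\infty)^\top \Omega_*(\kappa_\infty)^{-1} G_*(\kappa_\infty))^{-1}$ (which collapses nicely for the two-step weight), and the delta method then yields $V_*(\kappa) = \nabla\fTar(\theta^*)^\top (G_*(\kappa)^\top \Omega_*(\kappa)^{-1} G_*(\kappa))^{-1} \nabla\fTar(\theta^*)$. When $\kappa_\infty$ is almost surely constant, the mixture collapses to a genuine normal. The main obstacle is the CLT step: the stable martingale CLT has to accommodate a random limiting covariance driven by $\kappa_\infty$, and the nuisance remainder has to be killed in a time-averaged sense, i.e., $T^{-1/2}\sum_t R_t = \litOp(1)$, which requires combining the pointwise rate $R_t = \litOp(t^{-1/2})$ with the uniform moment bound in Assumption~\ref{assum:nuisance-rates}b via a maximal-inequality-style strengthening (so that $\sup_{t\leq T} t^{1/2}|R_t| = \litOp(1)$), without which the naive summation of pointwise $\litOp$ bounds does not close.
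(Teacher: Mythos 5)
Your proposal follows essentially the same route as the paper: first-order GMM linearization around $\theta^*$, ULLN-based convergence of the Jacobians and the two-step weight matrix to $G_*(\kappa_\infty)$ and $\Omega_*(\kappa_\infty)^{-1}$, a three-term decomposition of $T^{-1/2}\sum_t g_t(\theta^*,\hEta)$ into the oracle-nuisance score, a martingale empirical-process term, and a conditional-bias term killed by Neyman orthogonality, then a stable martingale CLT via Cram\'er--Wold with conditional variance converging to $\Omega_*(\kappa_\infty)$ through $\kappa_T \ConvProb \kappa_\infty$, Cram\'er--Slutsky, and the delta method. The structure, the sandwich form of $V_*(\kappa)$, and the collapse to a genuine normal when $\kappa_\infty$ is a.s.\ constant all match the paper's argument.

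The one place you deviate is the treatment of $T^{-1/2}\sum_{t\le T} R_t$. You claim this needs a maximal-inequality-style strengthening giving $\sup_{t\le T} t^{1/2}|R_t| = \litOp(1)$; that bound is neither supplied by Assumption~\ref{assum:nuisance-rates}b nor needed. The paper's mechanism is simpler: $R_t = \litOp(t^{-1/2})$ together with $\sup_t \EB[R_t^{1+\delta}]<\infty$ gives uniform integrability, so by convergence of moments $\EB[R_t] = o(t^{-1/2})$, hence $\sum_{t\le T}\EB[R_t] = o(\sqrt{T})$ and Markov's inequality yields $T^{-1/2}\sum_t R_t = \litOp(1)$ (Lemma~\ref{lemma:apdx-op-half-converge}). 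The same moment-conversion device (Property~\ref{property:ulln}(iv) plus convergence of moments) is also what makes your $L^2$ argument for the empirical-process term rigorous, since you need $T^{-1}\sum_t \EB[\|\hEta-\eta^*\|^2]\to 0$, not merely $\|\hEta-\eta^*\|\ConvProb 0$. With that substitution your argument closes exactly as the paper's does.
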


\begin{proposition}[Asymptotic inference]\label{prop:asymp-inference}
Suppose that the Conditions of Prop.~\ref{prop:asymp-normality} hold.
For
\begin{align}
    \widehat{G}_T(\widehat{\theta}_T) &:= \frac{1}{T} \sum_{t=1}^{T} \nabla_{\theta} g_t(\widehat{\theta}_T, \hEta), \nonumber \\
    \widehat{\Omega}_T(\widehat{\theta}_T) &:= \frac{1}{T} \sum_{t=1}^{T}  g_t(\widehat{\theta}_T, \hEta) g^{\top}_t(\widehat{\theta}_T, \hEta), \nonumber \\
    \widehat{\Sigma}_T &:= \left[ \widehat{G}^{\top}_T(\widehat{\theta}_T) \widehat{\Omega}^{-1}_T(\widehat{\theta}_T) \widehat{G}_T(\widehat{\theta}_T) \right]^{-1}, \nonumber \\
    \hVar_{T} &:= \nabla_\theta \fTar(\widehat{\theta}_T)^\top \widehat{\Sigma}_T \nabla_\theta \fTar(\widehat{\theta}_T), \label{eq:variance-estimator}
\end{align}
we have 
\begin{align*}
    \hVar^{-1/2}_{T} \sqrt{T} (\widehat{\beta}_T - \beta^*) \ConvDist \NM\left(0, 1 \right).
\end{align*}
\end{proposition}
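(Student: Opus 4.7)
The plan is to reduce the statement to showing that $\hVar_T \ConvProb V_{*}(\kappa_\infty)$ in a sense joint with the mixture-of-normals convergence of $\sqrt{T}(\widehat{\beta}_T - \beta^*)$, and then apply a Slutsky-type argument. The key observation is that the mixture representation in Prop.~\ref{prop:asymp-normality} can be written as $\sqrt{T}(\widehat{\beta}_T - \beta^*) = V_{*}(\kappa_\infty)^{1/2} Z + \litOp(1)$ where $Z \sim \NM(0,1)$ is independent of $\kappa_\infty$. Thus, once we show $\hVar_T^{-1/2} \ConvProb V_{*}(\kappa_\infty)^{-1/2}$ jointly, the product telescopes to a standard normal by the continuous mapping theorem, and the resulting $\NM(0,1)$ limit does not depend on $\kappa_\infty$.

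To establish $\hVar_T \ConvProb V_{*}(\kappa_\infty)$, I would work through the four quantities in the definition. First, by Assumption~\ref{assump:standard-gmm}(c), continuous differentiability of $\fTar$ at $\theta^*$, together with consistency $\widehat{\theta}_T \ConvProb \theta^*$ from Prop.~\ref{prop:consistency-gmm}, the continuous mapping theorem gives $\nabla_\theta \fTar(\widehat{\theta}_T) \ConvProb \nabla_\theta \fTar(\theta^*)$. Second, I would establish that $\widehat{G}_T(\widehat{\theta}_T) \ConvProb G_*(\kappa_\infty)$ and $\widehat{\Omega}_T(\widehat{\theta}_T) \ConvProb \Omega_*(\kappa_\infty)$, where $G_*(\kappa)$ and $\Omega_*(\kappa)$ are the population gradient and second-moment matrices evaluated under sampling proportions $\kappa$. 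The mean of $g_t$ at $(\theta^*,\eta^*)$ mixes the $M$ moment blocks according to $m(s_t)$, whose time-average equals $m(\kappa_T) \to m(\kappa_\infty)$ by Assumption~\ref{assump:kappa-T-converges}, so the natural population limits are weighted by $\kappa_\infty$. Third, invertibility of $G_*(\kappa_\infty)^\top \Omega_*(\kappa_\infty)^{-1} G_*(\kappa_\infty)$ follows from the identification part of Assumption~\ref{assump:standard-gmm}(a) on the asymptotically selected moments $\mathcal{I}$, and $\widehat{\Sigma}_T \ConvProb \Sigma_*(\kappa_\infty) = (G_*^\top \Omega_*^{-1} G_*)^{-1}$ follows by continuous mapping. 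Combining the three pieces, $\hVar_T \ConvProb V_{*}(\kappa_\infty)$.

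For the two ULLN-type convergences, I would invoke the same martingale uniform law of large numbers used in the proofs of Props.~\ref{prop:consistency-gmm} and \ref{prop:asymp-normality} — the required hypotheses (Property~\ref{property:ulln} for $\psi^{(i)}\psi^{(j)}$ and $\partial_{\theta_j}\psi^{(i)}$) are explicitly assumed. Neyman orthogonality plus the second-order remainder bound (Assumption~\ref{assum:nuisance-rates}) ensures the nuisance-estimation error enters $\widehat{G}_T$ and $\widehat{\Omega}_T$ only through vanishing higher-order terms. After uniform convergence over $\theta$ in a neighborhood of $\theta^*$ is established, plugging in the consistent estimator $\widehat{\theta}_T$ transfers the convergence to the estimators evaluated at $\widehat{\theta}_T$.

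The main obstacle will be handling the adaptive/non-i.i.d.\ structure of the data in the ULLN step while simultaneously keeping track of joint convergence with $\kappa_T$. Because the selection vector $s_t$ depends on $H_{t-1}$, the sequence $\{g_t g_t^\top\}$ is not i.i.d., and the population limit itself is a random variable $\Omega_*(\kappa_\infty)$. I would address this by conditioning on $\{H_{t-1}\}$ to extract a martingale-difference decomposition, applying Chebyshev plus a moment bound (from Property~\ref{property:ulln}(i) with $2+\delta$ moments) to the martingale part, and matching the conditional-mean part to $m(\kappa_T) \to m(\kappa_\infty)$ termwise. Once joint convergence $(\widehat{\Sigma}_T, \kappa_T) \ConvProb (\Sigma_*(\kappa_\infty), \kappa_\infty)$ and the mixture-of-normals conclusion of Prop.~\ref{prop:asymp-normality} are combined via the Cramér–Slutsky device, the standard-normal limit for the studentized statistic drops out.
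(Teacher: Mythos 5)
Your proposal is correct and follows essentially the same route as the paper: the paper's proof is a one-liner because $\hVar_T \ConvProb V_{*}(\kappa_\infty)$ was already established inside the proof of Prop.~\ref{prop:asymp-normality}, after which it applies the stable-convergence Cram\'er--Slutsky corollary---exactly the joint-convergence device you invoke---so your re-derivation of the consistency of $\widehat{G}_T$, $\widehat{\Omega}_T$, and $\nabla_\theta \fTar(\widehat{\theta}_T)$ is sound but duplicative. One caution: your claim that $\sqrt{T}(\widehat{\beta}_T-\beta^*)$ can be \emph{written as} $V_{*}(\kappa_\infty)^{1/2}Z + \litOp(1)$ with $Z$ independent of $\kappa_\infty$ is not what Prop.~\ref{prop:asymp-normality} gives (it gives stable convergence in distribution, not such a coupling); the argument works only because the Slutsky step is executed through stable/joint convergence, as you do in your final paragraph, not through that representation.
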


A \emph{fixed policy}, denoted by $\pi_{\kappa}$,
queries each data source
a fixed pre-specified fraction of times
with $\kappa_T = \kappa$ for some constant $\kappa \in \ChoiceSimplex$. 
The \emph{oracle policy}, denoted by $\pi^{*}$, is 
the (unknown) fixed policy
with the lowest asymptotic variance.
For $\pi^{*}$, we have
$\kappa_T = \kappa^*$, where $\kappa^* = \arg\min_{\kappa} V_{*}(\kappa)$.
We call $\kappa^*$ the \emph{oracle simplex}.
The following assumption,
which we make throughout this work,
states that no two combinations of the
data sources minimize the asymptotic variance,
ensuring the uniqueness of $\kappa^*$.

\begin{assumption}\label{assump:kappa-star-identify}
$\kappa^*$ uniquely minimizes $V_{*}(\kappa)$: $\forall \kappa$ such that $\kappa \neq \kappa^*, \,\, V_{*}(\kappa) > V_{*}(\kappa^*)$.
\end{assumption}

\begin{remark}\label{remark:data-collection-order}
Since the asymptotic distribution of $\widehat{\beta}_T$ 
only depends on the limit of $\kappa_T$ (Prop.~\ref{prop:asymp-normality}),
the order in which the data sources are queried
does not matter for our results.
We can also use randomized fixed policies with $s_t \overset{\text{iid}}{\sim} \text{Multinoulli}(\kappa)$.
\end{remark}

\begin{definition}[Asymptotic regret]\label{defn:regret}
We define the asymptotic regret of a policy $\pi$ as
\begin{align}\label{eq:asymptotic-regret}
    R_\infty(\pi) = \Amse\left( \sqrt{T} \left(\widehat{\beta}^{(\pi)}_T - \beta^* \right) \right) - V_{*}(\kappa^*),
\end{align}
where $\Amse$ is the asymptotic MSE, i.e., the MSE of the limiting distribution.
\end{definition}
The regret measures how close the
(asymptotic) MSE of $\pi$ is to the MSE of the oracle policy.
Under Assumption~\ref{assump:kappa-T-converges}, the
selection simplex $\kappa_T$ converges, and therefore,
$\widehat{\beta}_T$ will have a limiting distribution (Prop.~\ref{prop:asymp-normality}),
making the regret well-defined.
Comparing estimators based on their limiting distributions
is common in asymptotic statistics,
where the focus is usually 
on regular and asymptotically linear estimators \citep{van2000asymptotic}.

The next proposition shows that the asymptotic regret of 
any data collection policy satisfying the assumptions of Prop.~\ref{prop:asymp-normality}
is non-negative.
The GMM estimator is efficient 
in the statistical model implied by the moment conditions.
Thus, if the chosen moment conditions are semiparametrically efficient, 
the oracle policy will achieve the semiparametric efficiency bound
\citep{ackerberg2014asymptotic, newey1990semiparametric, chamberlain1987asymptotic}.

\begin{proposition}\label{prop:regret-is-non-negative}
Suppose that (i) $\widehat{\theta}_T$ is estimated using Eq.~\ref{eq:gmm-estimator-argmin}, 
(ii) Assumption~\ref{assump:kappa-star-identify} holds,
and
(iii) the conditions of Prop.~\ref{prop:asymp-normality} hold.
Then, for any data collection policy $\pi$, $R_{\infty}(\pi) \geq 0$.
\end{proposition}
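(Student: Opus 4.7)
The plan is to unpack the asymptotic MSE appearing in the definition of $R_\infty(\pi)$ via Prop.~\ref{prop:asymp-normality} and then exploit the fact that $\kappa^*$ is by definition the pointwise minimizer of $V_*(\cdot)$. The proof should be short: essentially a tower-property calculation followed by monotonicity of expectation.

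First, I would invoke Prop.~\ref{prop:asymp-normality}. Since $\widehat{\theta}_T$ is obtained from the GMM program in Eq.~\ref{eq:gmm-estimator-argmin}, Assumption~\ref{assump:kappa-T-converges} is in force (it is part of the hypotheses of Prop.~\ref{prop:asymp-normality}), so $\kappa_T \ConvProb \kappa_\infty$ and
\[
\sqrt{T}(\widehat{\beta}_T^{(\pi)} - \beta^*) \ConvDist \NM_{\kappa_\infty}\bigl(0, V_*(\kappa_\infty)\bigr),
\]
a mean-zero mixture of normals with mixing done over the (possibly random) limit $\kappa_\infty \in \ChoiceSimplex$.

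Second, I would compute the MSE of this limiting distribution. Writing the limit as $N$ with $N \mid \kappa_\infty \sim \NM(0, V_*(\kappa_\infty))$, the law of total variance gives
\[
\Amse\bigl(\sqrt{T}(\widehat{\beta}_T^{(\pi)} - \beta^*)\bigr) \;=\; \EB[N^2] \;=\; \EB\bigl[\mathrm{Var}(N \mid \kappa_\infty)\bigr] + \mathrm{Var}\bigl(\EB[N\mid \kappa_\infty]\bigr) \;=\; \EB\bigl[V_*(\kappa_\infty)\bigr],
\]
since $\EB[N \mid \kappa_\infty] = 0$.

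Third, I would apply the definition of $\kappa^*$ in Assumption~\ref{assump:kappa-star-identify}: since $\kappa^*$ minimizes $V_*(\cdot)$ over $\ChoiceSimplex$ and $\kappa_\infty \in \ChoiceSimplex$ almost surely, we have $V_*(\kappa_\infty) \geq V_*(\kappa^*)$ almost surely. Taking expectations preserves this inequality, so $\EB[V_*(\kappa_\infty)] \geq V_*(\kappa^*)$. Substituting into Definition~\ref{defn:regret} yields $R_\infty(\pi) \geq 0$, as desired.

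There is no real obstacle here; the only subtle step is the MSE computation for the mixture of normals, which amounts to verifying that the characteristic function in the paper's definition of $\NM_\sigma$ yields a mean-zero distribution with second moment $\EB[\sigma^2]$. This follows immediately from differentiating the characteristic function twice at zero (or from the conditional representation above). Everything else is bookkeeping.
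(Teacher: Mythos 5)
Your proposal is correct and follows essentially the same route as the paper's proof: invoke Prop.~\ref{prop:asymp-normality} to get the mixture-of-normals limit, identify its asymptotic MSE as $\EB_{\kappa_\infty}[V_*(\kappa_\infty)]$, and bound this below by $V_*(\kappa^*)$ using Assumption~\ref{assump:kappa-star-identify}. The only difference is that you spell out the law-of-total-variance step for the mixture's second moment, which the paper takes for granted.
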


For any fixed policy $\pi_\kappa$ with 
$\kappa_T = \kappa$ for some constant $\kappa \neq \kappa^*$
suffers constant regret because
by Assumption~\ref{assump:kappa-star-identify},
we have $R_\infty(\pi_\kappa) > 0$.
The following lemma shows that policies with 
$\kappa_T \ConvProb \kappa^*$ achieve zero regret,
motivating online data collection.

\begin{lemma}\label{lemma:zero-regret-policy}
Suppose that the conditions of Prop.~\ref{prop:asymp-normality} hold.
Any data collection policy $\pi$ such that $\kappa_T \ConvProb \kappa^*$
has zero regret: $R_\infty(\pi) = 0$.
\end{lemma}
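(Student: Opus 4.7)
The plan is to reduce the claim to a direct application of Proposition~\ref{prop:asymp-normality} together with the definition of asymptotic regret. The statement essentially asks: if the empirical selection frequencies converge in probability to the oracle simplex $\kappa^*$, does the estimator inherit the oracle's asymptotic variance? The answer should be yes, because Prop.~\ref{prop:asymp-normality} already characterizes the limiting distribution entirely through the limit $\kappa_\infty$ of $\kappa_T$.

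First, I would observe that $\kappa^*$ is a deterministic constant (by Assumption~\ref{assump:kappa-star-identify}, which defines $\kappa^*$ as the unique minimizer of $V_*(\cdot)$). Hence $\kappa_T \ConvProb \kappa^*$ satisfies Assumption~\ref{assump:kappa-T-converges} with the degenerate choice $\kappa_\infty = \kappa^*$ almost surely. All other hypotheses of Prop.~\ref{prop:asymp-normality} are assumed in the lemma, so I may invoke it directly. Because $\kappa_\infty$ is almost surely constant, the mixture of normals collapses to an ordinary normal, yielding
\begin{equation*}
    \sqrt{T}(\widehat{\beta}_T - \beta^*) \ConvDist \NM\bigl(0, V_*(\kappa^*)\bigr).
\end{equation*}

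Second, I would compute the asymptotic MSE of this limiting distribution. Since $\Amse$ in Definition~\ref{defn:regret} is the MSE of the limiting law (and not the limit of the finite-$T$ MSEs), the calculation is immediate: the second moment of $\NM(0, V_*(\kappa^*))$ is $V_*(\kappa^*)$. Plugging into Eq.~\ref{eq:asymptotic-regret} gives
\begin{equation*}
    R_\infty(\pi) \;=\; \Amse\bigl(\sqrt{T}(\widehat{\beta}^{(\pi)}_T - \beta^*)\bigr) - V_*(\kappa^*) \;=\; V_*(\kappa^*) - V_*(\kappa^*) \;=\; 0.
\end{equation*}

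There is no real obstacle here: the only subtlety is conceptual, namely that $\Amse$ is defined through the limiting distribution rather than as $\lim_{T} \EB[T(\widehat{\beta}_T - \beta^*)^2]$, so I do not need to verify uniform integrability or any moment convergence. The lemma therefore follows in two short steps from Prop.~\ref{prop:asymp-normality} and the definition of asymptotic regret. This framing also clarifies why subsequent sections need to prove $\kappa_T \ConvProb \kappa^*$ for OMS-ETC and OMS-ETG: that single convergence condition is the only remaining ingredient to obtain zero regret.
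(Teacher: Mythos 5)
Your proposal is correct and follows essentially the same route as the paper: invoke Prop.~\ref{prop:asymp-normality} with the degenerate limit $\kappa_\infty = \kappa^*$ to get $\sqrt{T}(\widehat{\beta}_T - \beta^*) \ConvDist \NM(0, V_*(\kappa^*))$, then read off zero regret from Definition~\ref{defn:regret}. Your added remarks (that $\kappa^*$ is a constant so Assumption~\ref{assump:kappa-T-converges} holds trivially, and that no uniform-integrability argument is needed since $\Amse$ is defined on the limiting law) simply make explicit what the paper leaves implicit.
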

\begin{proof}
By Prop.~\ref{prop:asymp-normality},
we have
$\sqrt{T} \left( \beta_T - \beta^* \right) \ConvDist \NM(0, V_{*}(\kappa^*))$ and therefore, $R_\infty(\pi) = 0$.
\end{proof}

\section{Online Data Collection}
\label{sec:adaptive-data-collection}
In this section, we propose two online data collection policies:
Explore-then-Commit (ETC) and Explore-then-Greedy (ETG),
that have zero regret.
In both policies, we allocate the data collection
budget based on an estimate of the oracle simplex.

\subsection{OMS via Explore-then-Commit (OMS-ETC)}
\label{sec:oms-etc}
\begin{figure}[t]
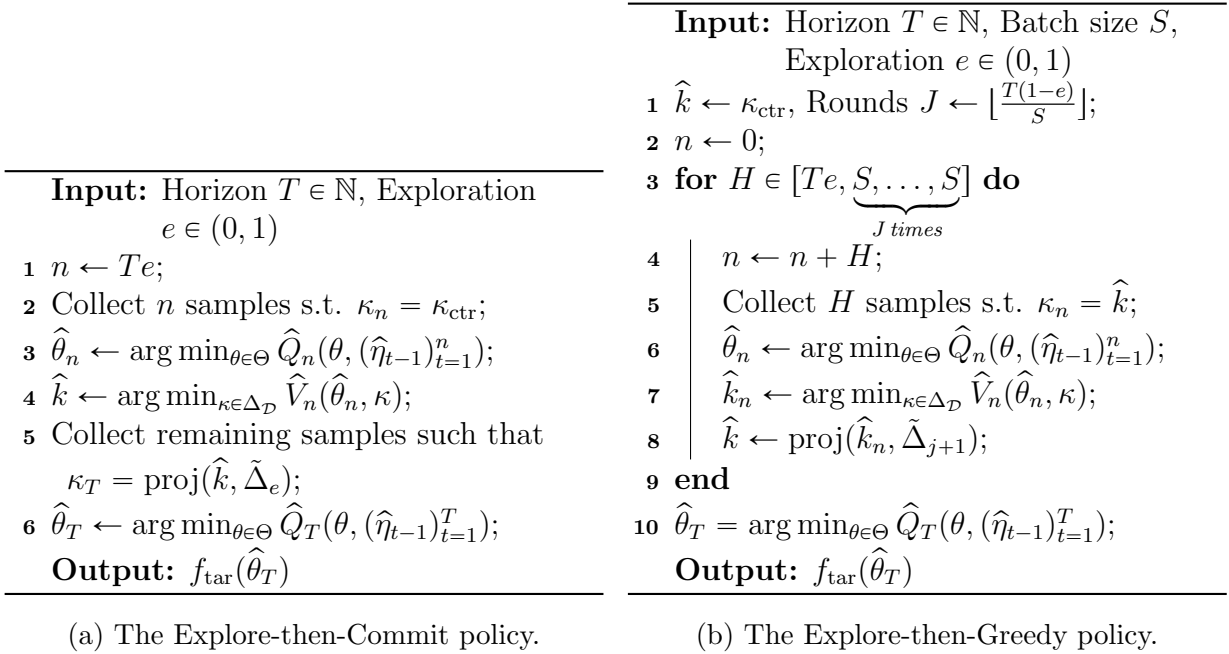

\centering
\begin{subfigure}[b]{0.49\textwidth}
{
    \setlength{\interspacetitleruled}{0pt}%
    \setlength{\algotitleheightrule}{0pt}%
    \begin{algorithm}[H]
    \SetAlgoLined
    \KwInput{Horizon $T \in \mathbb{N}$, Exploration $e \in (0, 1)$}
    $n \gets Te$\;
    Collect $n$ samples s.t. $\kappa_{n} = \ctrSim$\;
    $\widehat{\theta}_{n} \gets \arg\min_{\theta \in \Theta} \widehat{Q}_n(\theta, (\hEta)_{t=1}^{n})$\;
    $\widehat{k} \gets \arg\min_{\kappa \in \ChoiceSimplex} \hVar_n(\widehat{\theta}_{n}, \kappa)$\;
    Collect remaining samples such that $\kappa_T = \text{proj}(\widehat{k}, \Tilde{\Delta}_e)$\;
    $\widehat{\theta}_T \gets \arg\min_{\theta \in \Theta} \widehat{Q}_T(\theta, (\hEta)_{t=1}^{T})$\;
    \KwOutput{$f_{\text{tar}}(\widehat{\theta}_T)$}
    \end{algorithm}
}
\caption{The Explore-then-Commit policy.}
\label{fig:policy-algorithm-etc}
\end{subfigure}
\hfill
\begin{subfigure}[b]{0.49\textwidth}
{
    \setlength{\interspacetitleruled}{0pt}%
    \setlength{\algotitleheightrule}{0pt}%
    \begin{algorithm}[H]
    \SetAlgoLined
    \KwInput{Horizon $T \in \mathbb{N}$, Batch size $S$, Exploration $e \in (0, 1)$}
    $\widehat{k} \gets \ctrSim$, Rounds $J \gets 
\floor{\frac{T(1-e)}{S}}$\;
    $n \gets 0$\;
    \For{$H \in [Te, \underbrace{S, \hdots, S}_{J \, \text{times}}]$}{
        $n \gets n + H$\;
        Collect $H$ samples s.t. $\kappa_{n} = \widehat{k}$\;
        $\widehat{\theta}_{n} \gets \arg\min_{\theta \in \Theta} \widehat{Q}_{n}(\theta, (\hEta)_{t=1}^{n})$\;
        $\widehat{k}_{n} \gets \arg\min_{\kappa \in \ChoiceSimplex} \hVar_n(\widehat{\theta}_{n}, \kappa)$\;
        $\widehat{k} \gets \text{proj}(\widehat{k}_{n}, \Tilde{\Delta}_{j+1})$\;
    }
    $\widehat{\theta}_T = \arg\min_{\theta \in \Theta} \widehat{Q}_T(\theta, (\hEta)_{t=1}^{T})$\;
    \KwOutput{$f_{\text{tar}}(\widehat{\theta}_T)$}
    \end{algorithm}
}
\caption{The Explore-then-Greedy policy.}
\label{fig:policy-algorithm-etg}
\end{subfigure}
\caption{Algorithms for OMS-ETC and OMS-ETG.}
\end{figure}

The Explore-then-Commit (ETC) policy is similar in spirit
to the ETC strategy in multi-armed bandits \citep[Ch.~6]{lattimore2020bandit}.
We denote this policy by $\pi_{\text{ETC}}$ 
(Fig.~\ref{fig:policy-algorithm-etc}).
The ETC policy is characterized by a horizon $T$ and an exploration
fraction $e \in (0, 1)$.
In the \emph{explore} phase, we collect $Te$ samples
by querying the data sources uniformly,
i.e., $\kappa_{Te} = \kappa_{\text{ctr}}$.
Using the exploration samples, we estimate the model parameters
$\widehat{\theta}$,
and use them to estimate the asymptotic variance of some fixed
policy $\pi_{\kappa}$ as a function of $\kappa$.
Let the estimated variance be $\hVar_{Te}(\widehat{\theta}, \kappa)$ 
(Definition~\ref{defn:offpolicy-var-estimate}).
Next, we estimate the oracle simplex $\kappa^*$ as follows:
$\widehat{k}_{Te} = \arg\min_{\kappa} \hVar_{Te}(\widehat{\theta}, \kappa)$.
In the \emph{commit} phase, we collect the remaining $T(1-e)$ samples
such that the final selection simplex $\kappa_T$ 
is as close to $\widehat{k}_{Te}$ as possible:
$\kappa_T = \text{proj}( \widehat{k}_{Te}, \Tilde{{\Delta}}_e)$,
where $\Tilde{{\Delta}}_e := \left\{ e \kappa_{\text{ctr}} + (1 - e) \kappa : \kappa \in \ChoiceSimplex  \right\}$ 
is the set of values $\kappa_T$ can achieve with
the remaining budget.
As stated in Remark~\ref{remark:data-collection-order},
the order in which the data sources are queried in the two phases
does not affect our results.

\begin{definition}[Variance estimator]\label{defn:offpolicy-var-estimate}
For any $\kappa \in \ChoiceSimplex$, we define
\begin{align}
    & \widehat{G}_T(\theta, \kappa) := w_G(\kappa) \odot \widehat{G}_T(\theta), \nonumber \\
    & \widehat{\Omega}_T(\theta, \kappa) := w_{\Omega}(\kappa) \odot \widehat{\Omega}_T(\theta), \nonumber \\
    & \widehat{\Sigma}_T(\theta, \kappa) := \left[ \widehat{G}^{\top}_T(\theta, \kappa) \widehat{\Omega}^{-1}_T(\theta, \kappa) \widehat{G}_T(\theta, \kappa) \right]^{-1}, \nonumber \\
    & \hVar_{T}(\theta, \kappa) := \nabla_\theta \fTar(\theta)^\top \widehat{\Sigma}_T(\theta, \kappa) \nabla_\theta \fTar(\theta) \label{eq:off-policy-variance-estimator},
\end{align}
where $\widehat{G}_T(\theta)$ and $\widehat{\Omega}_T(\theta)$
are defined in Prop.~\ref{prop:asymp-inference}.
The matrices $w_G(\kappa)$ and $w_{\Omega}(\kappa)$
(see Eq.~\ref{eq:apdx-var-estimator-defn} 
in Appendix~\ref{sec:apdx-oms-etc} for the expression)
appropriately reweight the variance estimator 
$\hVar_T$ in Eq.~\ref{eq:variance-estimator}
such that $\hVar_T(\widehat{\theta}_T, \kappa)$ estimates
the variance that results from
the fixed policy $\pi_{\kappa}$.
\end{definition}
We can use $\hVar_T(\widehat{\theta}_T, \kappa)$ to consistently estimate $\kappa^*$ (see Appendix~\ref{sec:apdx-oms-etc} for the proof):
\begin{lemma}\label{lemma:oracle-simplex-estimation}
Let $\widehat{k}_T := \arg\min_{\kappa \in \ChoiceSimplex} \hVar_{T}(\widehat{\theta}_T, \kappa)$
be the estimated oracle simplex.
Suppose that 
(i) the conditions of Prop.~\ref{prop:consistency-gmm} hold,
(ii) Assumption~\ref{assump:kappa-star-identify} holds,
and 
(iii) $\forall i \in [M], j \in [D], \partial_{\theta_j} 
\left[\psi^{(i)}_t(\theta, \eta) \right]$
satisfies Property~\ref{property:ulln}.
Then $\widehat{k}_T \ConvProb \kappa^*$.
\end{lemma}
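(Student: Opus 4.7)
The approach is to apply a standard argmin-consistency argument \citep[Theorem 5.7]{van2000asymptotic}: if $M_T(\kappa) \ConvProb M(\kappa)$ uniformly on a compact set and $M$ has a unique minimizer $\kappa^*$, then $\arg\min M_T \ConvProb \kappa^*$. In our case $M_T(\kappa) = \hVar_T(\widehat{\theta}_T, \kappa)$ and $M(\kappa) = V_*(\kappa)$; uniqueness is exactly Assumption~\ref{assump:kappa-star-identify}, and the simplex $\ChoiceSimplex$ is already compact.

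The workhorse is uniform convergence of $\hVar_T(\widehat{\theta}_T, \kappa)$ to $V_*(\kappa)$. First, condition~(i) of the lemma, together with Prop.~\ref{prop:consistency-gmm}, gives $\widehat{\theta}_T \ConvProb \theta^*$. Second, condition~(iii) of the lemma invoked through a uniform law of large numbers over the compact parameter set $\Theta$ yields $\sup_{\theta \in \Theta} \| \widehat{G}_T(\theta) - G_T(\theta) \| \ConvProb 0$, where $G_T$ is the (policy-dependent) population analogue; the corresponding statement for $\widehat{\Omega}_T$ follows from Property~\ref{property:ulln} applied to $\psi^{(i)} \psi^{(j)}$, which is already in force via condition~(iii) of Prop.~\ref{prop:consistency-gmm}. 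Combined with consistency of $\widehat{\theta}_T$ and continuous mapping, this gives $\widehat{G}_T(\widehat{\theta}_T) \ConvProb G_*$ and $\widehat{\Omega}_T(\widehat{\theta}_T) \ConvProb \Omega_*$. Since the reweightings $w_G(\kappa), w_\Omega(\kappa)$ are designed precisely so that $\hVar_T(\widehat{\theta}_T, \kappa)$ targets the variance under $\pi_\kappa$, continuous mapping yields pointwise convergence $\hVar_T(\widehat{\theta}_T, \kappa) \ConvProb V_*(\kappa)$. Uniformity in $\kappa$ then follows from an $\epsilon$-net argument together with continuity of $w_G, w_\Omega$ in $\kappa$.

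The main obstacle is the behavior of $\hVar_T$ at the boundary of $\ChoiceSimplex$: the reweighting $w_\Omega(\kappa)$ typically contains factors of the form $1/\kappa_d$, so both $V_*(\kappa)$ and $\hVar_T(\widehat{\theta}_T, \kappa)$ blow up (and the $\epsilon$-net argument above fails to be uniform) on faces where some $\kappa_d = 0$. I would handle this by restricting attention to a compact subset $K \subset \mathrm{int}(\ChoiceSimplex)$ with $\kappa^* \in \mathrm{int}(K)$ chosen so that $\inf_{\kappa \in \ChoiceSimplex \setminus K} V_*(\kappa) > V_*(\kappa^*) + \delta$ for some $\delta > 0$; such a $K$ exists by Assumption~\ref{assump:kappa-star-identify} together with continuity of $V_*$ on the interior and divergence of $V_*$ approaching the boundary. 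Uniform convergence on $K$ holds by the previous paragraph, while the divergence of $\hVar_T(\widehat{\theta}_T, \kappa)$ near the boundary forces $\widehat{k}_T \in K$ with probability tending to one. On the compact set $K$ the standard argmin theorem then delivers $\widehat{k}_T \ConvProb \kappa^*$.
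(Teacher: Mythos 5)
Your overall route is the same as the paper's: establish uniform-in-$\kappa$ convergence of $\hVar_T(\widehat{\theta}_T,\kappa)$ to $V_{*}(\kappa)$ (via the ULLN for $\widehat{G}_T$ and $\widehat{\Omega}_T$, consistency of $\widehat{\theta}_T$, continuous mapping, and an $\epsilon$-net over the compact simplex), and then invoke a standard argmin-consistency argument with Assumption~\ref{assump:kappa-star-identify} supplying the unique minimizer. That is exactly the structure of the paper's proof, which runs the covering-number argument directly over all of $\ChoiceSimplex$ and then follows the Newey--McFadden consistency argument with an arbitrary open neighborhood $\BM$ of $\kappa^*$ and compactness of $\ChoiceSimplex \cap \BM^{c}$.

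The gap is in your third paragraph. First, its premise misidentifies the estimator: in Definition~\ref{defn:offpolicy-var-estimate} (Eq.~\ref{eq:apdx-var-estimator-defn}) the candidate $\kappa$ enters only multiplicatively through $m_G(\kappa)$ and $m_\Omega(\kappa)$, whose entries are linear in $\kappa$ and lie in $[0,1]$; the reciprocal factors $1/m_G(\kappa_T)$ and $1/m_\Omega(\kappa_T)$ are evaluated at the \emph{realized} selection simplex $\kappa_T$, not at the optimization variable $\kappa$. Hence there is no $1/\kappa_d$ blow-up in $\kappa$, the weights are bounded and Lipschitz, and the $\epsilon$-net argument goes through on the entire simplex --- no interior restriction is needed. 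Second, and more seriously, your proposed fix cannot work in general: you require a compact $K \subset \mathrm{int}(\ChoiceSimplex)$ with $\kappa^* \in \mathrm{int}(K)$, which presupposes that the oracle simplex is an interior point. Assumption~\ref{assump:kappa-star-identify} does not guarantee this, and the paper explicitly allows (and in its experiments exhibits, e.g.\ $\kappa^* = [0,1]^\top$ in the confounder--mediator example) oracle simplices on the boundary, where one data source is never queried in the limit. In that case no such $K$ exists, and your claim that divergence of $\hVar_T$ near the boundary forces $\widehat{k}_T \in K$ with probability tending to one would directly contradict the conclusion $\widehat{k}_T \ConvProb \kappa^*$. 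Dropping the interior restriction and running your own first two paragraphs over all of $\ChoiceSimplex$, as the paper does, repairs the argument.
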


\begin{assumption}[Exploration]\label{assumption:exploration}
The exploration $e$ depends on $T$ such that
(i) (Asymptotically negligible) $e = o(1)$; and
(ii) $Te \rightarrow \infty$ as $T \to \infty$ (e.g., $e = 1/\sqrt{T}$).
\end{assumption}

\begin{theorem}[ETC Regret]\label{thm:etc-regret}
Suppose that the conditions of Prop.~\ref{prop:asymp-normality} and
Lemma~\ref{lemma:oracle-simplex-estimation} hold,
and Assumption~\ref{assumption:exploration} holds.
Then, $R_\infty(\pi_{\text{ETC}}) = 0$.
\end{theorem}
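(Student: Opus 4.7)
The plan is to invoke Lemma~\ref{lemma:zero-regret-policy}, which reduces the task to establishing that the selection simplex produced by $\pi_{\text{ETC}}$ converges in probability to the oracle simplex: $\kappa_T \ConvProb \kappa^*$. The argument then has two pieces: (i) show that the explore-phase estimator $\widehat{k}_n$ with $n := Te$ is consistent for $\kappa^*$, and (ii) show that the projection onto the reachable set $\widetilde{\Delta}_e$ perturbs $\widehat{k}_n$ by an asymptotically vanishing amount.

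For (i), during the explore phase we sample uniformly, so $\kappa_n = \kappa_{\text{ctr}}$, and by Assumption~\ref{assumption:exploration}(ii) we have $n \to \infty$. Since every coordinate of $\kappa_{\text{ctr}}$ is strictly positive, each moment is selected an asymptotically non-negligible fraction of times, so the relevant index set from Assumption~\ref{assump:standard-gmm}(a) equals $[M]$. The conditions of Prop.~\ref{prop:consistency-gmm}---inherited from the theorem's hypothesis---therefore yield $\widehat{\theta}_n \ConvProb \theta^*$, and Lemma~\ref{lemma:oracle-simplex-estimation} applied at horizon $n$ gives $\widehat{k}_n \ConvProb \kappa^*$.

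For (ii), observe that $\widetilde{\Delta}_e = \{ e \kappa_{\text{ctr}} + (1-e) \kappa : \kappa \in \ChoiceSimplex \}$, so for any $k \in \ChoiceSimplex$ the point $e \kappa_{\text{ctr}} + (1-e) k$ is a valid element of $\widetilde{\Delta}_e$. The projection therefore satisfies
\begin{align*}
\| \text{proj}(k, \widetilde{\Delta}_e) - k \| \leq \| k - \left( e \kappa_{\text{ctr}} + (1-e) k \right) \| = e \, \| k - \kappa_{\text{ctr}} \| \leq e \, C,
\end{align*}
where $C := \sup_{\kappa \in \ChoiceSimplex} \| \kappa - \kappa_{\text{ctr}} \| < \infty$. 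Applying this with $k = \widehat{k}_n$ and the triangle inequality gives
\begin{align*}
\| \kappa_T - \kappa^* \| \leq \| \text{proj}(\widehat{k}_n, \widetilde{\Delta}_e) - \widehat{k}_n \| + \| \widehat{k}_n - \kappa^* \| \leq e C + \litOp(1) = \litOp(1),
\end{align*}
where the final step uses Assumption~\ref{assumption:exploration}(i). Hence $\kappa_T \ConvProb \kappa^*$, which incidentally verifies Assumption~\ref{assump:kappa-T-converges} for $\pi_{\text{ETC}}$ with $\kappa_\infty = \kappa^*$ almost surely constant, and Lemma~\ref{lemma:zero-regret-policy} then delivers $R_\infty(\pi_{\text{ETC}}) = 0$.

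The main subtlety I anticipate is making sure that Lemma~\ref{lemma:oracle-simplex-estimation} can legitimately be applied to the explore phase as a standalone GMM problem with horizon $n$: the identification clause in Assumption~\ref{assump:standard-gmm}(a) needs to hold for the index set attained under uniform sampling (which is the full $[M]$), rather than only for the asymptotic $\mathcal{I}$ induced by $\kappa^*$. When $\kappa^*$ is interior, the two coincide; otherwise one reads the theorem's identification hypothesis as the global one, which is the standard reading in GMM. Beyond this point, the argument is essentially bookkeeping---combining the already-proven consistency of $\widehat{\theta}_n$, the off-policy variance-based identification of $\kappa^*$ furnished by Lemma~\ref{lemma:oracle-simplex-estimation}, and the simple projection bound above---so no new technical machinery is required.
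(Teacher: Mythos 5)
Your proposal is correct and follows essentially the same route as the paper: reduce to showing $\kappa_T \ConvProb \kappa^*$ via Lemma~\ref{lemma:zero-regret-policy}, get $\widehat{k}_{Te} \ConvProb \kappa^*$ from Lemma~\ref{lemma:oracle-simplex-estimation} using $Te \to \infty$, and kill the projection error using $e = o(1)$. Your explicit bound $\| \text{proj}(k, \Tilde{\Delta}_e) - k \| \leq e\,\| k - \kappa_{\text{ctr}} \|$ just makes precise the step the paper states as $\sup_{\kappa} \| \text{proj}(\kappa, \Tilde{\Delta}_e) - \kappa \| \to 0$, so no substantive difference.
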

\begin{proof}
We show that 
$\kappa_T \ConvProb \kappa^*$ and then apply 
Lemma~\ref{lemma:zero-regret-policy}.
Since $Te \to \infty$,
by Lemma~\ref{lemma:oracle-simplex-estimation}, $\widehat{k}_{Te} \ConvProb \kappa^*$.
Since $e \in o(1)$, the set $\Tilde{\Delta}_e$
asymptotically covers the entire simplex $\ChoiceSimplex$.
Thus,
\begin{align*}
    \sup_{\kappa \in \ChoiceSimplex} \| \text{proj}(\kappa, \Tilde{\Delta}_e ) - \kappa \| \ConvProb 0.
\end{align*}
Therefore, for $\kappa_T = \text{proj}(\widehat{k}_{Te}, \Tilde{\Delta}_e)$, 
we have $\| \kappa_T - \widehat{k}_{Te} \| \ConvProb 0$.
Since $\widehat{k}_{Te} \ConvProb \kappa^*$, we also
have $\kappa_T \ConvProb \kappa^*$.
Applying Lemma~\ref{lemma:zero-regret-policy} completes the proof.
\end{proof}

\subsection{OMS via Explore-then-Greedy (OMS-ETG)}
\label{sec:oms-etg}
The Explore-then-Greedy (ETG) policy, 
denoted by $\pi_{\text{ETG}}$,
extends the ETC policy by
repeatedly updating the estimate of the oracle simplex instead
of committing to a fixed value after exploration (Fig.~\ref{fig:policy-algorithm-etg}).
The ETG policy is characterized by a horizon $T$, 
an exploration fraction $e \in (0, 1)$,
and a batch size $S$. 
Like ETC, we first \emph{explore} and collect $Te$ samples
by querying the data sources uniformly: $\kappa_{Te} = \kappa_{\text{ctr}}$.
We collect the remaining $T(1 - e)$ samples in 
batches of
$S$ samples
at a time, i.e., there are $J := (T(1-e)) / S$ rounds
after exploration.
After each round $j \in \{0, \hdots, J\}$ 
($j = 0$ denotes exploration), 
we update our estimate of the oracle
simplex: for $t_j := (Te + jS)$, we compute 
$\widehat{k}_{t_j} = \arg\min_{\kappa} \hVar_{t_j}(\widehat{\theta}_{t_j}, \kappa)$.
The set of values that $\kappa_{t_{j+1}}$ can achieve
is $\Tilde{\Delta}_{j+1} = \left\{ \frac{t_j \kappa_{t_j} + S \kappa}{t_{j+1}} : \kappa \in \ChoiceSimplex \right\}$.
In the subsequent round, we (greedily) query the data sources 
to get as close as possible to $\widehat{k}_{t_j}$:
we collect the next $S$ samples such that 
$\kappa_{t_{j+1}} = \text{proj}( \widehat{k}_{t_j}, \Tilde{\Delta}_{j+1} )$.
The proofs are deferred to Appendix~\ref{sec:apdx-oms-etg}.

In Theorem~\ref{thm:etg-finite-rounds}, we show
zero regret when the number of rounds $J$ is finite.
\begin{theorem}\label{thm:etg-finite-rounds}
Suppose that
(i) the conditions of Theorem~\ref{thm:etc-regret} hold
and
(ii) (Finite rounds) $\lim_{T \to \infty} S / T = r$
for some constant $r \in (0, 1)$.
Then, $R_\infty(\pi_{\text{ETG}}) = 0$.
\end{theorem}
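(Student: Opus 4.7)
The plan is to invoke Lemma~\ref{lemma:zero-regret-policy} by establishing $\kappa_T \ConvProb \kappa^*$ for the ETG policy, mirroring the reduction used in Theorem~\ref{thm:etc-regret}. The finite-rounds hypothesis $S/T \to r \in (0,1)$, together with $e = o(1)$ from Assumption~\ref{assumption:exploration}, implies that the total number of rounds $J = \floor{T(1-e)/S}$ is bounded above by $\lceil 1/r \rceil$ for all sufficiently large $T$. This boundedness is the critical structural fact: unlike the ETC setting, I cannot rely on a single contraction of the feasible set onto the full simplex, but I only need to propagate convergence through a $T$-independent number of adaptive updates, so I plan to argue by finite induction on the round index $j \in \{0, 1, \ldots, J\}$.

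For the base case ($j=0$), after exploration we have $t_0 := Te \to \infty$ samples with $\kappa_{t_0} = \ctrSim$, so Lemma~\ref{lemma:oracle-simplex-estimation} applied at $t_0$ directly yields $\widehat{k}_{t_0} \ConvProb \kappa^*$. For the inductive step I would assume $\kappa_{t_{j-1}} \ConvProb \kappa^*$ and re-apply Lemma~\ref{lemma:oracle-simplex-estimation} at $t_{j-1}$ to obtain $\widehat{k}_{t_{j-1}} \ConvProb \kappa^*$. The key geometric observation is that the feasible set
\begin{align*}
\Tilde{\Delta}_j = \left\{ \frac{t_{j-1}\kappa_{t_{j-1}} + S\kappa}{t_j} : \kappa \in \ChoiceSimplex \right\}
\end{align*}
always contains the witness point $\kappa^\dagger_j := (t_{j-1}\kappa_{t_{j-1}} + S\kappa^*)/t_j$ obtained by choosing $\kappa = \kappa^*$; since $t_{j-1} + S = t_j$, the inductive hypothesis forces $\kappa^\dagger_j \ConvProb \kappa^*$. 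Projection optimality gives $\|\kappa_{t_j} - \widehat{k}_{t_{j-1}}\| \leq \|\kappa^\dagger_j - \widehat{k}_{t_{j-1}}\| \ConvProb 0$, hence $\kappa_{t_j} \ConvProb \kappa^*$. Iterating the step through the bounded number of rounds completes the induction and yields $\kappa_T \ConvProb \kappa^*$, at which point Lemma~\ref{lemma:zero-regret-policy} closes the argument.

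The step I expect to be most delicate is justifying each intermediate call to Lemma~\ref{lemma:oracle-simplex-estimation} at round $j-1$, since the data up to time $t_{j-1}$ is collected under a random, policy-dependent sequence of selection simplices rather than a fixed $\kappa$. Concretely, one needs to verify that the asymptotic active set $\mathcal{I} = \{i : \liminf_T \sum_{t \leq t_{j-1}} m_i(s_t)/t_{j-1} > 0\}$ coincides with the full-horizon set $\{i : \kappa^*_i > 0\}$ already controlled by Assumption~\ref{assump:standard-gmm}a via the conditions of Theorem~\ref{thm:etc-regret}. The inductive hypothesis $\kappa_{t_{j-1}} \ConvProb \kappa^*$ is precisely what pins down this set, so the ULLN/consistency machinery underlying Property~\ref{property:ulln}, Prop.~\ref{prop:consistency-gmm}, and Lemma~\ref{lemma:oracle-simplex-estimation} transfers from round to round without introducing a new identification condition. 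Modulo this bookkeeping, the argument is a clean finite induction enabled entirely by the bounded-$J$ consequence of $S/T \to r > 0$.
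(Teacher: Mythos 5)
Your overall route coincides with the paper's: reduce the theorem to showing $\kappa_T \ConvProb \kappa^*$ and invoke Lemma~\ref{lemma:zero-regret-policy}, use the fact that $S/T \to r > 0$ keeps the number of rounds bounded so that Lemma~\ref{lemma:oracle-simplex-estimation} can be applied at every $t_j$ (the paper does this via a union bound over the $J+1$ rounds; your finite induction is the same device), and then exploit the greedy projection structure to pull $\kappa_T$ toward $\kappa^*$. Your witness-point/projection-optimality computation is a more explicit rendering of the paper's informal statement that ``$\kappa_{t_{j+1}}$ moves as close as possible to $\widehat{k}_{t_j}$ after every round,'' and your concern about invoking Lemma~\ref{lemma:oracle-simplex-estimation} on adaptively collected data is consistent with the paper, whose ULLN (Lemma~\ref{lemma:apdx-uniform-conv-prob-semiparametric}) is stated for $H_{t-1}$-measurable selection indicators, so no new identification condition is needed.

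One step needs repair: the first application of your inductive step. Your hypothesis is $\kappa_{t_{j-1}} \ConvProb \kappa^*$, but at $j=1$ this is false, since $\kappa_{t_0} = \ctrSim$ by construction. The witness $\kappa^{\dagger}_1 = (t_0 \ctrSim + S\kappa^*)/t_1$ does still converge to $\kappa^*$, but only because the exploration weight $t_0/t_1 = Te/(Te+S) \to 0$, which requires $e = o(1)$ together with $S/T \to r > 0$. You invoke $e=o(1)$ only to bound $J$ (where it is not actually needed, since $J \le T/S \to 1/r$), rather than at this step, where it is essential: with a fixed exploration fraction the first-step witness stays bounded away from $\kappa^*$ and the conclusion fails, which is exactly why the paper appeals to ``negligible exploration'' to argue that the feasible sets asymptotically reach $\kappa^*$. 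Handling $j=1$ separately with this one-line observation closes the gap; for $j \ge 2$ your induction is sound, since there the weight $t_{j-1}/t_j \to (j-1)/j$ is non-negligible and the inductive hypothesis genuinely carries the convergence.
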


Next, under the 
assumption of strongly consistent nuisance estimation, 
we prove zero regret for any batch size $S$ (Theorem~\ref{thm:etg-infinite-rounds}).
Thus, $S$ can depend on $T$
such that $S / T = o(1)$.
For example, we can set $S = s$ for some constant $s$ or $S \asymp \sqrt{T}$.

\begin{lemma}\label{lemma:oracle-simplex-estimation-almost-sure}
Suppose that (i) the conditions of Lemma~\ref{lemma:oracle-simplex-estimation} hold and
(ii) (Nuisance strong consistency) $\| \hEta - \eta^* \| \ConvAS 0$.
Then $\widehat{\theta}_T \ConvAS \theta^*$
and $\widehat{k}_T \ConvAS \kappa^*$.
\end{lemma}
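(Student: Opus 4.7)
The plan is to upgrade the in-probability convergences in the proof of Lemma~\ref{lemma:oracle-simplex-estimation} to almost-sure convergences, by replacing each weak uniform law of large numbers with a strong ULLN for $\frac{1}{T}\sum_{t=1}^{T} g_t(\theta, \hEta)$ over $\theta\in\Theta$. Once this is in hand, the argmin continuous mapping theorem under the identification condition in Assumption~\ref{assump:standard-gmm}(a) gives $\widehat{\theta}_T \ConvAS \theta^*$, and applying the same machinery coordinatewise to $\widehat{G}_T$ and $\widehat{\Omega}_T$ (using condition~(iii) that $\partial_{\theta_j}\psi^{(i)}$ satisfies Property~\ref{property:ulln}) yields a.s.\ uniform convergence of $\hVar_T(\widehat{\theta}_T,\kappa)$ to $V_{*}(\kappa)$ over $\kappa\in\ChoiceSimplex$, from which Assumption~\ref{assump:kappa-star-identify} and a second application of the argmin continuous mapping theorem deliver $\widehat{k}_T \ConvAS \kappa^*$.

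To prove the strong ULLN, I would first establish pointwise a.s.\ convergence at each fixed $\theta$ by splitting
\[
\frac{1}{T}\sum_{t=1}^{T} m_i(s_t)\psi_t^{(i)}(\theta,\hEta) = \frac{1}{T}\sum_{t=1}^{T} m_i(s_t)\psi_t^{(i)}(\theta,\eta^*) + \frac{1}{T}\sum_{t=1}^{T} m_i(s_t)\bigl(\psi_t^{(i)}(\theta,\hEta)-\psi_t^{(i)}(\theta,\eta^*)\bigr).
\]
The second term is $\litOas(1)$ because the $L^2$ continuity in Property~\ref{property:ulln}(ii) combined with the strengthened nuisance consistency $\|\hEta-\eta^*\|\ConvAS 0$ makes the summand norms vanish almost surely. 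For the first term, centering by $\EB_{t-1}[\cdot]$ produces a martingale difference sequence to which Chow's SLLN applies under the $(2+\delta)$-moment bound of Property~\ref{property:ulln}(i); what remains are conditional means that average to a Cesaro sum involving $\kappa_t$ times $\EB[\psi^{(i)}(Z^{(i)};\theta,\eta^*)]$, which converges almost surely once the selection simplex does (Assumption~\ref{assump:kappa-T-converges} strengthened to a.s., which I would propagate through the OMS-ETC/ETG construction since $\kappa_T$ is a deterministic function of $\widehat{k}$ via the projection $\text{proj}(\cdot,\Tilde{\Delta}_e)$). Uniform-in-$\theta$ convergence then follows by the standard $\epsilon$-net argument using the Lipschitz bound in Property~\ref{property:ulln}(ii) and compactness of $\Theta$ (Assumption~\ref{assump:standard-gmm}(b)), both of which preserve a.s.\ convergence.

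The main obstacle is the joint adaptivity of $s_t$ and $\hEta$: both are $H_{t-1}$-measurable, so the sample averages are not i.i.d.\ sums and Kolmogorov's classical SLLN is not directly available; the pointwise strong law has to go through a martingale SLLN, for which the $(2+\delta)$-moment condition in Property~\ref{property:ulln}(i) is precisely the integrability needed. A secondary subtlety is the apparent circularity between $\widehat{k}$ and $\kappa_T$ in OMS-ETG, which I would resolve by induction on batches: having established $\widehat{\theta}_{t_j}\ConvAS\theta^*$ and $\widehat{k}_{t_j}\ConvAS\kappa^*$ through the first $j$ batches, the deterministic update $\kappa_{t_{j+1}} = \text{proj}(\widehat{k}_{t_j},\Tilde{\Delta}_{j+1})$ propagates a.s.\ convergence of $\kappa_{t_{j+1}}$, which in turn feeds the strong ULLN in batch $j+1$ and closes the induction.
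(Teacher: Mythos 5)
Your high-level plan---upgrade the weak ULLN to a strong one and then rerun the argmin arguments for $\widehat{\theta}_T$ and $\widehat{k}_T$---is exactly the paper's route (its Lemma~\ref{lemma:apdx-uniform-conv-almost-sure-semiparametric}), but two steps of your strong ULLN do not go through as written. First, your claim that $\frac{1}{T}\sum_t m_i(s_t)\bigl(\psi^{(i)}_t(\theta,\hEta)-\psi^{(i)}_t(\theta,\eta^*)\bigr)=\litOas(1)$ because ``$L^2$ continuity plus $\|\hEta-\eta^*\|\ConvAS 0$ makes the summand norms vanish almost surely'' is not valid: Property~\ref{property:ulln}(ii) only bounds conditional second moments, $\EB_{t-1}\bigl[(\psi^{(i)}_t(\theta,\hEta)-\psi^{(i)}_t(\theta,\eta^*))^2\bigr]\le L\|\hEta-\eta^*\|^2$, so almost sure smallness of the nuisance error says nothing pointwise about the random summands. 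This term itself must be split into a martingale-difference part (handled by the MDS SLLN, Corollary~\ref{cor:apdx-mds-slln}, using Property~\ref{property:ulln}(ii, iv)) plus a conditional-bias part bounded by $\sqrt{L}\,\|\hEta-\eta^*\|$ via Cauchy--Schwarz, whose Ces\`aro average is $\litOas(1)$ only after an argument like the paper's Lemma~\ref{lemma:avg-eta-convas} (which also needs the uniform $(1+\delta)$-moment bound of Property~\ref{property:ulln}(iv)); this decomposition is where the actual work of the strengthening lies, and you have applied the martingale centering to the wrong piece.

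Second, your handling of $\frac{1}{T}\sum_t m_i(s_t)\psi^{(i)}_t(\theta,\eta^*)$ leaves a Ces\`aro sum in $\kappa_t$ and then demands an almost-sure version of Assumption~\ref{assump:kappa-T-converges}, to be ``propagated through the ETC/ETG construction by induction on batches.'' This is both unnecessary and problematic. It is unnecessary because the lemma never needs $\kappa_T$ to converge: if you keep the $s_t$ weights inside the uniform target (compare against $s_t\,a_*(\theta,\eta^*)$, as in Lemma~\ref{lemma:apdx-uniform-conv-almost-sure-semiparametric}), the conditional means cancel exactly and the centered sum is an MDS; for consistency of $\widehat{\theta}_T$ the comparison objective $\Bar{Q}_T$ already carries the factor $\frac{1}{T}\sum_t m(s_t)$ and is minimized at $\theta^*$ for every $T$ under Assumption~\ref{assump:standard-gmm}(a), and in the variance estimator of Definition~\ref{defn:offpolicy-var-estimate} the factors $m_G(\kappa_T)$, $m_\Omega(\kappa_T)$ cancel against the reweighting $1/m_G(\kappa_T)$, $1/m_\Omega(\kappa_T)$---which is precisely why Lemma~\ref{lemma:oracle-simplex-estimation}, and hence this lemma, is stated without any convergence condition on $\kappa_T$. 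It is problematic because the downstream results (Theorem~\ref{thm:etg-infinite-rounds}, Theorem~\ref{thm:eps-greedy}) derive $\kappa_T\ConvAS\kappa^*$ \emph{from} this lemma, so assuming it here is circular; and your induction cannot close the gap, since with a fixed batch size the number of rounds grows with $T$ (and the $\epsilon$-greedy policy has no batch structure at all), so a finite induction over rounds never yields the required limit statement.
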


\begin{theorem}\label{thm:etg-infinite-rounds}
Suppose that (i) the conditions of Theorem~\ref{thm:etc-regret} hold and
(ii) the conditions of Lemma~\ref{lemma:oracle-simplex-estimation-almost-sure} hold.
Then, for any batch size $S$, $R_\infty(\pi_{\text{ETG}}) = 0$.
\end{theorem}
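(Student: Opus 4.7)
The plan is to show $\kappa_T \ConvAS \kappa^*$, after which Lemma~\ref{lemma:zero-regret-policy} immediately yields $R_\infty(\pi_{\text{ETG}}) = 0$. The difficulty beyond Theorem~\ref{thm:etg-finite-rounds} is that when $S/T \to 0$ the algorithm performs $J = T(1-e)/S \to \infty$ adaptive updates and each update can only move the selection simplex by $O(S/t_{j+1})$, so the per-round step size shrinks to zero; one must verify that the compounded updates still drive $\kappa_{t_j}$ to the moving target $\widehat{k}_{t_j}$.

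I would begin by invoking Lemma~\ref{lemma:oracle-simplex-estimation-almost-sure} to obtain $\widehat{k}_{t_j} \ConvAS \kappa^*$ along the batch boundaries $t_j := Te + jS$. Writing $\widehat{\kappa}_j$ for the empirical selection vector of batch $j+1$ so that $\kappa_{t_{j+1}} = (t_j \kappa_{t_j} + S \widehat{\kappa}_j)/t_{j+1}$, the algorithm's projection step makes $\widehat{\kappa}_j$ the Euclidean projection onto $\ChoiceSimplex$ of
\begin{align*}
\tilde{y}_j := \frac{t_{j+1} \widehat{k}_{t_j} - t_j \kappa_{t_j}}{S} = \kappa^* - \frac{t_j}{S}\, e_j + \frac{t_{j+1}}{S}\, \widehat{e}_j,
\end{align*}
where $e_j := \kappa_{t_j} - \kappa^*$ and $\widehat{e}_j := \widehat{k}_{t_j} - \kappa^*$. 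Because $\kappa^* \in \ChoiceSimplex$, the variational characterization of the projection $(\tilde{y}_j - \widehat{\kappa}_j)^\top (\kappa^* - \widehat{\kappa}_j) \leq 0$ rearranges into
\begin{align*}
e_j^\top (\widehat{\kappa}_j - \kappa^*) \;\leq\; -\frac{S}{t_j}\,\|\widehat{\kappa}_j - \kappa^*\|^2 + \frac{t_{j+1}}{t_j}\, \widehat{e}_j^\top (\widehat{\kappa}_j - \kappa^*).
\end{align*}

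The crux is to substitute this into the expansion $\|e_{j+1}\|^2 = (t_j/t_{j+1})^2 \|e_j\|^2 + (2 t_j S/t_{j+1}^2)\, e_j^\top (\widehat{\kappa}_j - \kappa^*) + (S/t_{j+1})^2 \|\widehat{\kappa}_j - \kappa^*\|^2$. After the negative $\|\widehat{\kappa}_j - \kappa^*\|^2$ term absorbs the positive one and the bound $\|\widehat{\kappa}_j - \kappa^*\| \leq \sqrt{2}$ is applied, one obtains a contraction-with-drift inequality
\begin{align*}
\|e_{j+1}\|^2 \;\leq\; \frac{t_j^2}{t_{j+1}^2}\, \|e_j\|^2 + \frac{C\,S}{t_{j+1}}\, \|\widehat{e}_j\|
\end{align*}
for a constant $C$ depending only on the simplex diameter. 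Setting $B_j := t_j^2 \|e_j\|^2$ and telescoping gives $B_J \leq B_0 + C\,S \sum_{j=0}^{J-1} t_{j+1} \|\widehat{e}_j\|$; using $\sum_{j=0}^{J-1} t_{j+1} \leq JT$ and $SJ \leq T$, division by $t_J^2 = T^2$ delivers
\begin{align*}
\|\kappa_T - \kappa^*\|^2 \;\leq\; e^2 \|\kappa_{\text{ctr}} - \kappa^*\|^2 + C\, \max_{0 \leq j < J} \|\widehat{e}_j\|.
\end{align*}

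The first term vanishes by Assumption~\ref{assumption:exploration}. For the second, on the almost-sure event that $\widehat{k}_n \to \kappa^*$, for any $\epsilon > 0$ there is a path-dependent $N < \infty$ with $\|\widehat{k}_n - \kappa^*\| < \epsilon$ for all $n \geq N$; since every $t_j \geq Te$ and $Te \to \infty$, eventually $\max_{j < J} \|\widehat{e}_j\| < \epsilon$, and letting $\epsilon \to 0$ through a countable sequence delivers $\kappa_T \ConvAS \kappa^*$. Lemma~\ref{lemma:zero-regret-policy} then closes the argument. The projection step is the heart of the proof: the naive worry is that the $O(S/t_{j+1})$ step size traps $\kappa_{t_j}$ near $\kappa_{\text{ctr}}$ when $S/T \to 0$, but the simplex projection supplies precisely the $(t_j/t_{j+1})^2$ contraction that compensates for the vanishing step sizes once $\widehat{k}_{t_j}$ has settled near $\kappa^*$.
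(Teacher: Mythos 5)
Your proof is correct and reaches the same endpoint as the paper ($\kappa_T \to \kappa^*$ almost surely, then Lemma~\ref{lemma:zero-regret-policy}), but the middle step is handled quite differently. The paper's proof is qualitative: it invokes Lemma~\ref{lemma:oracle-simplex-estimation-almost-sure} to get $\widehat{k}_{t} \ConvAS \kappa^*$, converts this via Lemma~\ref{lemma:conv-as-lim-sup-conv-prob} into the event $\widehat{A}_{\epsilon}(Te) = \{\forall t' > Te : \widehat{k}_{t'} \in \BM_{\epsilon}(\kappa^*)\}$ having probability tending to one, and then simply asserts that, because $\kappa_{t_{j+1}}$ ``moves as close as possible'' to $\widehat{k}_{t_j}$ each round and exploration is negligible, $\kappa_t$ must eventually enter and remain in $\BM_{\epsilon}(\kappa^*)$. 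You instead supply the quantitative content behind that assertion: using that $\kappa_{t_{j+1}} = \text{proj}(\widehat{k}_{t_j}, \Tilde{\Delta}_{j+1})$ is equivalent (the feasible set being a positively scaled translate of $\ChoiceSimplex$) to projecting $\tilde{y}_j$ onto $\ChoiceSimplex$, the variational inequality of the projection yields the contraction-with-drift recursion $\|e_{j+1}\|^2 \leq (t_j/t_{j+1})^2\|e_j\|^2 + (CS/t_{j+1})\|\widehat{e}_j\|$ (I checked the algebra; with the simplex diameter $\sqrt{2}$ one gets $C = 2\sqrt{2}$), and telescoping $B_j = t_j^2\|e_j\|^2$ gives the explicit bound $\|\kappa_T - \kappa^*\|^2 \lesssim e^2 + \max_{j<J}\|\widehat{e}_j\|$, valid uniformly in the batch size, after which the a.s.\ convergence of $\widehat{k}$ finishes the argument exactly as in the paper. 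What your route buys is a rigorous justification of the step the paper only sketches (that the shrinking $O(S/t_{j+1})$ step sizes do not trap $\kappa_t$ away from the moving target), plus an explicit nonasymptotic tracking error; what it costs is length, and it shares the same mild informalities as the paper (integer rounding of batch allocations, identifying $\kappa_T$ with $\kappa_{t_J}$, and treating the horizon-indexed family of runs as a single process when invoking a.s.\ convergence of $\widehat{k}_t$), so I see no genuine gap relative to the paper's own standard of rigor.
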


\begin{remark}\label{remark:strong-consistency-nuisance}
Lemma~\ref{lemma:oracle-simplex-estimation-almost-sure} requires strongly consistent nuisance estimators.
Almost sure convergence guarantees exist for some 
nonparametric problems \citep{wu2020consistency, walk2010strong, blondin2007rates, francisco2003uniform, liero1989strong, cheng1984strong}.
We also illustrate 
how non-asymptotic tail bounds from statistical learning
theory along with the Borel–Cantelli lemma
can be used to show strong consistency (see Appendix~\ref{sec:apdx-nuisance-convergence-rates}).
\end{remark}

The ETG policy can be further generalized to an
$\epsilon$-greedy strategy (Fig.~\ref{fig:policy-algorithm-eps-greedy} in Appendix~\ref{sec:apdx-oms-etg}).
This policy is characterized by an exploration
policy $(\epsilon_t)_{t=1}^{\infty}$ where $\epsilon_t \in [0, 1]$.
At each time $t$, with probability $\epsilon_t$,
we select a data source uniformly at random (i.e., explore)
and act greedily with probability $(1 - \epsilon_t)$.

\begin{theorem}\label{thm:eps-greedy}
Let $(\epsilon_t)_{t=1}^{\infty}$ be a non-increasing sequence and
$E_T = \sum_{t=1}^{T} \epsilon_t$.
Suppose that (i) the conditions of Prop.~\ref{prop:asymp-normality} hold; 
(ii) the conditions of Lemma~\ref{lemma:oracle-simplex-estimation-almost-sure} hold;
and
(iii) $E_T = o(T)$ and 
$E_T \to \infty$ (e.g., $\epsilon_t \asymp 1 / t$).
Then, the $\epsilon$-greedy policy suffers zero regret: 
$R_\infty(\pi_{\epsilon\text{-greedy}}) = 0$.
\end{theorem}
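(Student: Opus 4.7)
\emph{The plan} is to reduce the theorem to Lemma~\ref{lemma:zero-regret-policy} by establishing $\kappa_T \ConvProb \kappa^{*}$. Because the conditions of Lemma~\ref{lemma:oracle-simplex-estimation-almost-sure} are assumed, I obtain for free that, along the trajectory of the $\epsilon$-greedy policy, the per-round estimates satisfy $\widehat{\theta}_t \ConvAS \theta^{*}$ and $\widehat{k}_t \ConvAS \kappa^{*}$, where $\widehat{k}_t := \arg\min_{\kappa \in \ChoiceSimplex} \hVar_t(\widehat{\theta}_t, \kappa)$. It then remains to propagate this a.s.\ convergence of the \emph{target} $\widehat{k}_t$ onto the \emph{realized} empirical selection frequency $\kappa_T$, despite the two layers of randomness (Bernoulli exploration and the randomized greedy draw).

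To do this I would set up a martingale decomposition. Let $B_t \sim \mathrm{Bern}(\epsilon_t)$ be the independent exploration indicator at round $t$, and adopt the randomized greedy rule under which, conditional on $B_t = 0$, we draw $s_t$ from a Multinoulli with parameter $\widehat{k}_{t-1}$ (cf.\ Remark~\ref{remark:data-collection-order}). Then $\EB_{t-1}[s_t] = \epsilon_t\, \ctrSim + (1 - \epsilon_t)\, \widehat{k}_{t-1}$, so $\xi_t := s_t - \EB_{t-1}[s_t]$ is a uniformly bounded ($\|\xi_t\| \le 2$) martingale difference adapted to the filtration generated by $H_t$. Summing,
\begin{align*}
\kappa_T \;=\; \frac{E_T}{T}\, \ctrSim \;+\; \frac{1}{T} \sum_{t=1}^{T} (1 - \epsilon_t)\, \widehat{k}_{t-1} \;+\; \frac{1}{T} \sum_{t=1}^{T} \xi_t.
\end{align*}

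Next I would analyze the three pieces separately. The first vanishes since $E_T = o(T)$. The monotonicity of $(\epsilon_t)$ combined with $E_T = o(T)$ forces $\epsilon_t \to 0$; together with $\widehat{k}_{t-1} \ConvAS \kappa^{*}$, the Ces\`aro lemma yields $T^{-1} \sum_{t} (1 - \epsilon_t)\, \widehat{k}_{t-1} \ConvAS \kappa^{*}$. For the martingale term I would invoke an SLLN for bounded martingale differences (e.g., Chow's theorem, or Azuma--Hoeffding along a polynomially spaced subsequence with a maximal-inequality interpolation) to conclude $T^{-1} \sum_{t} \xi_t \ConvAS 0$. Adding the three pieces gives $\kappa_T \ConvAS \kappa^{*}$, hence $\kappa_T \ConvProb \kappa^{*}$, and Lemma~\ref{lemma:zero-regret-policy} delivers $R_{\infty}(\pi_{\epsilon\text{-greedy}}) = 0$.

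The main obstacle is legitimizing $\widehat{k}_t \ConvAS \kappa^{*}$ under the adaptive, non-i.i.d.\ sampling produced by $\epsilon$-greedy. This is routed through the identification clause Assumption~\ref{assump:standard-gmm}a inside Lemma~\ref{lemma:oracle-simplex-estimation-almost-sure}, applied to the asymptotic index set $\mathcal{I} = \{i : \liminf_T \kappa_{T, i} > 0\} = \{i : \kappa^{*}_i > 0\}$ (the second equality uses $\epsilon_t \to 0$, which has to be verified upfront). The requirement $E_T \to \infty$ plays the secondary but crucial role of guaranteeing that every source is sampled infinitely often, which is what makes the adaptively-computed GMM estimator strongly consistent in the first place; without this, the bootstrap $\kappa_T \to \widehat{k}_{t-1} \to \kappa^{*}$ could stall at an uninformative early estimate. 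Once these pieces are granted, the martingale-plus-Ces\`aro argument above is straightforward.
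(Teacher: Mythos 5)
Your proposal is correct in substance and reaches the conclusion by the same reduction as the paper (show $\kappa_T \to \kappa^*$, then invoke Lemma~\ref{lemma:zero-regret-policy}, with $\widehat{k}_t \ConvAS \kappa^*$ supplied by Lemma~\ref{lemma:oracle-simplex-estimation-almost-sure}), but the mechanism you use to transfer the convergence of the \emph{estimates} $\widehat{k}_t$ to the \emph{realized} frequencies $\kappa_T$ is genuinely different. The paper argues as in Theorem~\ref{thm:etg-infinite-rounds}: since the greedy step moves $\kappa_t$ as close as possible to the current $\widehat{k}_t$ (a projection step), once $\widehat{k}_t$ is trapped in an $\epsilon$-ball around $\kappa^*$ (via Lemma~\ref{lemma:conv-as-lim-sup-conv-prob}) the simplex $\kappa_t$ eventually enters and stays in that ball, giving $\kappa_T \ConvAS \kappa^*$. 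You instead assume a randomized implementation of the greedy step ($s_t \sim \text{Multinoulli}(\widehat{k}_{t-1})$ when not exploring), write $\kappa_T = (E_T/T)\ctrSim + T^{-1}\sum_t (1-\epsilon_t)\widehat{k}_{t-1} + T^{-1}\sum_t \xi_t$ with $\xi_t$ a bounded martingale difference, and kill the three terms by $E_T = o(T)$, Ces\`aro plus $\widehat{k}_{t-1}\ConvAS\kappa^*$, and an MDS SLLN, respectively. This is a cleaner and more quantitative route, and it makes the roles of $E_T = o(T)$ and $E_T \to \infty$ explicit; the paper's tracking argument is looser but applies directly to the algorithm as stated. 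The one caveat is that your decomposition, with $\EB_{t-1}[s_t] = \epsilon_t\ctrSim + (1-\epsilon_t)\widehat{k}_{t-1}$, analyzes the randomized variant of the greedy step rather than the projection-based step in the paper's $\epsilon$-greedy algorithm (Remark~\ref{remark:data-collection-order} only licenses randomization for fixed policies); for the projection rule the greedy choice is a deterministic $H_{t-1}$-measurable one-hot vector, so you would either need to rerun your argument with that conditional mean or fall back on the paper's tracking argument. This is a minor adaptation, not a flaw in the underlying idea.
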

The $\epsilon$-greedy policy does not 
require the horizon $T$ to be specified in advance
(unlike ETC and ETG, where the exploration depends on $T$).
This is useful for performing time-uniform
inference, enabling the agent to adaptively
stop or continue their data collection
(we discuss this in more detail in Section~\ref{sec:asymp-cs}).

\section{OMS with a Cost Structure}
\label{sec:oms-cost-structure}
In many real-world settings, the agent must pay a 
different cost to query each data source.  
We adapt OMS-ETC and OMS-ETG to this setting where a cost structure is associated with the data sources in $\DM$
and show that these policies still have zero asymptotic regret.
We denote the (known) budget by $B \in \NB$ and 
the cost vector by $c \in \RB^{\CardC}_{>0}$,
where $c_i$ is the cost of querying the data source $\PB^{(i)} \in \DM$.
Due to the cost structure, the horizon $T$ is a random variable dependent 
on $\pi$ with $T = \max \{ t \in \NB : t (\kappa^\top_t c) \leq B  \}$.
The setting in Section~\ref{sec:setup} is a special case 
with $\forall i, \, c_i = 1$ and $T = B$.
The proofs are deferred to Appendix~\ref{sec:apdx-cost-proofs}.

\begin{example}[Combining observational datasets]\label{example:combine-observational-datasets}
Consider the task of estimating the ATE of
a binary treatment $X$ on an outcome $Y$
where an unconfounded dataset is combined with a cheaper 
confounded dataset \citep{yang2019combining} 
(see the causal graph in Fig.~\ref{fig:two-covariates-graph}).
For $\DM = \{ \PB(U, W, X, Y), \, \PB(W, X, Y) \}$, the moment conditions are:
\begin{align*}
    g_t(\theta) = \begin{bmatrix}
        s_{t, 1} \\
        s_{t, 1} \\
        1-s_{t, 1}
    \end{bmatrix} \odot \begin{bmatrix}
        \psi_{\text{AIPW}}\left( \{W_t, U_t\}, X_t, Y_t; \eta^{(1)} \right) - \beta \\
        \psi_{\text{AIPW}}\left( W_t, X_t, Y_t; \eta^{(2)} \right) - \alpha \\
        \psi_{\text{AIPW}}\left( W_t, X_t, Y_t; \eta^{(2)} \right) - \alpha
    \end{bmatrix},
\end{align*}
where $\psi_{\text{AIPW}}$ is defined in Eq.~\ref{eq:aipw-influence-func}, 
$\theta = [\beta, \alpha]^\top$, and $\fTar(\theta) = \beta$.
The cost structure is $c = [c_1, c_2]^\top$ with
$c_1 > c_2$.
\end{example}

\begin{proposition}\label{prop:asymp-norm-cost}
Suppose that the conditions of Prop.~\ref{prop:asymp-normality} hold. Then,
\begin{align*}
    \sqrt{B} \left(\widehat{\beta}_T - \beta^* \right) \ConvDist \NM_{\kappa_{\infty}}\left(0, V_{*}(\kappa_{\infty}) \cdot \left( \kappa_{\infty}^\top c \right) \right).
\end{align*}
\end{proposition}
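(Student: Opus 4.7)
The plan is to reduce this random-horizon statement to the deterministic-horizon Proposition~\ref{prop:asymp-normality} via a simple time-change argument. By the definition of $T$, the ratio $B/T$ is sandwiched between $\kappa_T^\top c$ and $(1+1/T)\kappa_{T+1}^\top c$, which both converge to $\kappa_\infty^\top c$; rewriting $\sqrt{B} = \sqrt{B/T}\cdot\sqrt{T}$ then reduces the problem to applying Proposition~\ref{prop:asymp-normality} at the random time $T$ and multiplying by a convergent factor.

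First, I would establish $T \to \infty$ as $B \to \infty$. Since each $c_i > 0$, we have $\kappa_t^\top c \leq \max_i c_i < \infty$ for all $t$, so $T(\kappa_T^\top c) \leq B$ forces $T \geq B / \max_i c_i$, which diverges. Next, from $T(\kappa_T^\top c) \leq B < (T+1)(\kappa_{T+1}^\top c)$ I would divide through by $T$ to get
$$\kappa_T^\top c \;\leq\; B/T \;<\; (1 + 1/T)\,\kappa_{T+1}^\top c.$$
By Assumption~\ref{assump:kappa-T-converges} we have $\kappa_T, \kappa_{T+1} \ConvProb \kappa_\infty$, so $B/T \ConvProb \kappa_\infty^\top c$ by a sandwich argument together with the continuous mapping theorem.

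Next, invoking Proposition~\ref{prop:asymp-normality} at the random index $T$ yields the joint convergence
$$\bigl(\kappa_T,\, \sqrt{T}(\widehat{\beta}_T - \beta^*)\bigr) \;\ConvDist\; (\kappa_\infty,\, Z), \qquad Z \mid \kappa_\infty \sim \NM(0, V_*(\kappa_\infty)).$$
Writing $\sqrt{B}(\widehat{\beta}_T - \beta^*) = \sqrt{B/T}\cdot \sqrt{T}(\widehat{\beta}_T - \beta^*)$ and combining the two displays via the continuous mapping theorem gives
$$\sqrt{B}(\widehat{\beta}_T - \beta^*) \;\ConvDist\; \sqrt{\kappa_\infty^\top c}\cdot Z,$$
and conditionally on $\kappa_\infty$ this product is $\NM(0, V_*(\kappa_\infty)\cdot(\kappa_\infty^\top c))$, matching the claimed mixture.

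The main obstacle is justifying that Proposition~\ref{prop:asymp-normality} continues to hold when the index is the data-dependent stopping time $T(B)$. Since $T$ is monotone non-decreasing in $B$ and diverges almost surely, and the score process underlying $\widehat{\beta}_T$ has martingale-difference structure, this is a standard random-time-change extension of the martingale CLT used to prove Proposition~\ref{prop:asymp-normality} (alternatively, one can re-run that proof verbatim with $t = T(B)$ since it already accommodates adaptive $\kappa_T$). A secondary subtlety is that $\kappa_\infty$ may itself be random under data-adaptive policies, so Slutsky's theorem does not apply to the factor $\sqrt{B/T}$ in isolation; the joint convergence displayed above is precisely what licenses the continuous-mapping step.
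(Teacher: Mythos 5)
Your high-level reduction coincides with the paper's: show $T \to \infty$, sandwich $B/T$ so that $B/T \ConvProb \kappa_{\infty}^\top c$, factor $\sqrt{B} = \sqrt{B/T}\cdot\sqrt{T}$, and combine the scaling factor with the CLT through joint (stable) convergence rather than plain Slutsky --- that last observation about random $\kappa_{\infty}$ is correct and important. The gap is precisely the step you flag as ``the main obstacle'' and then defer: you never establish that the mixture-normal CLT of Prop.~\ref{prop:asymp-normality} holds at the data-dependent horizon $T(B)$. ``Re-running that proof verbatim with $t = T(B)$'' is not justified as stated, because every limit theorem used there (the martingale CLT of Prop.~\ref{prop:apdx-martingale-clt}, the MDS SLLN, Lemma~\ref{lemma:apdx-op-half-converge}) is formulated along a deterministic index sequence; and the Anscombe-type ``standard random-time-change extension'' you invoke is standard when the index ratio converges to a \emph{constant}, whereas here $B/T$ converges to the possibly random limit $\kappa_{\infty}^\top c$ and $T$ is determined by the same data as the summands. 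Handling exactly this is the entire content of the proposition, so it cannot be left as an appeal to a generic extension.

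The paper closes the gap with a short concrete device: since the cumulative cost $t(\kappa_t^\top c)$ is nondecreasing in $t$, $T$ is a stopping time, hence $\mathbbm{1}(t \le T)$ is $H_{t-1}$-measurable, and one writes
\[
\frac{1}{\sqrt{T}} \sum_{t=1}^{T} g_t(\theta^*, \hEta)
\;=\;
\sqrt{\frac{T_{\max}}{T}} \cdot \frac{1}{\sqrt{T_{\max}}} \sum_{t=1}^{T_{\max}} g_t(\theta^*, \hEta)\,\mathbbm{1}(t \le T),
\qquad T_{\max} := B / c_{\min},
\]
so that the second factor is a sum over the \emph{deterministic} horizon $T_{\max}$ of a martingale difference sequence (the predictable indicator preserves the MDS property), to which the ordinary fixed-horizon martingale CLT with random limiting conditional variance applies; the first factor satisfies $T_{\max}/T \ConvProb \kappa_{\infty}^\top c / c_{\min}$ and is absorbed via the stable-convergence Cramér--Slutsky corollary (Cor.~\ref{cor:apdx-cramer-slutsky}), which simultaneously supplies the joint convergence you correctly identified as necessary. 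With that embedding in place, your sandwich bound $\kappa_T^\top c \le B/T < (1 + 1/T)\,\kappa_{T+1}^\top c$ and the final continuous-mapping step go through essentially as you wrote them (modulo the same caveat, shared with the paper, that evaluating $\kappa$ at the random index is itself being read off from Assumption~\ref{assump:kappa-T-converges}).
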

We scale by $\sqrt{B}$ instead of $\sqrt{T}$ to make comparisons across policies meaningful. 
The \textit{oracle simplex} is now defined as $\kappa^* := \arg\min_{\kappa \in \ChoiceSimplex} \left[V_{*}(\kappa)  \cdot \left( \kappa^\top c \right)\right]$ and the \textit{asymptotic regret} of a data collection policy $\pi$ is now
\begin{align*}
    R_\infty(\pi) = \Amse\left( \sqrt{B} \left(\widehat{\beta}^{(\pi)}_T - \beta^* \right) \right) - V_{*}(\kappa^*)  \cdot \left((\kappa^*)^\top c \right).
\end{align*}
OMS-ETC-CS (\textit{OMS-ETC with cost structure}), denoted by $\pi_{\text{ETC-CS}}$ 
extends OMS-ETC to this setting 
(Fig.~\ref{fig:policy-algorithm-etc-cs} in Appendix~\ref{sec:apdx-cost-structure}).
We use $Be$ budget to explore and estimate the oracle simplex $\kappa^*$ 
by $\widehat{k} = \arg\min_{\kappa \in \ChoiceSimplex} V(\widehat{\theta}_{T_e}, \kappa) \left( \kappa^\top c \right)$, where $T_e = \floor*{\frac{Be}{\kappa^\top_{T_e} c }}$ and $\kappa_{T_e} = \ctrSim$.
With the remaining budget, we collect samples such that 
$\kappa_T$ gets as close as possible to $\widehat{k}$.

\begin{proposition}\label{prop:regret-etc-cost-structure}
Suppose that the conditions of Prop.~\ref{prop:asymp-normality} and
Lemma~\ref{lemma:oracle-simplex-estimation} hold.
If $e = o(1)$ and $Be \rightarrow \infty$ as $B \to \infty$, then
$R_{\infty}(\pi_{\text{ETC-CS}}) = 0$.
\end{proposition}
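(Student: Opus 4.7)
The plan is to mirror the proof of Theorem~\ref{thm:etc-regret}, showing that $\kappa_T \ConvProb \kappa^*$ so that Proposition~\ref{prop:asymp-norm-cost} delivers the matching asymptotic variance. The adaptations to the cost-structure setting are twofold: the horizon $T$ is now a random variable determined by the budget $B$ and the incurred costs, and the objective being minimized is the cost-weighted variance $V_*(\kappa)(\kappa^\top c)$ rather than $V_*(\kappa)$ alone.

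First, I would verify that the explore phase produces a diverging number of samples. Because each $c_i > 0$ and $\kappa_{T_e} = \ctrSim$ puts positive mass on every source, budget exhaustion gives $T_e \geq Be / \max_i c_i$, so $Be \to \infty$ implies $T_e \ConvProb \infty$. This activates Proposition~\ref{prop:consistency-gmm} during exploration, yielding $\widehat{\theta}_{T_e} \ConvProb \theta^*$, and the uniform-in-$\kappa$ consistency of $\hVar_{T_e}(\widehat{\theta}_{T_e}, \kappa)$ for $V_*(\kappa)$ established inside the proof of Lemma~\ref{lemma:oracle-simplex-estimation}. Multiplying by the continuous bounded function $\kappa \mapsto \kappa^\top c$ on the compact simplex preserves uniform consistency for the cost-weighted objective, and a standard argmin-continuity argument combined with the uniqueness of the cost-weighted minimizer $\kappa^*$ (the cost-structure analog of Assumption~\ref{assump:kappa-star-identify}, which the paper adopts) gives $\widehat{k} \ConvProb \kappa^*$.

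Next, I would show that $\kappa_T \ConvProb \widehat{k}$, hence $\kappa_T \ConvProb \kappa^*$. Writing $\alpha_e := T_e / T$, the achievable selection simplex at horizon $T$ after commit is contained in the set $\{ \alpha_e \ctrSim + (1 - \alpha_e) \kappa : \kappa \in \ChoiceSimplex \}$ up to rounding from integrality of sample counts. Since $e = o(1)$ and the costs are bounded away from $0$ and $\infty$, one checks that $\alpha_e = O(e) = \litOp(1)$, so this set asymptotically covers $\ChoiceSimplex$ and $\sup_{\kappa \in \ChoiceSimplex} \| \text{proj}(\kappa, \Tilde{\Delta}_e^{\text{CS}}) - \kappa \| \ConvProb 0$. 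Because the algorithm chooses $\kappa_T$ as the projection of $\widehat{k}$ onto this set, we obtain $\| \kappa_T - \widehat{k} \| \ConvProb 0$, and thus $\kappa_T \ConvProb \kappa^*$. Since $T \geq B / \max_i c_i \to \infty$, Proposition~\ref{prop:asymp-norm-cost} applied with $\kappa_\infty = \kappa^*$ yields $\sqrt{B}(\widehat{\beta}_T - \beta^*) \ConvDist \NM(0, V_*(\kappa^*)(\kappa^{*\top} c))$, matching the oracle's asymptotic MSE and giving $R_\infty(\pi_{\text{ETC-CS}}) = 0$.

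The main obstacle I expect is the bookkeeping in the achievability step: because the commit phase must terminate the moment the budget is exhausted rather than after a predetermined sample count, one has to verify that integer-valued rounding of the per-source sample counts and the random stopping time $T$ do not perturb $\kappa_T$ by more than $o_p(1)$ relative to the target $\widehat{k}$. A clean way to handle this is to note that costs are uniformly bounded above, so an extra or missing sample shifts $\kappa_T$ by $O(1/T) = \litOp(1)$, which is absorbed into the uniform projection bound; the rest of the argument then follows the template of Theorem~\ref{thm:etc-regret} verbatim.
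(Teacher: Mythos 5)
Your proposal is correct and follows essentially the same route as the paper: show the exploration phase yields $T_e \to \infty$, invoke the (cost-weighted) consistency of the estimated oracle simplex via Lemma~\ref{lemma:oracle-simplex-estimation}, argue that negligible exploration lets $\kappa_T$ track $\widehat{k}$, and conclude via the cost-structure asymptotic normality result. You are in fact more explicit than the paper's terse proof about the points it glosses over (uniform convergence of the cost-weighted objective, the cost-structure analog of Assumption~\ref{assump:kappa-star-identify}, the projection/rounding bookkeeping for the random horizon), all of which are handled correctly.
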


Next, we propose OMS-ETG-CS (OMS-ETG with cost structure) to 
extend OMS-ETG to this setting 
(Fig.~\ref{fig:policy-algorithm-etg-cs} in Appendix~\ref{sec:apdx-cost-structure}).
We first explore using budget $Be$.
The remaining budget is used in batches of size $S$.
After each round, we update the estimate $\widehat{k}$ of the oracle simplex,
and collect samples to get as close to $\widehat{k}$ as possible.

\begin{proposition}\label{prop:regret-etg-cs}
Suppose that 
the conditions of Prop.~\ref{prop:asymp-normality} and
Lemma~\ref{lemma:oracle-simplex-estimation-almost-sure} hold.
If $e = o(1)$ and $Be \to \infty$ as $B \to \infty$,
then $R_\infty(\pi_{\text{ETG-CS}}) = 0$.
\end{proposition}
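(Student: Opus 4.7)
The plan is to reduce the statement to showing that the realized selection simplex $\kappa_T$ converges in probability to the cost-aware oracle simplex $\kappa^*$. Once that is in hand, Prop.~\ref{prop:asymp-norm-cost} gives $\sqrt{B}(\widehat{\beta}_T - \beta^*) \ConvDist \NM(0, V_*(\kappa^*)(\kappa^{*\top} c))$, so the asymptotic MSE equals $V_*(\kappa^*)(\kappa^{*\top} c)$ and $R_\infty(\pi_{\text{ETG-CS}}) = 0$ by the definition of regret in the cost setting. This mirrors the $\sqrt{T}$-scaled argument behind Theorem~\ref{thm:etg-infinite-rounds}, with Lemma~\ref{lemma:zero-regret-policy} replaced by its cost-scaled analogue derived from Prop.~\ref{prop:asymp-norm-cost}.

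Next, I would handle the exploration phase. Because all costs are bounded and strictly positive and exploration uses the uniform policy with $\kappa_{T_e} = \ctrSim$, the number of exploration samples satisfies $T_e = \lfloor Be / (\ctrSim^\top c) \rfloor \to \infty$ as $B \to \infty$ (using $Be \to \infty$). Applying Lemma~\ref{lemma:oracle-simplex-estimation-almost-sure} to the post-exploration data yields $\widehat{\theta}_{T_e} \ConvAS \theta^*$. Since $\hVar_{T_e}(\theta, \kappa)$ is continuous in $\theta$ and $V_*(\kappa)(\kappa^\top c)$ has a unique minimizer $\kappa^*$ (the cost-weighted analogue of Assumption~\ref{assump:kappa-star-identify}), an application of the argmax continuous-mapping theorem gives $\widehat{k}_{T_e} \ConvAS \kappa^*$. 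The same reasoning, applied after each greedy round $j$ to the cumulative data of size $t_j \to \infty$, gives $\widehat{k}_{t_j} \ConvAS \kappa^*$ uniformly over rounds in the sense that eventually every update is within an arbitrarily small $\delta$-ball of $\kappa^*$.

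For the greedy phase, I would control $\|\kappa_T - \kappa^*\|$ by a two-term decomposition: the estimation error $\|\widehat{k}_{t_j} - \kappa^*\|$, which vanishes almost surely by Step 2, plus a projection error $\|\kappa_T - \widehat{k}_{t_j}\|$ arising from the restriction that the batch-$S$ update can only adjust $\kappa_{t_{j+1}}$ within the feasible set determined by the remaining budget. Because $e = o(1)$, the commit-phase budget $B(1-e)$ is asymptotically all of $B$; hence the set of achievable simplices at the end of the greedy phase asymptotically covers $\ChoiceSimplex$. Combined with the greedy projection rule $\kappa_{t_{j+1}} = \text{proj}(\widehat{k}_{t_j}, \Tilde{\Delta}_{j+1})$, this drives the projection error to zero. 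Conclude $\kappa_T \ConvProb \kappa^*$ and apply Prop.~\ref{prop:asymp-norm-cost}.

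The main obstacle is the interaction between the random horizon $T$ and the cost-dependent feasible set for $\kappa_T$: unlike the uniform-cost setting, each batch of $S$ samples consumes a random amount of budget $S (\kappa_{t_{j+1}}^\top c)$, so the number of rounds, the sample-path trajectory of $t_j$, and the feasibility sets $\Tilde{\Delta}_{j+1}$ are all stochastic. The careful step is to show that, despite this, the greedy projections are asymptotically tight with $\kappa^*$; I would do this by bounding the per-round drift $\|\kappa_{t_{j+1}} - \kappa_{t_j}\|$ by $O(S / t_j)$ and using $\sum_j S/t_j$ to telescope, combined with the eventual $\delta$-closeness of $\widehat{k}_{t_j}$ to $\kappa^*$, to guarantee that the final simplex lies in any prescribed neighborhood of $\kappa^*$ with probability tending to one.
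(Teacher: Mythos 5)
Your proposal is correct and follows essentially the same route as the paper: reduce regret to showing the selection simplex converges to the cost-weighted oracle simplex $\kappa^*$, invoke Lemma~\ref{lemma:oracle-simplex-estimation-almost-sure} (via $T_e \to \infty$ and $t_j \to \infty$) for strong consistency of the estimates $\widehat{k}_{t_j}$, use the greedy-projection-plus-negligible-exploration argument (as in Theorem~\ref{thm:etg-infinite-rounds}) to push $\kappa_T$ into any neighborhood of $\kappa^*$ despite the random horizon and stochastic feasible sets, and conclude via the $\sqrt{B}$-scaled limit of Prop.~\ref{prop:asymp-norm-cost}. The only cosmetic difference is that the paper concludes $\kappa_T \ConvAS \kappa^*$ while you settle for convergence in probability, which is all the asymptotic-normality step needs anyway.
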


\section{Asymptotic Confidence Sequences}
\label{sec:asymp-cs}
In Section~\ref{sec:setup-cons-norm}, we showed how to 
construct asymptotically valid confidence intervals (CIs)
for the target parameter $\beta^*$ (Prop.~\ref{prop:asymp-inference}).
One limitation of CIs is that they are only valid at a pre-specified horizon $T$.
In this section, we describe how to construct 
asymptotic confidence sequences (Theorem~\ref{thm:asympotic-conf-sequence}) that are valid at all time steps \citep{waudby2021time}.
In recent work, \citet{dalal2024anytime} have also developed
AsympCS for the double/debiased machine learning framework 
(with i.i.d. data).
The proofs are deferred to Appendix~\ref{sec:apdx-time-uniform}.

\begin{definition}[Confidence sequence]
We say that $(\widehat{\mu}_t \pm \Bar{\CM}^{*}_t)_{t=1}^{\infty}$ is a two-sided
(non-asymptotic) $(1-\alpha)$-confidence sequence for parameter $\mu$ if
\begin{align*}
    \PB\left( \forall t \in \NB : \mu \in ( \widehat{\mu}_t \pm \Bar{\CM}^{*}_t) \right) \geq 1 - \alpha. 
\end{align*}
\end{definition}
Confidence sequences (CSs) are a time-uniform counterpart of confidence intervals:
they allow for inference at all time steps simultaneously
(and stopping times).
Thus, the horizon $T$ need not be decided in advance
and the practitioner can adaptively stop or continue 
their experiment
(see \citet{ramdas2023game, howard2021time} for further discussion).
\citet{waudby2021time} define asymptotic confidence sequences (AsympCS),
a time-uniform counterpart of CLT-style asymptotic confidence intervals.

\begin{definition}[AsympCS {\citep[Definition~2.1]{waudby2021time}}]
Let $\mathbb{T}$ be a totally ordered infinite set (denoting time)
that has a minimum value $t_0 \in \mathbb{T}$.
We say that the intervals $(\widehat{\theta}_t - L_t, \widehat{\theta}_t + U_t)_{t \in \mathbb{T}}$
with non-zero bounds $L_t, U_t > 0$
form a $(1 - \alpha)$-asymptotic confidence sequence (AsympCS)
if there exists a (typically unknown) nonasymptotic CS
$(\widehat{\theta}_t - L^*_t, \widehat{\theta}_t + U^*_t)_{t \in \mathbf{T}}$ satisfying
\begin{align*}
     \PB\left( \forall t \in \mathbb{T} : \theta_t \in (\widehat{\theta}_t - L^*_t, \widehat{\theta}_t + U^*_t) \right) \geq 1 - \alpha,
\end{align*}
such that $L_t, U_t$ 
become arbitrarily precise almost-sure approximations to
$L^*_t, U^*_t$:
\begin{align*}
    L^*_t / L_t \ConvAS 1 \,\, \text{and} \,\, U^*_t / U_t \ConvAS 1.
\end{align*}
\end{definition}

\begin{assumption}[Nuisance estimation]\label{assum:nuisance-rates-asymp-cs}
For all $i \in [M]$,
(a) (Neyman orthogonality) $\forall \eta, \,\, \partial_{r} \EB[\psi^{(i)}_t(\theta^*, \eta^* + r ( \eta - \eta^*))]|_{r=0} = 0$;
(b) For $R_t := \int_0^1 \partial_{r^2} \EB_{t-1}[ \psi^{(i)}_t(\theta^*, \eta^* + r(\hEta - \eta^*)) ] \, dr$, $R_t = \litOas(\sqrt{\log t / t})$;
(c) $\sup_t \EB[R^{1 + \delta}_t] < \infty$ for some $\delta > 0$; and
(d) For some $\gamma > 0$, $\| \hEta - \eta^* \| = \litOas\left( t^{-\gamma}  \right)$.
\end{assumption}

\begin{theorem}\label{thm:asympotic-conf-sequence}
Suppose that (i) the Conditions of Lemma~\ref{lemma:oracle-simplex-estimation-almost-sure} hold;
(ii) Assumption~\ref{assum:nuisance-rates-asymp-cs} holds;
and
(iii) $\kappa_T \ConvAS \Tilde{\kappa}$ for some constant $\Tilde{\kappa} \in \ChoiceSimplex$.
Then, for any $\rho \in \RB_{> 0}$ and $\alpha \in (0, 1)$,
\begin{align*}
    \PB\left(\forall t \in \NB : \left| \widehat{\beta}_t - \beta^* \right| \leq
        \underbrace{\sqrt{ \frac{t \widehat{V}_t \rho^2 + 1}{t^2 \rho^2} \log\left( \frac{t \widehat{V}_t \rho^2 + 1}{\alpha^2} \right) }}_{:= \Bar{\CM}_t} + \, \litOas\left( \sqrt{ \frac{\log{t}}{t} } \right) \right) \geq 1 - \alpha,
\end{align*}
where $\widehat{V}_t$ is the estimated variance defined in Eq.~\ref{eq:variance-estimator},
and $\rho$ determines the time step at which $\Bar{\CM}_t$ is tightest \citep[Sec.~B.2]{waudby2021time}.
\end{theorem}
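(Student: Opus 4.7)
The plan is to establish a pathwise asymptotically linear representation of $\widehat{\beta}_t - \beta^*$ as a scalar martingale average plus an $\litOas(\sqrt{\log t/t})$ remainder, and then invoke the Waudby-Smith–Ramdas confidence sequence for such martingale averages. The structure mirrors the proof of Proposition~\ref{prop:asymp-inference}, but every $\litOp$ bound there must be sharpened to an $\litOas$ bound at rate $\sqrt{\log t/t}$, which is precisely what the strengthened Assumption~\ref{assum:nuisance-rates-asymp-cs} (together with the strong consistency in Lemma~\ref{lemma:oracle-simplex-estimation-almost-sure}) buys.

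First I would linearize the GMM estimator pathwise. Using the first-order condition for $\widehat{\theta}_t$, a Taylor expansion of $g_s(\cdot, \hEta)$ in $\theta$ around $\theta^*$, $\widehat{\theta}_t \ConvAS \theta^*$ from Lemma~\ref{lemma:oracle-simplex-estimation-almost-sure}, and $\kappa_t \ConvAS \tilde\kappa$, I obtain
\begin{align*}
\widehat{\beta}_t - \beta^* = \nabla \fTar(\theta^*)^\top A(\tilde\kappa)\,\frac{1}{t}\sum_{s=1}^t g_s(\theta^*, \hEta) + \litOas\!\left(\sqrt{\tfrac{\log t}{t}}\right),
\end{align*}
where $A(\tilde\kappa)$ is the GMM influence operator at the truth. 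Under strong consistency of the nuisances and of $\kappa_t$, the ULLN-type convergences embedded in Property~\ref{property:ulln} become almost-sure with LIL-order rates, so the linearization error is absorbed in the stated slack.

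Second I would isolate the martingale part by defining $\mu_s(\eta) := \EB_{s-1}[g_s(\theta^*, \eta)]$ (so $\mu_s(\eta^*) = 0$) and splitting
\begin{align*}
g_s(\theta^*, \hEta) = \Delta_s + \mu_s(\hEta), \qquad \Delta_s := g_s(\theta^*, \hEta) - \mu_s(\hEta).
\end{align*}
The sequence $(\Delta_s)$ is a martingale difference with respect to the history filtration. A second-order Taylor expansion of $r \mapsto \mu_s(\eta^* + r(\hEta - \eta^*))$ at $r=0$, combined with Neyman orthogonality (Assumption~\ref{assum:nuisance-rates-asymp-cs}a) and the almost-sure second-order remainder bound (Assumption~\ref{assum:nuisance-rates-asymp-cs}b), gives $\mu_s(\hEta) = \litOas(\sqrt{\log s/s})$, whose Cesàro average inherits the same rate and is absorbed in the Step~1 remainder. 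Then I would apply Theorem~2.2 of \citet{waudby2021time} to the scalar martingale difference sequence $\xi_s := \nabla \fTar(\theta^*)^\top A(\tilde\kappa)\,\Delta_s$. Its conditional-second-moment average converges almost surely to $V_*(\tilde\kappa)$, and $\widehat{V}_t$ converges almost surely to the same limit by Lemma~\ref{lemma:oracle-simplex-estimation-almost-sure} and continuity of the variance map, while the $(2+\delta)$-moment bound inherited from Property~\ref{property:ulln} supplies the required uniform integrability. Instantiating their result with $\widehat{V}_t$ and $\rho$ yields $\PB(\forall t : |t^{-1}\sum_{s\le t}\xi_s| \le \Bar{\CM}_t) \geq 1-\alpha$, and combining with Steps~1--2 gives the theorem.

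The main obstacle is upgrading the pointwise-in-probability approximations used in the CLT-based proof of Proposition~\ref{prop:asymp-inference} to pathwise almost-sure approximations at the explicit rate $\sqrt{\log t/t}$. A confidence sequence must hold jointly over all (possibly stopping) times, so any slack term must vanish almost surely, not merely in probability, and at a rate strictly faster than the leading $\sqrt{V_*\log t/t}$ in $\Bar{\CM}_t$. Assumption~\ref{assum:nuisance-rates-asymp-cs} is tailored for exactly this: the almost-sure second-order remainder rate together with the polynomial almost-sure nuisance rate $\litOas(t^{-\gamma})$, combined with the Borel--Cantelli argument discussed in Remark~\ref{remark:strong-consistency-nuisance}, let every bias term fall inside the $\litOas(\sqrt{\log t/t})$ slack uniformly in $t$.
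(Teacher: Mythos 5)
Your proposal follows essentially the same route as the paper's proof: an almost-sure asymptotically linear expansion of $\widehat{\beta}_t-\beta^*$ with an $\litOas\big(\sqrt{\log t/t}\big)$ remainder (via Neyman orthogonality, the almost-sure second-order remainder rate, and strong consistency of $\widehat{\theta}_t$, $\hEta$, and $\kappa_t$), followed by the Waudby-Smith--Ramdas machinery---the martingale strong Gaussian approximation together with the time-uniform Gaussian-mixture boundary---applied to the leading martingale term, and finally substituting $\widehat{V}_t$ using its almost-sure convergence to $V_{*}(\Tilde{\kappa})$. The only notable difference is that the paper centers the final martingale at the true nuisance $\eta^*$, peeling off the $\hEta$-versus-$\eta^*$ fluctuation separately via an MDS strong law (which needs only second moments), whereas you keep $\hEta$ inside the martingale differences; the paper's centering is the safer bookkeeping, since the Lindeberg/Lyapunov condition for the strong approximation is then checked at $\eta^*$, where Property~\ref{property:ulln}(i) supplies the $2+\delta$ moments, rather than at $\hEta$, where the stated assumptions control only second moments of the difference.
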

Assumptions~\ref{assum:nuisance-rates-asymp-cs}(b, d) require strong consistency rates
for the nuisance estimators (see Remark~\ref{remark:strong-consistency-nuisance} for more discussion).
Condition~(iii) requires that the selection simplex 
converges almost surely to a constant.
This is trivially satisfied for a fixed policy and 
under the conditions of Theorem~\ref{thm:eps-greedy},
this holds for the $\epsilon$-greedy policy 
with $\kappa_T \ConvAS \kappa^*$.

\section{Experiments}
\label{sec:experiments}
\subsection{Synthetic data} \label{sec:expr-synthetic}

In this section, we evaluate our data collection strategies
on the two-sample IV LATE estimation task 
(Example~\ref{example:two-sample-late})
with synthetic data generated from a nonlinear causal model
(Fig.~\ref{fig:rff-iv-results}).
The nonlinearities are generated using random Fourier features \citep{rahimi2007random}
that approximate a function drawn from a Gaussian Process
with a squared exponential kernel
(see Appendix~\ref{sec:apdx-expt-nonlinear-iv} for more details).
For nuisance estimation, we use a multilayer perception (MLP) \citep[Sec.~6]{goodfellow2016deep} with
two hidden layers and $64$ neurons each,
trained with early stopping \citep[Sec.~7.8]{goodfellow2016deep}.
We compare the MSE of the various policies using relative regret:
\begin{align*}
    \text{Relative regret} = \text{RR}(\pi) := \frac{\text{MSE}^{(\pi)} - \text{MSE}^{(\text{oracle}; \, \eta^*)}}{\text{MSE}^{(\text{oracle}; \, \eta^*)}} \times 100\%,
\end{align*}
where $\text{MSE}^{(\text{oracle}; \, \eta^*)}$ is 
the MSE of the oracle policy that uses the
true nuisances $\eta^*$.

\begin{figure}
\centering
\begin{subfigure}[b]{1\textwidth}
\centering
\includegraphics[scale=0.35]{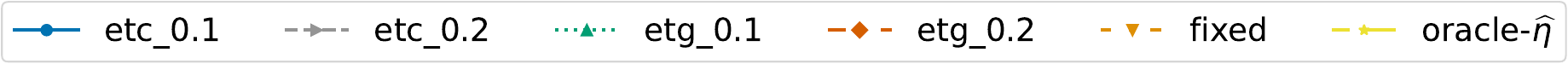}
\vspace{10px}
\end{subfigure}
\begin{subfigure}[b]{0.31\textwidth}
\centering
\includegraphics[scale=0.31]{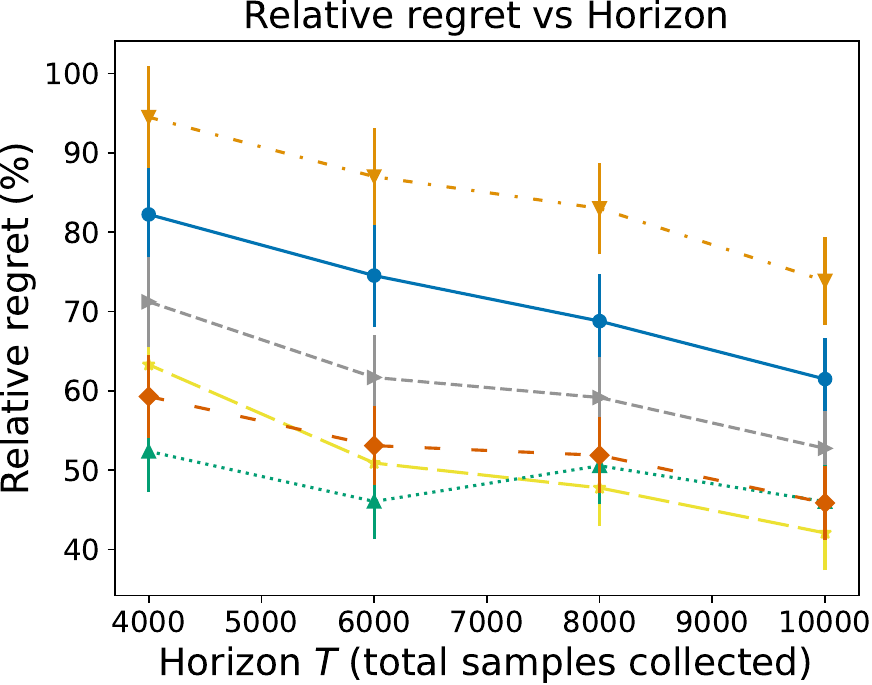}
\caption{Relative regret}
\label{fig:rff-iv-regret}
\end{subfigure}
\begin{subfigure}[b]{0.31\textwidth}
\centering
\includegraphics[scale=0.31]{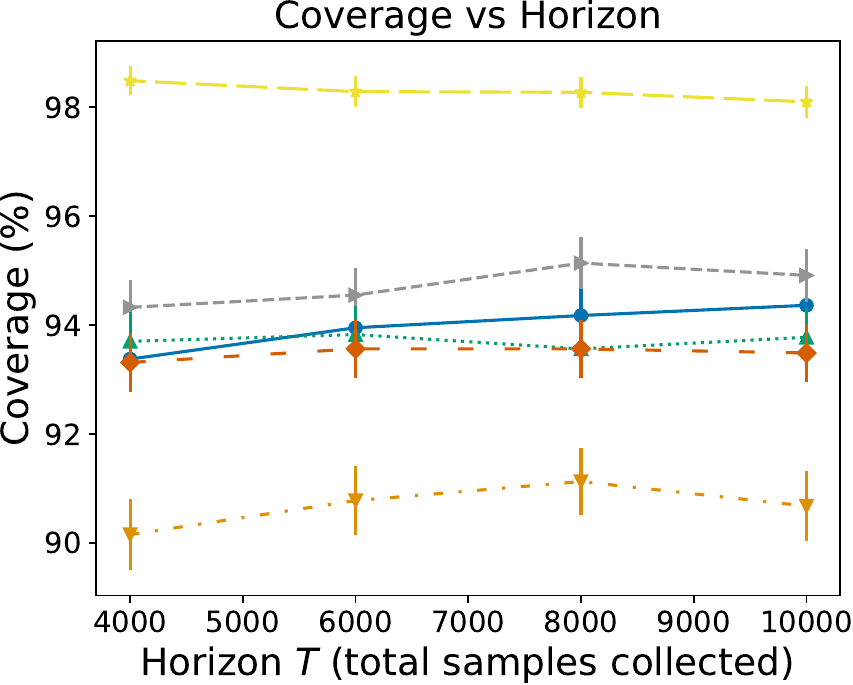}
\caption{Coverage}
\label{fig:rff-iv-coverage}
\end{subfigure}
\begin{subfigure}[b]{0.31\textwidth}
\centering
\includegraphics[scale=0.31]{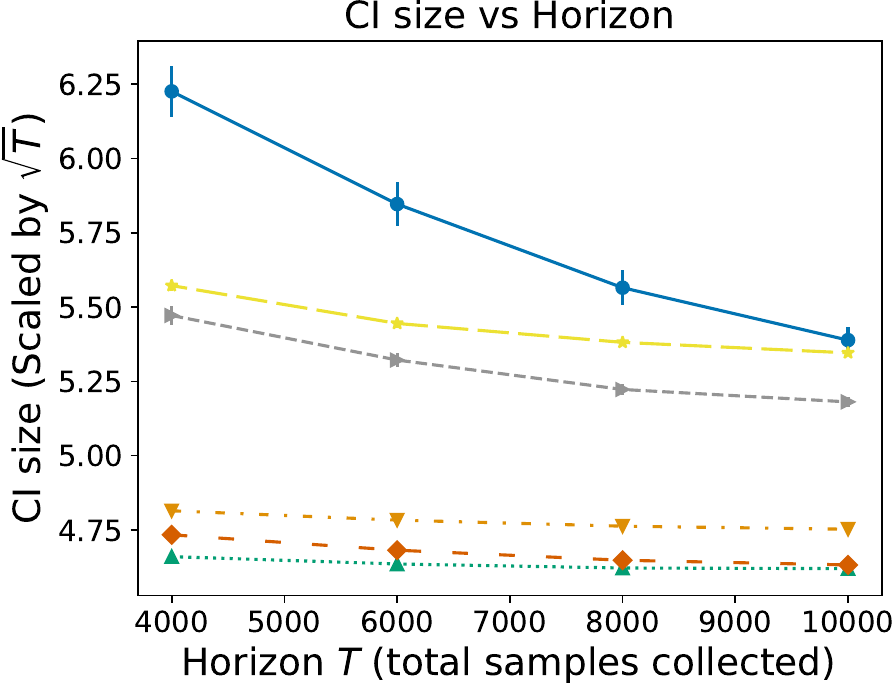}
\caption{CI size}
\label{fig:rff-iv-ci-size}
\end{subfigure}
\caption{Results for the two-sample IV LATE estimation task (Example~\ref{example:two-sample-late})
for a nonlinear causal model where MLPs are used for nuisance estimation
(error bars denote $95\%$ CIs).
In this case, significant bias is incurred due to nuisance estimation even
at large horizons.
The online data collection policies
outperform the fixed policy in terms of regret and coverage.
}
\label{fig:rff-iv-results}
\end{figure}

The labels $\text{etc}\_{x}$ and $\text{etg}\_{x}$
in the plots (Fig.~\ref{fig:rff-iv-results}) 
refer to the ETC and ETG policies with
exploration $e = x$.
For the ETG policies, 
we set the batch size $S$ such that
$S/(T(1 - e)) = 0.1$. 
Thus, the estimate of the oracle simplex $\widehat{k}_t$
is updated every $10\%$ of the horizon $T$ after exploration.
The label \emph{fixed} refers to a fixed policy that queries each
data source equally with $\kappa_T = [0.5, 0.5]^\top$.
We use \emph{oracle-}$\widehat{\eta}$ to denote
the oracle policy that uses estimated nuisance functions
(instead of the true nuisances $\eta^*$).
For computational reasons, we retrain the 
nuisance estimators in batches (rather than after every time step).

In terms of relative regret (Fig.~\ref{fig:rff-iv-regret}), we observe that even \emph{oracle-}$\widehat{\eta}$ has $\approx 50\%$ relative regret,
showing substantial bias due to nuisance estimation.
The online policies (ETC and ETG)
outperform the fixed policy,
with ETG having similar relative regret as \emph{oracle-}$\widehat{\eta}$.
Moreover, \emph{etg\_}$0.1$ outperforms \emph{etc\_}$0.1$,
demonstrating the advantage of repeatedly updating
the estimate of the oracle simplex.
In terms of coverage (Fig.~\ref{fig:rff-iv-coverage}), 
the online policies outperform the fixed policy,
and get close to the nominal coverage of $95\%$ at large horizons.
Finally, we observe that the size of the confidence intervals (Fig.~\ref{fig:rff-iv-ci-size})
for the \emph{etg} policies is significantly smaller than the 
confidence intervals (CIs) of \emph{etc} and
\emph{oracle-}$\widehat{\eta}$.

We also perform experiments for Examples~\ref{example:two-sample-late}, \ref{example:confounder-mediator}, and \ref{example:combine-observational-datasets}
with synthetic data generated from linear causal models 
(see Appendix~\ref{sec:apdx-expt-linear-models}),
observing that the ETC and ETG policies have lower relative regret
than the fixed policy in all cases.
Since we use correctly specified parametric nuisance estimators,
the bias due to nuisance estimation is 
substantially lower in this case, with the
ETG policy achieving nearly zero relative regret at large horizons.

\subsection{Real-world data} \label{sec:expr-real-world}

\paragraph{JTPA dataset.}
The National Job Training Partnership Act (JTPA) study examines the effect
of a job training program on future earnings
and has been analyzed in several prior works \citep{glynn2018front, donald2014testing, abadie2002instrumental, bloom1997benefits}.
The data follows the IV causal model (see Example~\ref{example:two-sample-late} and Fig.~\ref{fig:disjoint-iv-graph})
with a binary IV $Z$, binary treatment $X$, a vector of covariates $W$, 
and a real-valued outcome $Y$.
We use the dataset from \citet{glynnData2019},
which contains $N = 4384$ samples.
In this dataset, $Z$ indicates whether the individual was randomly offered training,
$X$ denotes program participation, $Y$ denotes future earnings,
and $W$ contains six covariates which
include variables like race and age.
We simulate the two-sample LATE estimation setting in Example~\ref{example:two-sample-late}
with $\DM = \left( \widehat{\PB}_{N}(W, Z, Y), \widehat{\PB}_{N}(W, Z, X) \right)$,
where $\widehat{\PB}_{N}$ denotes the empirical distribution over the $N$ samples.
The true LATE and oracle simplex
are computed on the full dataset 
with MLP-based nuisance estimators and
cross-fitting with $k = 2$ folds  \citep{chernozhukov2018double}, 
averaged over $2000$ runs.
We compare the scaled MSE ($T \times \text{MSE}$) of our proposed policies with a fixed policy that
queries both data sources uniformly with $\kappa_T = [0.5, 0.5]^\top$ 
(Fig.~\ref{fig:jtpa-data-mse}).
We observe that the MSE of the fixed policy is nearly twice as
large as the oracle policy.
By contrast, we see that the ETC and ETG policies match the
oracle policy at all horizons.

\begin{figure}
\centering
\begin{subfigure}[b]{1\textwidth}
\centering
\includegraphics[scale=0.35]{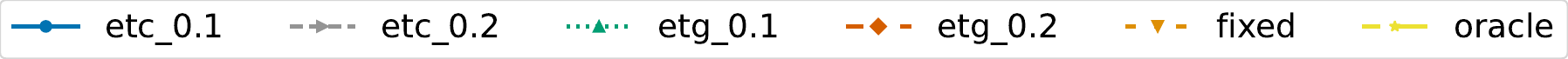}
\vspace{10px}
\end{subfigure}
\begin{subfigure}[b]{0.40\textwidth}
\centering
\includegraphics[scale=0.35]{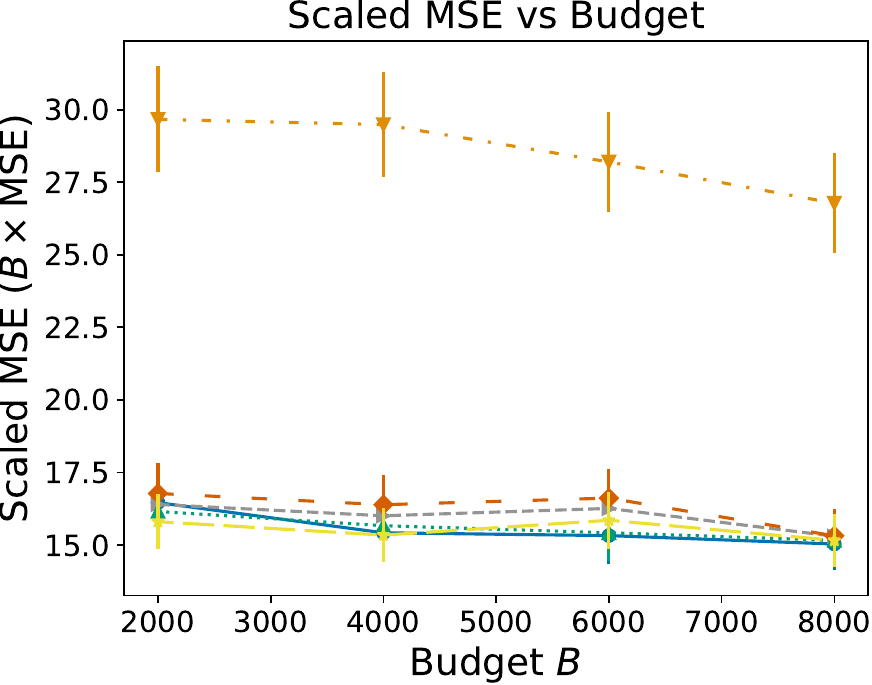}
\caption{JTPA data}
\label{fig:jtpa-data-mse}
\end{subfigure}
\hspace{10px}
\begin{subfigure}[b]{0.40\textwidth}
\centering
\includegraphics[scale=0.35]{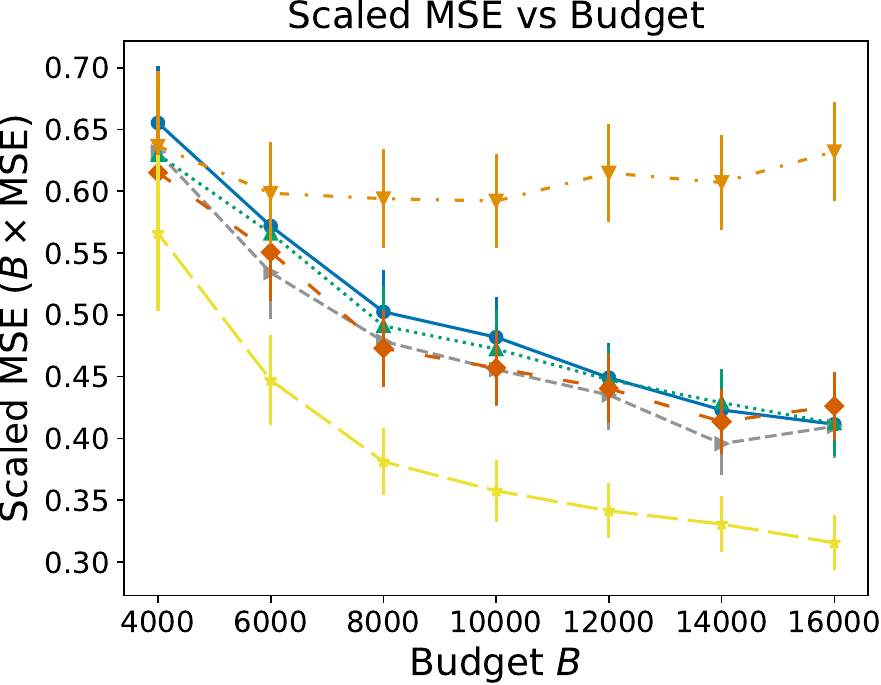}
\caption{COPD data}
\label{fig:copd-data-mse}
\end{subfigure}
\caption{Results on two real-world causal effect estimation tasks 
(error bars denote $95\%$ CIs).
In both cases, we observe that the online data collection policies
outperform the fixed policy (as the budget increases).}
\label{fig:real-dataset-results}
\end{figure}

\paragraph{COPD dataset.}
We test our strategies on datasets used to evaluate the effect
of chronic obstructive pulmonary disease (COPD) on the development
of herpes zoster (HZ) \citep{lin2014adjustment}.
We use the dataset released by \citet{yang2019combining},
which follows the setting of Example~\ref{example:combine-observational-datasets} 
(where an unconfounded dataset is combined with a confounded dataset).
The treatment $X \in \{0, 1\}$ and outcome $Y \in \{0, 1\}$ denote the presence of COPD and HZ, respectively.
We have two datasets:
(i) The \emph{validation} dataset with $N_{\text{val}} = 1148$ samples 
of $(U, W, X, Y)$ and
(ii) The \emph{main} dataset with $N_{\text{main}} = 42430$ samples 
of $(W, X, Y)$.
The covariates $W$ include variables like 
age, presence of liver disease, etc.
and $U$ includes variables like 
cigarette and 
alcohol consumption which are known to be important 
confounders 
(but are missing in the \emph{main} dataset).

To test our methods, we use $\DM = ( \widehat{\PB}_{\text{val}}(U, W, X, Y),\widehat{\PB}_{\text{main}}(W, X, Y))$,
where $\widehat{\PB}_{\text{val}}$ and $\widehat{\PB}_{\text{main}}$
denote the empirical distributions 
over the \emph{validation} and \emph{main} datasets.
We apply a cost structure 
of $c = [4, 1]$, i.e., a query to the \emph{validation} dataset 
costs four times that of the \emph{main} dataset.
The dataset contains the
propensity scores (not the covariates). 
Since propensity scores suffice for removing
confounding bias \citep[Sec.~15.2]{hernan2010causal},
we use them as covariates
with linear nuisance estimators.
The true ATE and oracle simplex 
are computed on the full dataset using cross-fitting with $k = 2$ folds.
We compare the regret of our proposed adaptive policies
to that of a fixed policy that only queries the first data
source with $\kappa_T = [1, 0]^\top$ (Fig.~\ref{fig:copd-data-mse}).
We observe that both the fixed and online policies have
higher MSE than the oracle at all budgets.
At smaller budget sizes, we do not observe a significant
difference with the fixed policy.
However, as the budget increases,
we observe that the online policies substantially outperform
the fixed policy and are much closer to the oracle.

\section{Conclusion}
\label{sec:conc}
Our work presents a semiparametric framework 
for statistical inference
that accounts for sequential data collection decisions.
Aiming to efficiently estimate a given target parameter,
we present two online data collection policies, OMS-ETC and OMS-ETG,
and prove that both achieve zero asymptotic 
regret relative to the oracle policy.
One limitation of our theoretical results is that 
they do not distinguish between OMS-ETC and OMS-ETG.
Overcoming this limitation would potentially
require higher-order or finite-sample analysis.
Another limitation is that our results hold
\emph{pointwise} rather than uniformly in
the probability space,
which could lead to different optimal rates
(e.g., in multi-armed bandits, 
better minimax optimal rates are achieved
when considering pointwise
or instance-dependent bounds \citep[Ch.~16]{lattimore2020bandit}).
In future work, we aim to extend our work to
the \emph{contextual} selection setting
where the data source can be selected based on
variables revealed in the same time step.
We also aim to extend our analysis
beyond scalar parameters to
targets that are multivariate or nonparametric
as well as parameters identified by
conditional moment restrictions. 




\section*{Acknowledgments}
We gratefully acknowledge the NSF (FAI 2040929 and IIS2211955), UPMC, Highmark Health, Abridge, Ford Research, Mozilla, the PwC Center, Amazon AI, JP Morgan Chase, the Block Center, the Center for Machine Learning and Health, and the CMU Software Engineering Institute (SEI) via Department of Defense contract FA8702-15-D-0002, for their generous support of ACMI Lab’s research.
We thank Ian Waudby-Smith and seminar participants
at Indiana University for helpful comments.

\bibliographystyle{abbrvnat}

\bibliography{refs}

\clearpage

\appendix
\section{Proof of consistency}
\label{sec:apdx-proof-consistency}
\begin{proposition}[MDS SLLN {\citep[Theorem~1]{csorgHo1968strong}}]\label{prop:apdx-mds-slln-general}
Let $\{ X_n, n \in \mathbb{N} \}$ be a martingale difference sequence (MDS) 
and $b_1 < b_2 < \hdots \rightarrow \infty$ be a non-decreasing sequence 
such that $\sum_{i=1}^{\infty} b^{-2}_i \EB[X^2_i] < \infty$.
Then $b^{-1}_n \sum_{i=1}^n X_i \ConvAS 0$.
\end{proposition}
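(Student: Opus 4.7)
}

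The plan is to combine two classical tools: the $L^2$-bounded martingale convergence theorem and Kronecker's lemma. The key reduction is to pass from the partial sums $S_n = \sum_{i=1}^n X_i$ to the weighted partial sums $M_n := \sum_{i=1}^n X_i / b_i$, which under the given summability hypothesis will be an $L^2$-bounded martingale.

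First, I would verify that $\{M_n\}$ is itself a martingale with respect to the natural filtration $\FM_n = \sigma(X_1, \ldots, X_n)$. Since the weights $b_i$ are deterministic (hence $\FM_{n-1}$-measurable) and $\EB[X_n \mid \FM_{n-1}] = 0$ by the MDS hypothesis, we have $\EB[X_n / b_n \mid \FM_{n-1}] = 0$, so the increments $X_n/b_n$ form an MDS and $M_n$ is a martingale. Next, using the orthogonality of MDS increments in $L^2$, I would compute
\begin{align*}
    \EB[M_n^2] = \sum_{i=1}^{n} \EB\!\left[ (X_i / b_i)^2 \right] = \sum_{i=1}^{n} b_i^{-2} \EB[X_i^2].
\end{align*}
By the assumption $\sum_{i=1}^\infty b_i^{-2} \EB[X_i^2] < \infty$, this sequence is uniformly bounded in $n$, so $\sup_n \EB[M_n^2] < \infty$. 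Hence $\{M_n\}$ is an $L^2$-bounded martingale, and Doob's martingale convergence theorem yields that $M_n$ converges almost surely (and in $L^2$) to some finite random variable $M_\infty$.

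Finally, I would invoke Kronecker's lemma, which states that if a real sequence $a_1, a_2, \ldots$ has $\sum_{i=1}^\infty a_i$ convergent and $b_n$ is strictly increasing to $+\infty$, then $b_n^{-1} \sum_{i=1}^n b_i\, a_i \to 0$. Applying this pathwise on the almost-sure event $\{M_n \text{ converges}\}$ with $a_i = X_i / b_i$ gives
\begin{align*}
    \frac{1}{b_n} \sum_{i=1}^{n} X_i = \frac{1}{b_n} \sum_{i=1}^{n} b_i \cdot \frac{X_i}{b_i} \ConvAS 0,
\end{align*}
which is the desired conclusion.

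The only subtle step is the a.s.\ convergence of the weighted martingale $M_n$; everything else is bookkeeping. I do not anticipate a real obstacle, as both the martingale convergence theorem and Kronecker's lemma are off-the-shelf tools, and the hypothesis $\sum b_i^{-2}\EB[X_i^2] < \infty$ is precisely tailored to produce the $L^2$-boundedness needed to invoke the former.
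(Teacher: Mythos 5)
Your proposal is correct: the weighted-martingale argument ($M_n=\sum_{i\le n}X_i/b_i$ is an $L^2$-bounded martingale by orthogonality of MDS increments, hence converges a.s.\ by Doob, and Kronecker's lemma then yields $b_n^{-1}\sum_{i\le n}X_i\to 0$) is exactly the standard proof of this result. The paper itself offers no proof, citing Cs\"org\H{o} (1968, Theorem~1), and your argument is essentially the one underlying that cited theorem, so there is nothing to reconcile beyond the trivial remark that $b_n>0$ (at least eventually) is needed to apply Kronecker's lemma, which the monotone divergence of $b_n$ guarantees.
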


\begin{cor}\label{cor:apdx-mds-slln}
Let $\{ X_n, n \in \mathbb{N} \}$ be a MDS with
$\sup_{n} \EB[X^2_n] < \infty$. 
Then $n^{-1} \sum_{i=1}^n X_i \ConvAS 0$.
\end{cor}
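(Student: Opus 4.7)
The plan is to apply Proposition~\ref{prop:apdx-mds-slln-general} directly, with the specific choice $b_n = n$. This sequence is strictly increasing and diverges to infinity, so it satisfies the basic requirements of the proposition. All that remains is to verify the summability hypothesis $\sum_{i=1}^\infty b_i^{-2} \EB[X_i^2] < \infty$.

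First, I would note that by the assumption $\sup_n \EB[X_n^2] < \infty$, there exists a finite constant $C$ such that $\EB[X_i^2] \leq C$ for every $i \in \NB$. Plugging in $b_i = i$, I would then bound
\begin{equation*}
    \sum_{i=1}^{\infty} b_i^{-2} \EB[X_i^2] \;=\; \sum_{i=1}^{\infty} \frac{\EB[X_i^2]}{i^2} \;\leq\; C \sum_{i=1}^{\infty} \frac{1}{i^2} \;<\; \infty,
\end{equation*}
since the Basel-type series $\sum i^{-2}$ converges. Thus the hypothesis of Proposition~\ref{prop:apdx-mds-slln-general} is satisfied.

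Invoking the proposition yields $b_n^{-1} \sum_{i=1}^n X_i = n^{-1} \sum_{i=1}^n X_i \ConvAS 0$, which is the claim. There is no real obstacle here: the corollary is essentially a specialization of the general MDS SLLN obtained by picking the natural weights $b_n = n$ and using uniform boundedness of second moments to trivially control the weighted sum. The entire proof is two lines once Proposition~\ref{prop:apdx-mds-slln-general} is assumed.
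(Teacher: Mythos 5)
Your proposal is correct and is exactly the paper's argument: the paper also proves the corollary by invoking Proposition~\ref{prop:apdx-mds-slln-general} with $b_i = i$, and your verification of the summability condition via $\sup_n \EB[X_n^2] < \infty$ and $\sum_i i^{-2} < \infty$ just makes explicit the step the paper leaves implicit.
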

\begin{proof}
Apply Prop.~\ref{prop:apdx-mds-slln-general} with $b_i= i$.
\end{proof}

\begin{proposition}[Convergence of moments {\citep[Page~338]{billingsley1995probability}}]\label{prop:apdx-conv-of-moments}
Let $r$ be a positive integer. If $X_n \ConvDist X$ and
$\sup_n \EB[|X_n|^{r + \epsilon}] < \infty$, where $\epsilon > 0$,
then $\EB[|X|^r] < \infty$, and $\EB[X_n^r] \to \EB[X^r]$.
\end{proposition}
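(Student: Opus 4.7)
The plan is to prove this classical convergence-of-moments result via uniform integrability, in three steps. The overall strategy connects the hypothesis $\sup_n \EB[|X_n|^{r+\epsilon}] < \infty$ (which controls the tails of the $X_n$ uniformly in $n$) with the distributional convergence $X_n \ConvDist X$ to upgrade it to convergence of the $r$-th moments.

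First, I would show that the family $\{|X_n|^r\}_{n \geq 1}$ is uniformly integrable. This uses a standard Markov-type bound: for any threshold $K > 0$,
\[
\EB\!\left[|X_n|^r \cdot \mathbf{1}\{|X_n|^r > K\}\right] \leq K^{-\epsilon/r} \cdot \EB\!\left[|X_n|^{r+\epsilon}\right] \leq K^{-\epsilon/r} \cdot \sup_n \EB[|X_n|^{r+\epsilon}],
\]
and the right-hand side tends to zero as $K \to \infty$ uniformly in $n$, by the hypothesis.

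Second, I would establish finiteness of $\EB[|X|^r]$ via the continuous mapping theorem and Fatou's lemma. Since $x \mapsto |x|^r$ is continuous, $|X_n|^r \ConvDist |X|^r$. Applying Fatou's lemma along the almost-sure copies provided by the Skorokhod representation theorem (or equivalently the portmanteau theorem applied to the non-negative continuous function $|x|^r$) gives $\EB[|X|^r] \leq \liminf_n \EB[|X_n|^r]$, and the right-hand side is finite by Jensen's inequality applied to $\EB[|X_n|^{r+\epsilon}]^{r/(r+\epsilon)}$.

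Third, I would conclude $\EB[X_n^r] \to \EB[X^r]$ by combining Steps 1 and 2. Apply Skorokhod's representation theorem to the sequence $X_n^r$: on a suitable probability space there exist copies $\widetilde{X}_n^r, \widetilde{X}^r$ with the same marginal laws, such that $\widetilde{X}_n^r \ConvAS \widetilde{X}^r$. Uniform integrability depends only on the laws, so it transfers to the tilde copies. Vitali's convergence theorem (almost-sure convergence plus uniform integrability implies $L^1$ convergence) then gives $\EB[\widetilde{X}_n^r] \to \EB[\widetilde{X}^r]$, which is the desired conclusion after unwinding the representation. The main obstacle is precisely this bridging step, because weak convergence alone does not imply convergence of unbounded functionals; the $(r{+}\epsilon)$-moment hypothesis is exactly the extra ingredient needed to rule out mass escaping to infinity, and Vitali (via Skorokhod) is the clean tool to exploit it.
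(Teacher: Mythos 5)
Your proposal is correct; the paper itself offers no proof of this proposition, citing it directly from Billingsley (1995, p.~338), and your argument (uniform integrability of $\{|X_n|^r\}$ via the Markov-type bound, finiteness of $\EB[|X|^r]$ by Fatou, and convergence of moments via Skorokhod representation plus Vitali's theorem) is essentially the standard textbook proof given in that reference. The only cosmetic point is in Step 3: it is cleaner to apply Skorokhod to $X_n \ConvDist X$ and then use continuity of $x \mapsto x^r$, though applying it to $X_n^r \ConvDist X^r$ as you do is also valid.
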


\begin{lemma}\label{lemma:avg-eta-convprob}
Let $(X_t)_{t=1}^{\infty}$ be a sequence of non-negative random variables 
with (i) $X_t \ConvProb 0$ and 
(ii) $\exists \delta > 0$  such that $\sup_t \EB[ X_t^{1+\delta}] < \infty$. 
Then $T^{-1} \sum_{t=1}^T X_t \ConvProb 0$.
\end{lemma}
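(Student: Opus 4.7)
The plan is to reduce the problem to the $L^1$ convergence $\EB[X_t] \to 0$ and then exploit non-negativity together with Markov's inequality. Since each $X_t \geq 0$, we have
\begin{align*}
    \PB\!\left( T^{-1} \sum_{t=1}^T X_t > \epsilon \right) \leq \frac{1}{\epsilon} \EB\!\left[ T^{-1} \sum_{t=1}^T X_t \right] = \frac{1}{\epsilon T} \sum_{t=1}^T \EB[X_t],
\end{align*}
so it suffices to show that $\EB[X_t] \to 0$ and then invoke the Cesàro mean theorem for deterministic sequences.

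The key step is upgrading convergence in probability to convergence in $L^1$. I would do this via uniform integrability: condition~(ii), $\sup_t \EB[X_t^{1+\delta}] < \infty$, implies $\{X_t\}$ is uniformly integrable, since for any $K > 0$,
\begin{align*}
    \EB[X_t \mathbbm{1}\{X_t > K\}] \leq K^{-\delta} \EB[X_t^{1+\delta}] \leq K^{-\delta} \sup_s \EB[X_s^{1+\delta}],
\end{align*}
which tends to $0$ as $K \to \infty$, uniformly in $t$. Combined with $X_t \ConvProb 0$, the Vitali convergence theorem (or the standard fact that convergence in probability plus uniform integrability implies $L^1$ convergence) yields $\EB[X_t] \to 0$. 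Alternatively, one can obtain the same conclusion by applying Proposition~\ref{prop:apdx-conv-of-moments} with $r = 1$ along any subsequence.

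Having established $\EB[X_t] \to 0$, the Cesàro mean theorem gives $T^{-1} \sum_{t=1}^T \EB[X_t] \to 0$, and plugging this into the Markov bound above shows $T^{-1} \sum_{t=1}^T X_t \ConvProb 0$. I do not anticipate a serious obstacle here; the only subtle point is confirming that the $(1+\delta)$-moment bound yields uniform integrability (rather than, say, an $L^2$ bound), but this is a standard consequence of the Markov-type truncation estimate displayed above.
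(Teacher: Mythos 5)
Your proposal is correct and follows essentially the same route as the paper: Markov's inequality applied to the nonnegative average, followed by showing $\EB[X_t] \to 0$ from conditions (i)–(ii) (the paper cites its convergence-of-moments result, Prop.~\ref{prop:apdx-conv-of-moments}, which is the same uniform-integrability fact you verify directly), and then a Ces\`aro argument. Your explicit truncation bound is just a spelled-out version of the step the paper delegates to that proposition, so there is no substantive difference.
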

\begin{proof}
By Markov's inequality, for any $\epsilon > 0$,
\begin{align}
    \PB\left( \frac{1}{T} \sum_{t=1}^T X_t > \epsilon \right) &\leq \frac{\sum_{t=1}^T \EB[X_t]}{T \epsilon} \nonumber \\
        &\leq \frac{ o(T) }{T\epsilon} = o(1), \label{eq:lemma-convprob-o1}
\end{align}
where (\ref{eq:lemma-convprob-o1}) follows because $\EB[\| \hEta - \eta^* \|] = o(1)$ 
(by Conditions~(i, ii) and Prop.~\ref{prop:apdx-conv-of-moments}).
\end{proof}

\begin{lemma}[ULLN]\label{lemma:apdx-uniform-conv-prob-semiparametric}
Suppose that $a_t(\theta, \eta) := a(X_t; \theta, \eta)$ satisfies Property~\ref{property:ulln}.
Let $a_{*}(\theta, \eta) = \EB[a(X_t; \theta, \eta)]$ 
and $s_t \in \{ 0, 1 \}$ be a $H_{t-1}$-measurable 
binary random variable. 
Then the following uniform law of large numbers (ULLN) holds:
\begin{align*}
    \sup_{\theta \in \Theta} \left| \frac{1}{T} \sum_{t=1}^{T} s_t
    \left\{ a_t(\theta, \hEta)
     - a_*(\theta, \eta^*)
    \right\} \right| \ConvProb 0.
\end{align*}
\end{lemma}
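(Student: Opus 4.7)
The plan is to decompose the weighted partial sum into a nuisance-approximation term and a standard uniform law term evaluated at the true nuisance:
\begin{align*}
    \frac{1}{T}\sum_{t=1}^T s_t \{ a_t(\theta, \hEta) - a_*(\theta, \eta^*) \} = I_1(\theta) + I_2(\theta),
\end{align*}
where $I_1(\theta) := T^{-1} \sum_{t=1}^T s_t \{ a_t(\theta, \hEta) - a_t(\theta, \eta^*) \}$ and $I_2(\theta) := T^{-1} \sum_{t=1}^T s_t \{ a_t(\theta, \eta^*) - a_*(\theta, \eta^*) \}$. To convert the $\sup$ over $\Theta$ into a finite maximum, I cover the compact parameter space $\Theta$ (Assumption~\ref{assump:standard-gmm}(b)) with a finite $\epsilon$-net $\{\theta_1, \ldots, \theta_K\}$. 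The Lipschitz-in-$\theta$ bound from Property~\ref{property:ulln}(ii), which holds uniformly in $\eta$, implies that the oscillation of the summands between any $\theta$ and its closest net point is at most $2 \epsilon L(X_t)$, so the chaining error is bounded uniformly by $2\epsilon \cdot T^{-1} \sum_t L(X_t) = \bigOp(\epsilon)$ via Markov and $\EB[L(X_t)] \leq \sqrt{L}$. It therefore suffices to show $I_1(\theta_k), I_2(\theta_k) \ConvProb 0$ pointwise at each net point and then send $\epsilon \to 0$.

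For $I_1(\theta_k)$, the crucial point is that $s_t$ and $\hEta$ are $H_{t-1}$-measurable whereas $X_t$ is an i.i.d.\ draw independent of $H_{t-1}$. Conditioning on $H_{t-1}$ and invoking the $L^2$-in-$\eta$ Lipschitz bound of Property~\ref{property:ulln}(ii), Jensen's inequality gives
\begin{align*}
    \EB\bigl[\, |a_t(\theta_k, \hEta) - a_t(\theta_k, \eta^*)| \,\bigm|\, H_{t-1} \bigr] \leq \sqrt{L}\, \| \hEta - \eta^* \|.
\end{align*}
Properties~\ref{property:ulln}(iii, iv) yield $\EB[\|\hEta - \eta^*\|] \to 0$ (convergence in probability together with uniform integrability from the $(2+\delta)$-moment bound; alternatively an application of Lemma~\ref{lemma:avg-eta-convprob}), so iterated expectation gives $\EB|I_1(\theta_k)| \to 0$ and Markov's inequality yields $I_1(\theta_k) \ConvProb 0$. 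A union bound over the finite net extends this to $\max_k |I_1(\theta_k)| \ConvProb 0$.

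For $I_2(\theta_k)$, the summands $Z_t := s_t \{ a_t(\theta_k, \eta^*) - a_*(\theta_k, \eta^*) \}$ form a martingale difference sequence with respect to the filtration generated by $H_t$: $s_t$ is $H_{t-1}$-measurable, and the independence of $X_t$ from $H_{t-1}$ gives $\EB[ a_t(\theta_k, \eta^*) \mid H_{t-1} ] = a_*(\theta_k, \eta^*)$. Property~\ref{property:ulln}(i) together with $|s_t| \leq 1$ gives $\sup_t \EB[Z_t^2] < \infty$, so Corollary~\ref{cor:apdx-mds-slln} yields $I_2(\theta_k) \ConvAS 0$, and a union bound over the net handles uniformity.

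The main subtlety is bookkeeping the interaction between the data-dependent nuisance estimator $\hEta$ and the conditional expectations at time $t$; this is exactly what the split into $I_1$ and $I_2$ enables, and is why it is critical that $\hEta$ depends only on $H_{t-1}$ (Assumption~\ref{assumption:nuisance-common}(b)). Combining the $\bigOp(\epsilon)$ chaining error with pointwise convergence at each net point, and letting $\epsilon \to 0$ (formally, choosing $\epsilon$ small depending on the target probability level), completes the argument.
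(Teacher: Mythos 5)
Your proposal is correct and follows essentially the same route as the paper: an $\epsilon$-net over the compact $\Theta$ using the Lipschitz bound in Property~\ref{property:ulln}(ii), the MDS SLLN (Corollary~\ref{cor:apdx-mds-slln}) for the fixed-nuisance term, and control of the nuisance-error term via the conditional $L^2$ bound together with $\EB\|\hEta - \eta^*\| \to 0$ (Lemma~\ref{lemma:avg-eta-convprob}). The only cosmetic difference is that where the paper splits the nuisance-error term into a centered martingale piece and a conditional-bias piece, you bound it directly in $L^1$ by conditional Jensen plus Markov, which is a harmless (slightly leaner) consolidation of the same estimates.
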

\begin{proof}
We prove the ULLN by a 
covering number argument
\footnote{We use a covering number argument for convenience
but it may be possible to weaken the conditions of Property~\ref{property:ulln} by using the more general martingale ULLN in \citet{rakhlin2015sequential}. }
\citep[Lemma~1]{tauchen1985diagnostic}.
Let $\left\{ \theta_k \right\}_{k=1}^{K}$ be a minimal $\delta$-cover of $\Theta$ and
$\BM_{\delta}(\theta_k)$ denotes the $\delta$-ball around $\theta_k$. 
By compactness of $\Theta$ (Assumption~\ref{assump:standard-gmm}b), $K$ is finite.
For the rest of the proof,
for any $\theta \in \Theta$, 
let $\theta_k$ denote the element of the $\delta$-cover
such that $\theta \in \BM_{\delta}(\theta_k)$.

By the triangle inequality,
\begin{align}
    \sup_{\theta \in \Theta} & \left| \frac{1}{T} \sum_{t=1}^{T}
    s_t \left\{ a_t(\theta, \hEta)
     - a_*(\theta, \eta^*) \right\} \right| \nonumber \\
        &\leq \begin{aligned}[t]
            &  \sup_{\theta} \left| \frac{1}{T} \sum_{t=1}^{T} s_t \left\{ a_t(\theta, \hEta) - a_t(\theta_k, \hEta) \right\} \right| +
            \max_{k \in [K]} \left| \frac{1}{T} \sum_{t=1}^{T} s_t \left\{ a_t(\theta_k, \hEta) - a_{*}(\theta_k, \eta^*) \right\}  \right| + \\
            & \max_{k \in [K]} \frac{1}{T} \sum_{t=1}^T s_t \left| a_{*}(\theta_k, \eta^*) - a_{*}(\theta, \eta^*) \right|
        \end{aligned} \nonumber \\
        &\leq \begin{aligned}[t]
            & \underbrace{\frac{1}{T} \sum_{t=1}^{T} \sup_{\theta} \left| a_t(\theta, \hEta) - a_t(\theta_k, \hEta) \right|}_{\text{T1}} +
            \underbrace{\max_{k \in [K]} \left| \frac{1}{T} \sum_{t=1}^{T} s_t \left\{ a_t(\theta_k, \hEta) - a_{*}(\theta_k, \eta^*) \right\}  \right|}_{\text{T2}} + \\
            & \underbrace{\max_{k \in [K]} \left| a_{*}(\theta_k, \eta^*) - a_{*}(\theta, \eta^*) \right|}_{\text{T3}},
        \end{aligned} \label{eq:apdx-sup-decomp-ineq}
\end{align}
where (\ref{eq:apdx-sup-decomp-ineq}) follows because $s_t \in \{0, 1\}$.
Next, we show that all three terms are $\litOp(1)$. 

\paragraph{Term (T1).}
\begin{align}
    \frac{1}{T} \sum_{t=1}^{T} \sup_{\theta} \left| a_t(\theta, \hEta) - a_t(\theta_k, \hEta) \right|
        &=  \frac{1}{T} \sum_{t=1}^{T} \delta \left| L(X_t) \right| \label{eq:apdx-term-t1-a} \\
        &= \delta \EB[L(X_t)] + \litOas(1) \label{eq:apdx-term-t1-c} \\
        &\leq \sqrt{L} \delta + \litOas(1), \label{eq:apdx-term-t1-b}
\end{align}
where (\ref{eq:apdx-term-t1-a}) and (\ref{eq:apdx-term-t1-b})
follow by Property~\ref{property:ulln}(ii) (Lipschitzness) 
and (\ref{eq:apdx-term-t1-c}) follows by Corollary~\ref{cor:apdx-mds-slln}.
Therefore, for any $\epsilon > 0$ and a small enough
$\delta$, the term \emph{T1} will be less than $\epsilon$ eventually.

\paragraph{Term (T2).} 
For any $k \in [K]$, by the triangle inequality,
\begin{align*}
    & \left| \frac{1}{T} \sum_t s_t \left[ a_t(\theta_k, \hEta) - a_{*}(\theta_k, \eta^*) \right] \right| \\
    &\leq \begin{aligned}[t]
        & \underbrace{\left| \frac{1}{T} \sum_t s_t \left[ a_t(\theta_k, \eta^*) - a_{*}(\theta_k, \eta^*) \right] \right|}_{:= a} + \\
        & \underbrace{\left| \frac{1}{T} \sum_t s_t \left[ a_t(\theta_k, \hEta) - a_t(\theta_k, \eta^*) - \EB_{t-1}[a_t(\theta_k, \hEta) - a_t(\theta_k, \eta^*) ]  \right] \right|}_{:= b} + \\
        & \underbrace{\left| \frac{1}{T} \sum_t s_t \EB_{t-1}[a_t(\theta_k, \hEta) - a_t(\theta_k, \eta^*)] \right|}_{:= c}.
    \end{aligned}
\end{align*}
The term \emph{a} above is a martingale difference sequence (MDS). 
By Property~\ref{property:ulln}(i) and Corollary~\ref{cor:apdx-mds-slln}, this term is $\litOas(1)$.
The term \emph{b} above is also a MDS. 
For $D_t = a_t(\theta_k, \hEta) - a_t(\theta_k, \eta^*)$,
\begin{align}
    \EB[s_t ( D_t - \EB_{t-1}[D_t] )^2] &\leq \EB[( D_t - \EB_{t-1}[D_t] )^2] \label{eq:apdx-ep-cons-a} \\
    &\leq \EB[D_t^2] \nonumber \\
        &\leq \EB[ \EB_{t-1}[D_t^2] ] \nonumber \\
        &\leq L \EB[ \| \hEta - \eta^* \|^2] \label{eq:apdx-ep-cons-b} \\
        &< \infty, \label{eq:apdx-ep-cons-c}
\end{align}
where (\ref{eq:apdx-ep-cons-a}) follows because $s_t \in \{0, 1\}$,
(\ref{eq:apdx-ep-cons-b}) by Property~\ref{property:ulln}(ii),
and (\ref{eq:apdx-ep-cons-c}) by Property~\ref{property:ulln}(iv).
Thus, by Corollary~\ref{cor:apdx-mds-slln}, this term is also $\litOas(1)$.
For term \emph{c}, we have
\begin{align}
    \left| \frac{1}{T} \sum_t s_t \EB_{t-1}[a_t(\theta_k, \hEta) - a_t(\theta_k, \eta^*)] \right|
        &\leq \frac{1}{T} \sum_t \EB_{t-1} [| a_t(\theta_k, \hEta) - a_t(\theta_k, \eta^*) |] \label{eq:apdx-term-t2-a0} \\
        &\leq \frac{1}{T} \sum_t \sqrt{\EB_{t-1} [| a_t(\theta_k, \hEta) - a_t(\theta_k, \eta^*) |^2]} \nonumber \\
        &\leq \frac{\sqrt{L}}{T} \sum_t \| \hEta - \eta^* \| \label{eq:apdx-term-t2-3-a} \\
        &= o_p(1) \label{eq:apdx-term-t2-3-b},
\end{align}
where (\ref{eq:apdx-term-t2-a0}) follows because $s_t \in \{0, 1\}$,
(\ref{eq:apdx-term-t2-3-a}) by Property~\ref{property:ulln}(ii),
and (\ref{eq:apdx-term-t2-3-b}) by Lemma~\ref{lemma:avg-eta-convprob}.
This allows us to show that term \emph{T2} is $\litOp(1)$:
\begin{align*}
     \max_{k \in [K]} \left| \frac{1}{T} \sum_{t=1}^{T} s_t \left\{ a_t(\theta_k, \hEta) - a_{*}(\theta_k, \eta^*) \right\}  \right|
     &\leq \sum_{k \in [K]} \left| \frac{1}{T} \sum_{t=1}^{T} s_t \left\{ a_t(\theta_k, \hEta) - a_{*}(\theta_k, \eta^*) \right\}  \right| \\
     &= \litOp(1),
\end{align*}
where the last line follows because $K$ is finite.

\paragraph{Term (T3).} By Property~\ref{property:ulln}(ii),
\begin{align*}
    \max_{k \in [K]} \left| a_{*}(\theta_k, \eta^*) - a_{*}(\theta, \eta^*) \right| \leq \EB[L(X_t)] \delta
    \leq \sqrt{L} \delta.
\end{align*}
Thus, for any $\epsilon > 0$ and a small enough
$\delta$, the term \emph{T3} will be less than $\epsilon$.
\end{proof}

\begin{proposition}\label{prop:apdx-ulln-product-fn}
Suppose that 
(i) $f_t(\theta, \eta) := f(X_t; \theta, \eta)$ and $g_t(\theta, \eta)  := g(X_t; \theta, \eta)$ satisfy Property~\ref{property:ulln}; and
(ii) (Boundedness) $\forall (\theta, \eta), |f_t(\theta, \eta)|_{\infty} \leq H$ and 
$|g_t(\theta, \eta)|_{\infty} \leq H$
for some constant $H < \infty$.
Then, $f_t(\theta, \eta) g_t(\theta, \eta)$ satisfies Property~\ref{property:ulln}.
\end{proposition}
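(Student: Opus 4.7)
The plan is to verify each of the four conditions of Property~\ref{property:ulln} for the product $h_t(\theta, \eta) := f_t(\theta, \eta) g_t(\theta, \eta)$, treating conditions (i), (iii), (iv) as essentially immediate and concentrating the work on the Lipschitz-type condition (ii). Conditions (iii) (nuisance consistency, $\| \hEta - \eta^* \| \ConvProb 0$) and (iv) (uniform $(2+\delta)$-moment bound on $\| \eta - \eta^* \|$) are properties of the nuisance estimator alone, not of the index function, so they transfer from the hypotheses on $f_t, g_t$ to $h_t$ without modification. Condition (i), $\EB[|h_t(\theta, \eta^*)|^{2+\delta}] < \infty$, is trivial from the boundedness hypothesis, since $|h_t| \le H^2$ implies that all moments of $h_t(\theta, \eta^*)$ are bounded by $H^{2(2+\delta)}$.

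For the Lipschitzness in $\theta$, I would use the standard add-and-subtract trick: write
\begin{align*}
|h_t(\theta', \eta) - h_t(\theta, \eta)| &\le |g_t(\theta', \eta)|\,|f_t(\theta', \eta) - f_t(\theta, \eta)| + |f_t(\theta, \eta)|\,|g_t(\theta', \eta) - g_t(\theta, \eta)| \\
&\le H\bigl(L_f(X_t) + L_g(X_t)\bigr)\,\|\theta' - \theta\|,
\end{align*}
using the uniform bound $H$ and the Lipschitz envelopes $L_f, L_g$ guaranteed by Property~\ref{property:ulln}(ii) for $f_t$ and $g_t$. Defining $L_h(X_t) := H(L_f(X_t) + L_g(X_t))$, the bound $\EB[L_h^2(X_t)] \le 2H^2(\EB[L_f^2(X_t)] + \EB[L_g^2(X_t)]) \le 2H^2(L_f + L_g)$ follows from $(a+b)^2 \le 2a^2 + 2b^2$ and the hypotheses on $L_f, L_g$.

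The nuisance-Lipschitz part of condition (ii) is handled by the same add-and-subtract decomposition, except that the boundedness hypothesis allows us to move from $L^{\infty}$ control of one factor into an $L^2$ bound on the product difference. Explicitly, adding and subtracting $f_t(\theta, \eta) g_t(\theta, \eta')$ and applying $(a+b)^2 \le 2a^2 + 2b^2$ gives
\begin{align*}
\EB\bigl[(h_t(\theta, \eta) - h_t(\theta, \eta'))^2\bigr]
&\le 2\EB\bigl[f_t^2(\theta, \eta)(g_t(\theta, \eta) - g_t(\theta, \eta'))^2\bigr] + 2\EB\bigl[g_t^2(\theta, \eta')(f_t(\theta, \eta) - f_t(\theta, \eta'))^2\bigr] \\
&\le 2H^2\bigl(L_f + L_g\bigr)\|\eta - \eta'\|^2,
\end{align*}
where in the last line I apply the uniform bound $H$ on $f_t$ and $g_t$, then use Property~\ref{property:ulln}(ii) for each factor. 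Taking a single constant $L := \max\{2H^2(L_f + L_g),\, 2H^2(L_f + L_g)\}$ (i.e., the maximum of the constants arising from the two parts of condition (ii)) completes the verification.

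There is no real obstacle here: the only place the boundedness hypothesis is actually used is to bound the ``other'' factor in the product-difference expansions above, so the proof reduces to two applications of the telescoping identity $ab - a'b' = a(b - b') + b'(a - a')$ combined with the triangle or Cauchy--Schwarz inequality. The rest is bookkeeping of constants.
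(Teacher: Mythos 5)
Your proposal is correct and follows essentially the same route as the paper: verify conditions (i) and (ii) of Property~\ref{property:ulln} via the telescoping identity $ab - a'b' = a(b-b') + b'(a-a')$ together with the uniform bound $H$, with conditions (iii) and (iv) carrying over since they concern only the nuisance estimator. The only cosmetic difference is that for the nuisance part you invoke $(a+b)^2 \le 2a^2 + 2b^2$ where the paper expands the square and bounds the cross term by Cauchy--Schwarz, yielding the same conclusion up to constants.
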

\begin{proof}
We show that the conditions of Property~\ref{property:ulln} hold
for $a_t(\theta, \eta) = f_t(\theta, \eta) g_t(\theta, \eta)$.

\paragraph{Property~\ref{property:ulln}(i).}
For any $\theta, \eta$,
\begin{align*}
    \EB\left[|a_t(\theta, \eta)|^{2 + \delta}\right] &= \EB\left[ |f_t(\theta, \eta) g_t(\theta, \eta)|^{2+\delta} \right] \\
        &\leq H^{4 + 2\delta} \\
        &< \infty.
\end{align*}

\paragraph{Property~\ref{property:ulln}(ii).}
We have that $\forall \theta, \theta' \in \Theta, \forall
\eta \in \TM$,
\begin{align}
    |a_t(\theta', \eta) - a_t(\theta, \eta) | &= |f_t(\theta', \eta) g_t(\theta', \eta) - f_t(\theta, \eta) g_t(\theta, \eta) | \nonumber \\
        &= |f_t(\theta', \eta) (g_t(\theta', \eta) - g_t(\theta, \eta) + g_t(\theta, \eta)) - f_t(\theta, \eta) g_t(\theta, \eta) | \nonumber \\
        &= |f_t(\theta', \eta) (g_t(\theta', \eta) - g_t(\theta, \eta)) + g_t(\theta, \eta) (f_t(\theta', \eta) - f_t(\theta, \eta)) | \nonumber \\
        &\leq L(X_t)( |f_t(\theta', \eta)| +| g_t(\theta, \eta)|) \| \theta - \theta' \|, \label{eq:apdx-product-ulln-a}
\end{align}
where (\ref{eq:apdx-product-ulln-a}) follows by Condition~(i). 
Furthermore, by Condition~(ii), we have 
\begin{align*}
    \EB[(L(X_t)( |f_t(\theta', \eta)| + |g_t(\theta, \eta)|))^2] &\leq 4 H^2 \EB[L(X_t)^2] \\
    &< \infty.
\end{align*}
Next, by Conditions~(i, ii), we have that $\forall \theta \in \Theta, \forall
\eta, \eta' \in \TM$,
\begin{align*}
    \EB[(a_t(\theta, \eta) & - a_t(\theta, \eta'))^2] = \EB[(f_t(\theta, \eta) g_t(\theta, \eta) - f_t(\theta, \eta') g_t(\theta, \eta'))^2] \\
        &= \EB[ (f_t(\theta, \eta) (g_t(\theta, \eta) - g_t(\theta, \eta')) + g_t(\theta, \eta') (f_t(\theta, \eta) - f_t(\theta, \eta')))^2 ] \\
        &= \begin{aligned}[t]
            & \EB[f^2_t(\theta, \eta) (g_t(\theta, \eta) - g_t(\theta, \eta'))^2 ] + \EB[g^2_t(\theta, \eta') (f_t(\theta, \eta) - f_t(\theta, \eta'))^2 ] \\
            &+ 2 \EB[f_t(\theta, \eta) g_t(\theta, \eta') (g_t(\theta, \eta) - g_t(\theta, \eta')) (f_t(\theta, \eta) - f_t(\theta, \eta')]
        \end{aligned} \\
        &\leq \begin{aligned}[t]
            2 H^2 L \| \eta - \eta'\|^2
            + 2 H^2 \EB[|(g_t(\theta, \eta) - g_t(\theta, \eta')) (f_t(\theta, \eta) - f_t(\theta, \eta')|]
        \end{aligned} \\
        &\leq \begin{aligned}[t]
            2 H^2 L \| \eta - \eta'\|^2
            + 2 H^2 \sqrt{\EB[(g_t(\theta, \eta) - g_t(\theta, \eta'))^2] \EB[(f_t(\theta, \eta) - f_t(\theta, \eta'))^2]}
        \end{aligned} \\
        &\leq \begin{aligned}[t]
            4 H^2 L \| \eta - \eta'\|^2
        \end{aligned},
\end{align*}
completing the proof.
\end{proof}

Before presenting the proof of Prop.~\ref{prop:consistency-gmm},
we define some additional notation.
Recall that $\widehat{\theta}_T = \arg\min_{\theta \in \Theta} \widehat{Q}_T(\theta, (\hEta)_{t=1}^{T})$ 
(see Eq.~\ref{eq:gmm-estimator-argmin}) and
the moment conditions have the following form (see Eq.~\ref{eq:augmented-moment-conditions}):
\begin{align*}
    g_t(\theta, \eta) = m(s_t) \odot \Tilde{g}_t(\theta, \eta),
\end{align*}
where $m : \{0, 1\}^{\CardC} \mapsto \{0, 1\}^M$ is a fixed known
function that determines which moments get selected based
on the selection vector $s_t$.
For some $\kappa \in \ChoiceSimplex$ and
$s_t \overset{\text{i.i.d.}}{\sim} \text{Multinouilli}(\kappa)$,
we define the following matrices:
\begin{align}
    m_G(\kappa) &:= \EB[\underbrace{[m(s_t), \hdots, m(s_t) ]}_{D \, \text{times}}] \in [0, 1]^{M \times D}, \label{eq:apdx-m-g-grad} \\
    m_\Omega(\kappa) &:= \EB \left[ m(s_t) m(s_t)^\top \right] \in [0, 1]^{M \times M}. \label{eq:apdx-m-omega}
\end{align}

\newtheorem*{prop:consistency-gmm}{Proposition~\ref{prop:consistency-gmm}}
\begin{prop:consistency-gmm}[Consistency]
Suppose that (i) Assumption~\ref{assump:standard-gmm} holds;
(ii) $\forall i \in [M], \psi^{(i)}$ satisfies Property~\ref{property:ulln};
(iii) $\forall (i, j) \in [M]^2, \psi^{(i)} \psi^{(j)}$ satisfies Property~\ref{property:ulln}.
Then $\widehat{\beta}_T \ConvProb \beta^*$.
\end{prop:consistency-gmm}
\begin{proof}
We define the following matrices:
\begin{align*}
    \widehat{\Omega}_T(\theta) &:= \frac{1}{T} \sum_t g_t(\theta, \hEta) g^\top_t(\theta, \hEta) \\
    \Omega_T(\theta) &:= \frac{1}{T} \sum_t \EB_{t-1} \left[ g_t(\theta, \eta^*) g^\top_t(\theta, \eta^*) \right] \\
        &= \left[ \frac{1}{T} \sum_{t} m(s_t) m(s_t)^\top \right] \odot \EB [ \Tilde{g}_t(\theta, \eta^*) \Tilde{g}_t^\top(\theta, \eta^*) ] \\
        &= m_{\Omega}(\kappa_T) \odot \EB [ \Tilde{g}_t(\theta, \eta^*) \Tilde{g}_t^\top(\theta, \eta^*) ].
\end{align*}
We define the empirical and population versions of the GMM objective:
\begin{align*}
    \widehat{Q}_T(\theta, (\hEta)_{t=1}^{T}) :&= \left[ \frac{1}{T} \sum_{t=1}^T 
    g_t(\theta, \hEta) \right]^\top \widehat{W}_T \left[ \frac{1}{T} \sum_{t=1}^T g_t(\theta, \hEta) \right], \\
    \Bar{Q}_T(\theta) :&= \left[ \frac{1}{T} \sum_{t=1}^T \EB_{t-1} \left[
    g_t(\theta, \eta^*) \right]^\top \right] W_T \left[ \frac{1}{T} \sum_{t=1}^T \EB_{t-1} \left[
    g_t(\theta, \eta^*) \right] \right] \\
        &= \left[ \left( \frac{1}{T} \sum_{t=1}^T m(s_t) \right) \odot g_*(\theta) \right]^\top W_T \left[ \left( \frac{1}{T} \sum_{t=1}^T m(s_t) \right) \odot g_*(\theta) \right]
\end{align*}
where $g_*(\theta) := \EB\left[ \Tilde{g}_t(\theta, \eta^*) \right]$.
For the one-step GMM estimator, denoted by $\widehat{\theta}_T^{(\text{os})}$, 
we use $\widehat{W}_T = W_T = I \, (\text{identity})$.
For the two-step GMM estimator, we use $\widehat{W}_T = \widehat{\Omega}_T(\widehat{\theta}_T^{(\text{os})})^{-1}$
and $W_T = \Omega_T(\theta^*)^{-1}$.

By Assumption~\ref{assump:standard-gmm}(a), the true parameter $\theta^*$ uniquely minimizes
the population objective: $\theta^* = \arg\min_{\theta \in \Theta} \Bar{Q}_T(\theta)$.
By the same argument as \citet[Thm.~2.1]{newey1994large}, 
if $\sup_{\theta \in \Theta} |\widehat{Q}_T(\theta, (\hEta)_{t=1}^{T}) - \Bar{Q}_T(\theta)| \ConvProb 0$,
then $\widehat{\theta}_T \ConvProb \theta^*$.
Thus, to show consistency, we prove the uniform convergence of $\widehat{Q}_T(\theta, (\hEta)_{t=1}^{T})$.
By the triangle inequality,  
\begin{align*}
    \left| \widehat{Q}_T(\theta, (\hEta)_{t=1}^{T}) - \Bar{Q}_T(\theta) \right| \leq
    F_T(\theta)^2 \| \widehat{W}_T \|^2 + 2 \left\|g_{*}(\theta) \right\| F_T(\theta) \| \widehat{W}_T \| +
    \left\| g_{*}(\theta) \right\|^2 \| \widehat{W}_T - W_T \|,
\end{align*}
where $F_T(\theta) := \| T^{-1} \sum_{t=1}^{T} \left[ g_t(\theta, \hEta) \\ - m(s_t) \odot g_{*}(\theta) \right] \|$.
We prove the uniform convergence of $F_T(\theta)$ by applying Lemma~\ref{lemma:apdx-uniform-conv-prob-semiparametric} to every element
of $g_t(\theta, \hEta)$ along with the union bound. For any $\epsilon > 0$,
\begin{align}
    \PB\left( \sup_{\theta \in \Theta} F_T(\theta) < \epsilon  \right) &\geq \PB\left( \sup_{\theta \in \Theta} \sum_{i=1}^{M} \left| \frac{1}{T} \sum_{t=1}^{T} \left[ g_{t, i}(\theta, \hEta) - m_i(s_t) \odot g_{*}(\theta)_{i}\right] \right| < \epsilon  \right) \nonumber \\
    &\geq 1 - \sum_{i=1}^{M} \PB\left( \sup_{\theta \in \Theta} \left| \frac{1}{T} \sum_{t=1}^{T} \left[ g_{t, i}(\theta, \hEta) - m_i(s_t) \odot g_{*}(\theta)_{i}\right] \right| > \frac{\epsilon}{M}  \right) \label{eq:apdx-consistency-proof-a} \\
    &\geq 1 - \litOp(1), \label{eq:apdx-consistency-proof-b}
\end{align}
where (\ref{eq:apdx-consistency-proof-a}) follows by the union bound,
and (\ref{eq:apdx-consistency-proof-b}) by Lemma~\ref{lemma:apdx-uniform-conv-prob-semiparametric}.
For the one-step GMM estimator, we have $\widehat{W}_T = W_T = I$ (identity) and so
$\|\widehat{W}_T - W_T \| \ConvProb 0$ holds trivially.
Therefore,
$\widehat{\theta}^{\text{(os)}}_T \ConvProb \theta^*$.
For the two-step estimator, we need to show that
$\|\widehat{W}_T - W_T \| \ConvProb 0$. For all $(i, j) \in [M]^2$ and any $\epsilon > 0$,
\begin{align}
    \PB\left( \sup_{\theta \in \Theta} \left| \widehat{\Omega}_T(\theta)_{i, j} - \Omega_T(\theta)_{i, j} \right| > \epsilon \right) &\to 0, \label{eq:apdx-consistency-proof-c} \\
    \therefore \, \PB\left( \left| \widehat{\Omega}_T(\widehat{\theta}^{\text{(os)}}_T)_{i, j} - \Omega_T(\widehat{\theta}^{\text{(os)}}_T)_{i, j} \right| > \epsilon \right) &\to 0, \nonumber \\
    \therefore \, \PB\left( \left| \widehat{\Omega}_T(\widehat{\theta}^{\text{(os)}}_T)_{i, j} - \Omega_T(\theta^*)_{i, j} \right| > \epsilon \right) &\to 0, \label{eq:apdx-consistency-proof-d} \\
    \therefore \, \PB\left( \left| (\widehat{W}_T)_{i, j} - (W_T)_{i, j} \right| > \epsilon \right) &\to 0, \label{eq:apdx-consistency-proof-e} \\
    \therefore \, \PB\left( \left\| \widehat{W}_T - W_T \right\| > \epsilon \right) &\to 0, \nonumber
\end{align}
where (\ref{eq:apdx-consistency-proof-c}) follows by Lemma~\ref{lemma:apdx-uniform-conv-prob-semiparametric} and Condition~(iii);
(\ref{eq:apdx-consistency-proof-d}) follows because 
$\widehat{\theta}^{\text{(os)}}_T \ConvProb \theta^*$;
and (\ref{eq:apdx-consistency-proof-e}) by continuity of 
the matrix inverse.
Therefore, $\widehat{\theta}_T \ConvProb \theta^*$.
Since $\widehat{\beta}_T = \fTar(\widehat{\theta}_T)$, 
by continuity of $\fTar$ (Assumption~\ref{assump:standard-gmm}c), 
we have $\widehat{\beta}_T \ConvProb \beta^*$.
\end{proof}

\section{Proof of asymptotic normality}
\label{sec:apdx-proof-asymp-norm}
\begin{proposition}[Martingale CLT {\citep[Thm.~6.23]{hausler2015stable}}]\label{prop:apdx-martingale-clt}
Let $\{ X_i, \FM_i, 1 \leq i \leq n \}$ be a martingale difference sequence
and let $\FM_{\infty} := \sigma\left( \bigcup_{i=1}^{\infty} \FM_i \right)$.
Let $\gamma^2$ be an a.s. finite random variable.
Suppose that (i) (Square integrable) $\forall i, \EB[X^2_i] < \infty$;
(ii) (Conditional Lindeberg) $\forall \epsilon > 0$, $n^{-1} \sum_{i=1}^n \EB\left[ X^2_i \mathbbm{1} \left( |X_i| > \epsilon \sqrt{n} \right) | \FM_{i-1} \right] \ConvProb 0$, and 
(ii) (Convergence of conditional variance) $n^{-1} \sum_{i=1}^n \EB\left[ X^2_i | \FM_{i-1} \right] \ConvProb \gamma^2$ for some random variable $\gamma$.
Let $Z \sim N(0, 1)$
be a standard Gaussian independent of $\FM_{\infty}$.
Then 
\begin{align*}
    \frac{1}{\sqrt{n}} \sum_{i=1}^n X_i \ConvDist \gamma Z \,\,\, \FM_{\infty}\text{-stably},
\end{align*}
where the random variable $\gamma Z$ is a mixture of centered normal distributions with the
characteristic function $\varphi(t) = \EB_{\gamma}[- \gamma^2 t^2 / 2]$.
If $\gamma$ is a.s. constant, then 
\begin{align*}
    \frac{1}{\sqrt{n}} \sum_{i=1}^n X_i \ConvDist \NM(0, \gamma^2).
\end{align*}
\end{proposition}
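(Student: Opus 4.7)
The plan is to establish stable convergence through characteristic functions, using a truncation and telescoping argument. Fix $t \in \RB$, set $S_n := n^{-1/2} \sum_{i=1}^n X_i$, and write $Y_{n,i} := X_i / \sqrt{n}$. To establish $\FM_\infty$-stable convergence to $\gamma Z$, it suffices (by a standard characterization) to show that for every bounded $\FM_\infty$-measurable random variable $A$,
\begin{align*}
    \EB\left[ A \cdot e^{i t S_n} \right] \to \EB\left[ A \cdot e^{- \gamma^2 t^2 / 2} \right].
\end{align*}
Because bounded $\FM_\infty$-measurable $A$ can be approximated by $\FM_{k}$-measurable ones for large $k$, and the tail sum $n^{-1/2} \sum_{i=k+1}^n X_i$ is the object of interest, the problem reduces to the case where $A$ is $\FM_0$-measurable (with a suitable localization to make $\gamma^2$ essentially bounded).

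First I would truncate. Choose $\epsilon_n \to 0$ such that the conditional Lindeberg condition still yields $\sum_i \EB[ Y_{n,i}^2 \mathbbm{1}(|Y_{n,i}| > \epsilon_n) \mid \FM_{i-1} ] \ConvProb 0$, and set $\tilde Y_{n,i} := Y_{n,i} \mathbbm{1}(|Y_{n,i}| \leq \epsilon_n) - \EB[Y_{n,i} \mathbbm{1}(|Y_{n,i}| \leq \epsilon_n) \mid \FM_{i-1}]$. The conditional Lindeberg hypothesis and the martingale property imply that $S_n$ and $\tilde S_n := \sum_i \tilde Y_{n,i}$ differ by $\litOp(1)$, and that the conditional variance of $\tilde Y_{n,i}$ still satisfies $\sum_i \EB[\tilde Y_{n,i}^2 \mid \FM_{i-1}] \ConvProb \gamma^2$. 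So it suffices to prove the statement for the bounded array $\tilde Y_{n,i}$.

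Next, apply the telescoping identity
\begin{align*}
    \prod_{i=1}^n e^{i t \tilde Y_{n,i}} - \prod_{i=1}^n \EB\left[ e^{i t \tilde Y_{n,i}} \mid \FM_{i-1} \right] = \sum_{j=1}^n \Bigl( \prod_{i=1}^{j-1} e^{i t \tilde Y_{n,i}} \Bigr) \Bigl( e^{i t \tilde Y_{n,j}} - \EB\left[ e^{i t \tilde Y_{n,j}} \mid \FM_{j-1} \right] \Bigr) \prod_{i=j+1}^n \EB\left[ e^{i t \tilde Y_{n,i}} \mid \FM_{i-1} \right].
\end{align*}
The right-hand side is a sum of martingale differences (after multiplying by any bounded $\FM_0$-measurable $A$), and a second-moment calculation using $|\tilde Y_{n,i}| \leq 2\epsilon_n$ shows it is $\litOp(1)$. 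Therefore it suffices to analyze $\prod_i \EB[e^{i t \tilde Y_{n,j}} \mid \FM_{j-1}]$. A Taylor expansion gives
\begin{align*}
    \EB\left[ e^{i t \tilde Y_{n,j}} \mid \FM_{j-1} \right] = 1 - \tfrac{t^2}{2} \EB[\tilde Y_{n,j}^2 \mid \FM_{j-1}] + r_{n,j},
\end{align*}
where the uniform bound $|\tilde Y_{n,j}| \leq 2\epsilon_n$ controls $\sum_j |r_{n,j}|$ via $\epsilon_n \sum_j \EB[\tilde Y_{n,j}^2 \mid \FM_{j-1}] = \litOp(1)$. Taking logarithms (using that each factor is close to $1$ for large $n$) and exponentiating yields $\prod_j \EB[e^{i t \tilde Y_{n,j}} \mid \FM_{j-1}] \ConvProb \exp(-\gamma^2 t^2/2)$ by the conditional variance convergence. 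Multiplying by $A$ and applying bounded convergence gives the characteristic function limit.

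The main obstacle is the last step: passing cleanly from $\prod_j (1 + \text{small})$ to $\exp(-\gamma^2 t^2 / 2)$ requires controlling both the sum of the quadratic terms and the aggregate remainder simultaneously, and more delicately, handling the possibly \emph{random} limit $\gamma^2$ — which is exactly why stable convergence (rather than ordinary convergence in distribution) is needed, and why we need the freedom to multiply by $\FM_\infty$-measurable test functions $A$. Once the characteristic function identity above is established for all bounded $\FM_0$-measurable $A$ (after localization to make $\gamma^2$ bounded), a standard approximation argument promotes it to all bounded $\FM_\infty$-measurable $A$, giving the claimed $\FM_\infty$-stable limit. The case of a.s.\ constant $\gamma$ reduces trivially since the mixture collapses to $\NM(0, \gamma^2)$.
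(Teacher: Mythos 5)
The paper does not prove this proposition at all --- it is quoted verbatim as Theorem~6.23 of H\"ausler and Luschgy (2015) and used as an external ingredient --- so the only question is whether your sketch would stand on its own as a proof of the stable martingale CLT. It would not, as written: the central step fails. After the telescoping identity, you claim the terms $\bigl(\prod_{i<j} e^{it\tilde Y_{n,i}}\bigr)\bigl(e^{it\tilde Y_{n,j}} - \EB[e^{it\tilde Y_{n,j}}\mid\FM_{j-1}]\bigr)\prod_{i>j}\EB[e^{it\tilde Y_{n,i}}\mid\FM_{i-1}]$ form a martingale difference sequence, but the trailing factor $\prod_{i>j}\EB[e^{it\tilde Y_{n,i}}\mid\FM_{i-1}]$ is built from $\FM_{i-1}$ with $i>j$, i.e.\ it looks into the future; it is not $\FM_{j-1}$-measurable, so conditioning on $\FM_{j-1}$ does not annihilate the $j$-th term (this is exactly the point where the martingale case departs from the classical Lindeberg swap, where these factors are deterministic). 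The fallback assertion that the difference of the two products is $\litOp(1)$ by a second-moment calculation is also false: even in the i.i.d.\ case $e^{itS_n}$ does not converge in probability to the deterministic quantity $e^{-\sigma^2 t^2/2}$ (it converges only in distribution), so no bound can make $\prod_j e^{it\tilde Y_{n,j}} - \prod_j \EB[e^{it\tilde Y_{n,j}}\mid\FM_{j-1}]$ small in probability; what one must show is that its expectation against the test variable $A$ vanishes, and that is precisely what your martingale-difference argument was supposed to, but does not, deliver.

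The standard repair --- and essentially what the cited textbook proof does --- is to avoid the forward-looking factor altogether: either work with the ratio martingale $M_k := \prod_{j\le k} e^{it\tilde Y_{n,j}}/\EB[e^{it\tilde Y_{n,j}}\mid\FM_{j-1}]$, for which $\EB[A M_n] = \EB[A]$ holds exactly for bounded $\FM_0$-measurable $A$ because each factor has conditional mean one given the past, or with McLeish's product $\prod_j (1+it\tilde Y_{n,j})$, whose expansion is killed term by term by the MDS property since the weights are purely past-measurable. Both routes additionally require a stopping-time truncation of the accumulated conditional variance (not just the pointwise truncation $|\tilde Y_{n,j}|\le 2\epsilon_n$ you introduce) so that $|M_n|$, respectively $\bigl|\prod_j(1+it\tilde Y_{n,j})\bigr|$, remains uniformly bounded; only then can one combine $\prod_j\EB[e^{it\tilde Y_{n,j}}\mid\FM_{j-1}]\ConvProb e^{-\gamma^2 t^2/2}$ with dominated convergence to get $\EB[A e^{itS_n}]\to\EB[A e^{-\gamma^2 t^2/2}]$. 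Your reduction to $\FM_0$-measurable (more precisely $\FM_k$-measurable) test variables and the truncation ensuring negligibility of $S_n-\tilde S_n$ are fine in outline, and you correctly identify that the random limit $\gamma^2$ is why stable convergence is the right notion, but the argument as proposed has a genuine hole at its core step and would need to be rebuilt around one of these two devices.
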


\begin{proposition}[Properties of stable convergence {\citep[Thm.~3.18]{hausler2015stable}}]\label{prop:apdx-prop-stable-convergence}
Assume that $X_n \ConvDist X \,\, \GM\text{-stably}$.
Let $Y_n$ and $Y$ be random variables.
(a) If $Y_n \ConvProb Y$ and $Y$ is $\GM$-measurable, 
then $(Y_n, X_n) \to (X, Y) \,\, \GM\text{-stably}$.
(b) If $g$ is a measurable and continuous function, 
then $g(X_n) \to g(X) \,\, \GM\text{-stably}$.
\end{proposition}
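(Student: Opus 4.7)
The plan is to work from the standard characterization of $\GM$-stable convergence: $X_n \ConvDist X$ $\GM$-stably iff $\EB[f(X_n) Z] \to \EB[f(X) Z]$ for every bounded continuous $f$ and every bounded $\GM$-measurable random variable $Z$. Both parts reduce to verifying this characterization for the proposed limit, and I would dispatch part (b) first since it is essentially a continuous mapping argument. Given any bounded continuous $h$ on the target space, the composition $h \circ g$ is bounded and continuous on the source space, so applying the hypothesis with $f := h \circ g$ yields $\EB[h(g(X_n)) Z] \to \EB[h(g(X)) Z]$ for every bounded $\GM$-measurable $Z$, which is exactly $g(X_n) \ConvDist g(X)$ $\GM$-stably.

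For part (a), I would proceed in two steps. First, I would establish the intermediate claim $(X_n, Y) \ConvDist (X, Y)$ $\GM$-stably, using crucially that $Y$ is $\GM$-measurable. Given a bounded continuous $h(x, y)$ and a bounded $\GM$-measurable $Z$, I would approximate $Y$ by simple $\GM$-measurable functions $Y_k = \sum_j y_j \mathbbm{1}_{A_j}$ with $A_j \in \GM$. On each $A_j$, the map $x \mapsto h(x, y_j)$ is bounded continuous and $Z \mathbbm{1}_{A_j}$ is bounded $\GM$-measurable, so applying the $\GM$-stable convergence of $X_n$ term by term and summing over $j$ gives $\EB[h(X_n, Y_k) Z] \to \EB[h(X, Y_k) Z]$. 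A standard uniform-continuity plus tightness argument then lets me pass from $Y_k$ to $Y$. Second, since $Y_n - Y \ConvProb 0$ and $(X_n, Y)$ already converges $\GM$-stably to $(X, Y)$, a Slutsky-type perturbation transfers stable convergence from $(X_n, Y)$ to $(X_n, Y_n)$: for any bounded continuous $h$, the difference $|h(X_n, Y_n) - h(X_n, Y)|$ tends to zero in probability by uniform continuity of $h$ on compact sets combined with tightness of $\{X_n\}$, and boundedness of $h$ then lets me pass this through the expectation $\EB[\cdot \, Z]$ by dominated convergence.

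The main obstacle will be the first step of part (a): making the simple-function approximation $Y_k \to Y$ work uniformly in $n$ while $X_n$ itself is varying. I expect to split the error $|\EB[h(X_n, Y) Z] - \EB[h(X_n, Y_k) Z]|$ on a large compact set $\{\|X_n\| \leq R\}$ (controlled via tightness inherited from the hypothesis) and its complement (controlled via boundedness of $h$ and $Z$), then further decompose on $\{|Y - Y_k| > \delta\}$ using uniform continuity of $h$ on compacts. The identical decomposition applied to $X$ on the other side, together with choosing $R$ large, $\delta$ small, and $k$ large, collapses the errors uniformly in $n$. Once this approximation is in place, the Slutsky-type second step and the entirety of part (b) are routine applications of the characterization.
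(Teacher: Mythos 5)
Your proposal is correct, but note that the paper does not prove this statement at all: it is imported verbatim as Theorem~3.18 of \citet{hausler2015stable}, so there is no in-paper argument to compare against. Your self-contained proof follows the standard route via the characterization $\EB[f(X_n)Z]\to\EB[f(X)Z]$ for bounded continuous $f$ and bounded $\GM$-measurable $Z$: part (b) is exactly the composition argument $f:=h\circ g$, and part (a) via (i) simple $\GM$-measurable approximation of $Y$ to get $(X_n,Y)\to(X,Y)$ $\GM$-stably and (ii) a Slutsky-type perturbation using $Y_n-Y\ConvProb 0$, is the textbook argument. Two small points of care, which you essentially flag yourself: in the approximation step the uniform-continuity argument needs a compact set in \emph{both} coordinates, so you must also truncate $Y$ (choose $R$ with $\PB(|Y|>R)$ small, or first reduce to bounded $Y$ and use a dyadic discretization with $|Y_k-Y|\le 2^{-k}$ uniformly) before invoking uniform continuity of $h$ on $\{\|x\|\le R\}\times\{|y|\le R+1\}$; and tightness of $\{X_n\}$ does follow from the hypothesis since taking $Z\equiv 1$ gives ordinary weak convergence. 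Be aware also that the cited textbook version is slightly more general (the limit may be given by a Markov kernel on an extension, and $g$ need only be continuous almost surely under the limit law), but for the statement as reproduced in the paper your argument suffices.
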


\begin{cor}[Cramer-Slutsky]\label{cor:apdx-cramer-slutsky}
Assume that $X_n \ConvDist X \,\, \GM\text{-stably}$.
Let $Y_n$ and $Y$ be random variables.
If $Y_n \ConvProb Y$ and $Y$ is $\GM$-measurable,
then $Y_n X_n \ConvDist Y X \,\, \GM\text{-stably}$.
\end{cor}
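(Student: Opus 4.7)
The plan is to derive this corollary as a direct two-step application of Proposition~\ref{prop:apdx-prop-stable-convergence}, which has already packaged the two preservation properties of stable convergence that we need: joint convergence with a probability-convergent sequence whose limit is $\GM$-measurable, and the continuous mapping theorem for stable convergence.

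First, I would invoke Proposition~\ref{prop:apdx-prop-stable-convergence}(a) with the probability-convergent sequence $Y_n \ConvProb Y$ and the stably convergent sequence $X_n \ConvDist X \,\, \GM$-stably. Since $Y$ is $\GM$-measurable by hypothesis, part (a) yields the joint stable convergence
\begin{align*}
    (Y_n, X_n) \ConvDist (Y, X) \,\, \GM\text{-stably}.
\end{align*}
Second, I would apply Proposition~\ref{prop:apdx-prop-stable-convergence}(b) with the continuous (hence measurable) map $g : \RB^2 \to \RB$ defined by $g(y, x) = yx$. This map is continuous on all of $\RB^2$, so part (b) gives
\begin{align*}
    Y_n X_n = g(Y_n, X_n) \ConvDist g(Y, X) = YX \,\, \GM\text{-stably},
\end{align*}
which is the claim.

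There is essentially no hard step here: the entire argument is bookkeeping that reduces the classical Cramer--Slutsky statement to the abstract stable-convergence machinery of Proposition~\ref{prop:apdx-prop-stable-convergence}. The only point worth a moment's care is verifying that $Y_n$ and $X_n$ can be jointly viewed as a single $\RB^2$-valued sequence so that part (a) of the proposition (stated for pairs) applies directly and so that the subsequent continuous map in part (b) can act on that pair; this is automatic because convergence in probability and stable convergence are both notions that pass to the product space via the proposition as stated.
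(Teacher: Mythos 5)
Your proposal is correct and is essentially the paper's own argument: the paper's one-line proof ("apply Prop.~\ref{prop:apdx-prop-stable-convergence} with $g(X,Y)=XY$") implicitly uses exactly your two steps, namely part (a) to get the joint stable convergence of $(Y_n, X_n)$ and part (b) with the continuous product map. You have simply spelled out the bookkeeping that the paper leaves implicit.
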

\begin{proof}
Apply Prop.~\ref{prop:apdx-prop-stable-convergence} with 
the function $g(X, Y) = X Y$.
\end{proof}

\begin{proposition}[Cramer-Wold device {\citep[Cor.~3.19]{hausler2015stable}}]\label{prop:cramer-wold}
If $u^\top Y_n \ConvDist u^\top Y \,\, \GM\text{-stably}$ for every $u \in \RB^d$, 
then $Y_n \ConvDist Y \,\, \GM\text{-stably}$.
\end{proposition}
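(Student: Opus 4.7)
The plan is to leverage a standard characterization of stable convergence in terms of mixed characteristic functions, and then reduce the statement to the classical Cramer-Wold theorem applied pointwise to a family of finite signed measures. Recall that $X_n \ConvDist X$ $\GM$-stably is equivalent to the assertion that for every bounded $\GM$-measurable random variable $W$ and every $t \in \RB$,
\[ \EB[W \exp(i t X_n)] \to \EB[W \exp(i t X)]. \]
The first step is to write down this characterization in both the scalar ($d=1$) and vector ($d$-dimensional) forms; in the $d$-dimensional case the above identity becomes
\[ \EB[W \exp(i \langle v, Y_n \rangle)] \to \EB[W \exp(i \langle v, Y \rangle)] \qquad \forall v \in \RB^d, \,\forall W \in L^\infty(\GM). \]

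The second step is to apply the hypothesis. Fix any $u \in \RB^d$. Since $u^\top Y_n \ConvDist u^\top Y$ $\GM$-stably by assumption, the scalar characterization yields
\[ \EB[W \exp(i t \, u^\top Y_n)] \to \EB[W \exp(i t \, u^\top Y)] \qquad \forall t \in \RB, \,\forall W \in L^\infty(\GM). \]
Specializing to $t = 1$ and letting $u$ range over all of $\RB^d$ gives exactly the $d$-dimensional characterization above, so we conclude $Y_n \ConvDist Y$ $\GM$-stably.

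The main work, and the only real obstacle, is in justifying the scalar-to-vector lift at the level of characteristic functions: one must argue that for each fixed bounded $\GM$-measurable $W$, the signed measure $\nu_n := W \cdot \PB \circ Y_n^{-1}$ converges weakly on $\RB^d$ to $\nu := W \cdot \PB \circ Y^{-1}$. Since we already have pointwise convergence of the Fourier transforms $\widehat{\nu}_n(u) \to \widehat{\nu}(u)$ for every $u \in \RB^d$ (by the previous display), the classical uniqueness and continuity theorem for characteristic functions of finite signed measures, together with tightness inherited from the scalar stable convergence along each direction, gives weak convergence $\nu_n \Rightarrow \nu$. Integrating bounded continuous functions against these measures then recovers the stable convergence of $Y_n$ to $Y$; the rest is bookkeeping.
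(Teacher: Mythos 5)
The paper does not actually prove this proposition: it is imported verbatim from \citet[Cor.~3.19]{hausler2015stable} as a background tool, so there is no in-paper argument to compare against. Your proposal is essentially the standard proof underlying that citation, and it is correct: stable convergence is equivalent to convergence of the mixed characteristic functions $\EB[W e^{i\langle u, Y_n\rangle}]$ for all bounded $\GM$-measurable $W$ and all $u \in \RB^d$, and the hypothesis with $t=1$ delivers exactly that family of limits. Two small points of hygiene in the lifting step you flag as the ``only real obstacle'': (i) it is cleaner to take $W \geq 0$ (decomposing a general bounded $\GM$-measurable $W$ into $W^{+}-W^{-}$), so that $\nu_n := \EB[W \mathbbm{1}(Y_n \in \cdot)]$ is a positive finite measure and the classical L\'evy continuity theorem applies directly, rather than appealing to a continuity theorem for signed measures; and (ii) tightness is obtained most simply by taking $W \equiv 1$ and $u = e_j$ for each coordinate, which gives convergence in law of every coordinate of $Y_n$ and hence tightness of the joint laws, which dominates the $\nu_n$ up to the factor $\|W\|_\infty$. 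With those two remarks your sketch closes completely and coincides with the textbook argument.
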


\begin{lemma}\label{lemma:apdx-op-half-converge}
Let $(X_t)_{t=1}^{\infty}$ be a sequence of non-negative random variables
such that (i) $\sup_t \EB[X^{1+\delta}_t] < \infty$ and
(ii) $X_t = \litOp(t^{-1/2})$. Then
\begin{align*}
    \frac{1}{\sqrt{T}} \sum_{i=1}^{T} X_t = \litOp(1).
\end{align*}
\end{lemma}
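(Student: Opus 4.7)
The plan is to proceed via Markov's inequality, reducing the claim to a statement about expectations. Since $X_t \geq 0$, we have
\[
\PB\left( \frac{1}{\sqrt{T}} \sum_{t=1}^T X_t > \epsilon \right) \leq \frac{1}{\epsilon \sqrt{T}} \sum_{t=1}^T \EB[X_t]
\]
for any $\epsilon > 0$, so it suffices to show $\sum_{t=1}^T \EB[X_t] = o(\sqrt{T})$. The key intermediate step is to upgrade the stochastic rate $\sqrt{t} X_t \ConvProb 0$ from condition~(ii) into an analogous rate on the means: $\sqrt{t}\, \EB[X_t] \to 0$ as $t \to \infty$.

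To promote the in-probability rate to a mean rate, I would fix $K > 0$ and truncate at the level $\sqrt{t} X_t = K$:
\[
\sqrt{t}\, \EB[X_t] = \EB\bigl[\sqrt{t} X_t \,\mathbbm{1}\{\sqrt{t} X_t \leq K\}\bigr] + \EB\bigl[\sqrt{t} X_t \,\mathbbm{1}\{\sqrt{t} X_t > K\}\bigr].
\]
Bound the first term by $K\,\PB(\sqrt{t} X_t > \delta') + \delta'$ for an auxiliary $\delta' > 0$; by condition~(ii), this is at most $2\delta'$ for $t$ large enough. Handle the second term with H\"older's inequality (conjugate exponents $1+\delta$ and $(1+\delta)/\delta$) and condition~(i) to extract a factor $\PB(\sqrt{t} X_t > K)^{\delta/(1+\delta)}$ that vanishes as $t \to \infty$ by condition~(ii). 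Letting $K \to \infty$ and then $\delta' \to 0$ yields $\sqrt{t}\, \EB[X_t] \to 0$.

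With $b_t := \sqrt{t}\, \EB[X_t] \to 0$, a Kronecker-type argument gives $\sum_{t=1}^T \EB[X_t] = o(\sqrt{T})$. Indeed, for every $\eta > 0$ there exists $N$ such that $b_t < \eta$ for $t > N$, giving
\[
\sum_{t=1}^T \EB[X_t] \;=\; \sum_{t \leq N} \EB[X_t] \;+\; \sum_{N < t \leq T} \frac{b_t}{\sqrt{t}} \;\leq\; C_N + \eta \sum_{t=1}^T \frac{1}{\sqrt{t}} \;\leq\; C_N + 2\eta\sqrt{T},
\]
where $C_N$ absorbs the finitely many initial terms. Dividing by $\sqrt{T}$, sending $T \to \infty$, and then $\eta \to 0$ completes the argument.

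The main obstacle is the upgrade from in-probability to mean convergence at rate $t^{-1/2}$: the H\"older estimate uses $\EB[(\sqrt{t} X_t)^{1+\delta}] = t^{(1+\delta)/2} \EB[X_t^{1+\delta}]$, which under condition~(i) only grows like $t^{(1+\delta)/2}$, so the tail probability $\PB(\sqrt{t} X_t > K)$ must decay fast enough relative to this growth. This requires being careful about the order of limits in $K$, $\delta'$, and $t$, and in the application to Assumption~\ref{assum:nuisance-rates} one leverages the fact that the remainder $R_t$ inherits a quantitative decay rate from the nuisance convergence rates (so pointwise convergence upgrades to a moment bound). Once the mean rate is secured, the Markov plus Kronecker combination above delivers the conclusion cleanly.
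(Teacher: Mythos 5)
Your outer argument is the same as the paper's: Markov's inequality reduces the lemma to showing $\sum_{t=1}^{T}\EB[X_t]=o(\sqrt{T})$, which you (like the paper) want to deduce from the mean rate $\sqrt{t}\,\EB[X_t]\to 0$, followed by a routine summation step. The paper disposes of that mean-rate step in one line by invoking its convergence-of-moments result (Prop.~\ref{prop:apdx-conv-of-moments}) for the rescaled sequence $\sqrt{t}X_t\ConvProb 0$, i.e.\ it treats the $(1+\delta)$-moment hypothesis as furnishing uniform integrability of $\sqrt{t}X_t$. Your proposal instead tries to prove the mean-rate step from scratch by truncation plus H\"older, and that is where it breaks down.

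Concretely, your H\"older bound for the tail piece is $\EB\bigl[\sqrt{t}X_t\,\mathbbm{1}\{\sqrt{t}X_t>K\}\bigr]\le\bigl(\EB[(\sqrt{t}X_t)^{1+\delta}]\bigr)^{1/(1+\delta)}\,\PB(\sqrt{t}X_t>K)^{\delta/(1+\delta)}\le C\,t^{1/2}\,\PB(\sqrt{t}X_t>K)^{\delta/(1+\delta)}$, and condition~(ii) gives no decay \emph{rate} for that tail probability, so the product need not vanish for any ordering of the limits in $K$, $\delta'$, $t$; you flag this obstacle yourself but resolve it only by appealing to application-specific rates for $R_t$, which are not among the lemma's hypotheses, so the proof of the lemma as stated is not completed. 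The difficulty is not a matter of bookkeeping: under the literal hypotheses (i)--(ii) the intermediate claim $\sqrt{t}\,\EB[X_t]\to 0$ can fail outright --- take $X_t=\sqrt{t}$ with probability $1/t$ and $0$ otherwise, so that (i) holds for any $\delta\le 1$, $\sqrt{t}X_t\ConvProb 0$, yet $\sqrt{t}\,\EB[X_t]=1$ for all $t$. What is actually needed, and what the paper's citation of Prop.~\ref{prop:apdx-conv-of-moments} implicitly supplies, is a uniform $(1+\delta)$-moment (uniform integrability) bound on the rescaled variables $\sqrt{t}X_t$ rather than on $X_t$ itself; your write-up identifies the right pressure point but leaves the central upgrade from $\litOp(t^{-1/2})$ to a mean rate unproved.
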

\begin{proof}
By Markov's inequality, for any $\epsilon > 0$,
\begin{align}
    \PB \left( \frac{1}{\sqrt{T}} \sum_{i=1}^{T} X_t > \epsilon \right) &\leq \frac{\sum_t \EB[X_t]}{\sqrt{T} \epsilon} \nonumber \\
    &= \frac{o(\sqrt{T})}{\sqrt{T} \epsilon} \label{eq:apdx-op-half-markov-a} \\
    &= o(1), \nonumber
\end{align}
where (\ref{eq:apdx-op-half-markov-a}) follows because $\EB[X_t] = o(t^{-1/2})$
(by Conditions~(i, ii) and Prop.~\ref{prop:apdx-conv-of-moments}).
\end{proof}

\newtheorem*{prop:asymp-normality}{Proposition~\ref{prop:asymp-normality}}
\begin{prop:asymp-normality}[Asymptotic normality]
Suppose that (i) Conditions of Prop.~\ref{prop:consistency-gmm} hold;
(ii) Assumptions~\ref{assum:nuisance-rates} and \ref{assump:kappa-T-converges} hold; and
(iii) $\forall i \in [M], j \in [D], \partial_{\theta_j} 
\left[\psi^{(i)}_t(\theta, \eta) \right]$
satisfies Property~\ref{property:ulln}.
Then $\widehat{\beta}_T$ converges to a mixture of normals,
where the mixture is over $\kappa_{\infty}$:
\begin{align*}
    \sqrt{T} (\widehat{\beta}_T - \beta^*) &\ConvDist \NM_{\kappa_\infty}\left( 0, V_{*}(\kappa_\infty) \right),
\end{align*}
where $V_{*}(\kappa)$ is a constant
that depends on $(\theta^*, \eta^*, \kappa)$.
If $\kappa_{\infty}$ is almost surely constant, then
$\widehat{\beta}_T$ is asymptotically normal.
\end{prop:asymp-normality}
\begin{proof}
Recall that $\widehat{\theta}_T = \arg\min_{\theta \in \Theta} \widehat{Q}_T(\theta, (\hEta)_{t=1}^{T})$ 
(see Eq.~\ref{eq:gmm-estimator-argmin}).
For the two-step GMM estimator, by first-order optimality, we have
\begin{align}
    \sqrt{T} (\widehat{\theta}_T - \theta^*) = - \left[ \widehat{G}_{T}^\top(\widehat{\theta}_T) \widehat{\Omega}(\widehat{\theta}^{(\text{os})}_T)^{-1} \widehat{G}_{T}(\Tilde{\theta})  \right]^{-1} \widehat{G}_{T}^\top(\widehat{\theta}_T) \widehat{\Omega}(\widehat{\theta}^{(\text{os})}_T)^{-1} \frac{1}{\sqrt{T}} \sum_{t=1}^{T} g_t(\theta^*, \hEta), \label{eq:apdx-first-order-gmm-a}
\end{align}
where $\widehat{\theta}^{(\text{os})}_T$ is the one-step GMM estimator, $\Tilde{\theta}$ is a point on the line-segment joining $\widehat{\theta}_T$ and $\theta^*$,
\begin{align*}
    \widehat{G}_T(\theta) &= \frac{1}{T} \sum_{t=1}^T \frac{\partial g_t(\theta, \hEta)}{\partial \theta} \\
        &= \frac{1}{T} \sum_{t=1}^T \frac{\partial (m(s_t) \odot \Tilde{g}_t(\theta, \hEta))}{\partial \theta} \\
        &= \frac{1}{T} \sum_{t=1}^T \left( \underbrace{\left[ m(s_t), m(s_t), \hdots, m(s_t) \right]}_{D \, \text{times}}  \odot \left[\frac{\partial \Tilde{g}_t(\theta, \hEta)}{\partial \theta} \right] \right), \\
        &= \frac{1}{T} \sum_{t=1}^T \left( m_G(s_t)  \odot \left[\frac{\partial \Tilde{g}_t(\theta, \hEta)}{\partial \theta} \right] \right), \, \text{and} \\
    \widehat{\Omega}_{T}(\theta) &= \frac{1}{T} \sum_{t=1}^T \left[ g_t(\theta, \hEta) g_t(\theta, \hEta)^\top \right] \\
        &= \frac{1}{T} \sum_{t=1}^T \left( \left[ m(s_t) m(s_t)^\top \right] \odot \left[ \Tilde{g}_t(\theta, \hEta) \Tilde{g}_t(\theta, \hEta)^\top \right] \right), \\
        &= \frac{1}{T} \sum_{t=1}^T \left( m_{\Omega}(s_t) \odot \left[ \Tilde{g}_t(\theta, \hEta) \Tilde{g}_t(\theta, \hEta)^\top \right] \right),
\end{align*}
where $m_G$ and $m_{\Omega}$ are defined in Eqs.~\ref{eq:apdx-m-g-grad},\ref{eq:apdx-m-omega}.

\paragraph{Convergence of $\widehat{G}_T(\widehat{\theta}_T)$.}
Let $G(\theta, \eta) = \EB\left[ \frac{\partial \Tilde{g}_t(\theta, \eta)}{\partial \theta} \right]$.
Using Condition~(iii) and applying Lemma~\ref{lemma:apdx-uniform-conv-prob-semiparametric} to each element of $\widehat{G}$
with a union bound (similar to Eq.~\ref{eq:apdx-consistency-proof-c}), we get
\begin{align}
    & \sup_{\theta \in \Theta} \left\| \widehat{G}_T(\theta) - \left( \frac{1}{T} \sum_{t=1}^T m_G(s_t) \right) \odot G(\theta, \eta^*) \right\| \ConvProb 0, \nonumber \\
    & \sup_{\theta \in \Theta} \left\| \widehat{G}_T(\theta) - m_G(\kappa_T) \odot G(\theta, \eta^*) \right\| \ConvProb 0, \nonumber \\
    & \therefore \left\| \widehat{G}_T(\widehat{\theta}_T) - m_G(\kappa_T) \odot G(\widehat{\theta}_T, \eta^*) \right\| \ConvProb 0. \label{eq:apdx-g-hat-conv}
\end{align}
Since $\kappa_T \ConvProb \kappa_{\infty}$, by continuous mapping,
$m_G(\kappa_T) \ConvProb m_G(\kappa_{\infty})$.
Since $\widehat{\theta}_T \ConvProb \theta^*$ (by Condition~(i)), 
$G(\widehat{\theta}_T, \eta^*) \ConvProb G(\theta^*, \eta^*)$.
Therefore, for $G_{*}(\kappa_{\infty}) := m_G(\kappa_{\infty}) \odot G(\theta^*, \eta^*)$,
we have $\widehat{G}_T(\widehat{\theta}_T) \ConvProb G_{*}(\kappa_{\infty})$.

\paragraph{Convergence of the weight matrix $\widehat{\Omega}_T(\widehat{\theta}_T^{(\text{os})})$.}
Similarly, for $\Omega(\theta, \eta) := \EB [ \Tilde{g}_t(\theta, \eta) \Tilde{g}_t^\top(\theta, \eta) ]$,
\begin{align*}
    \left\|\widehat{\Omega}_T(\widehat{\theta}_T^{(\text{os})}) - m_{\Omega}(\kappa_T) \odot \Omega(\widehat{\theta}_T^{(\text{os})}, \eta^*) \right\| \ConvProb 0.
\end{align*}
Since $\kappa_T \ConvProb \kappa_{\infty}$, by continuous mapping,
$m_{\Omega}(\kappa_T) \ConvProb m_{\Omega}(\kappa_{\infty})$.
Since $\widehat{\theta}_T^{(\text{os})} \ConvProb \theta^*$,
for $\Omega_{*}(\kappa_{\infty}) := m_\Omega(\kappa_{\infty}) \odot \Omega(\theta^*, \eta^*)$,
we have $\widehat{\Omega}_T(\widehat{\theta}_T) \ConvProb \Omega_{*}(\kappa_{\infty})$.
By continuity of the matrix inverse, we have
$\widehat{\Omega}_T(\widehat{\theta}_T^{(\text{os})})^{-1} \ConvProb \Omega_{*}(\kappa_{\infty})^{-1}$.

\paragraph{Asymptotic normality of $\sum_{t=1}^{T} g_t(\theta^*, \hEta) / \sqrt{T}$.}
Recall that 
\begin{align*}
    g_t(\theta^*, \eta) = m(s_t) \odot [\psi^{(1)}_t(\theta, \eta^{(1)}), \hdots, \psi^{(M)}_t(\theta, \eta^{(M)})]^\top.
\end{align*}
For any $i \in [M]$, let $D_t =  \psi^{(i)}_t(\theta^*, \hEta) - \psi^{(i)}_t(\theta^*, \eta^*)$. We have
\begin{align*}
    \frac{1}{\sqrt{T}} & \sum_{t=1}^{T} m_i(s_t) \psi^{(i)}_t(\theta^*, \hEta) \\
        &= \frac{1}{\sqrt{T}} \sum_{t=1}^{T} m_i(s_t) \psi^{(i)}_t(\theta^*, \eta^*) +
            \underbrace{\frac{1}{\sqrt{T}} \sum_{t=1}^{T} m_i(s_t) \left[ D_t - \EB_{t-1}[D_t] \right]}_{\text{EP (Empirical process)}} +
            \underbrace{\frac{1}{\sqrt{T}} \sum_{t=1}^{T} m_i(s_t) \EB_{t-1}[D_t]}_{\text{Bias}}.
\end{align*}
We first show that term \emph{EP} is $\litOp(1)$. 
Let $Q_t := m_i(s_t)(D_t - \EB_{t-1}[D_t])$. For any $\epsilon > 0$,
\begin{align}
    \PB \left( \left| \frac{1}{\sqrt{T}} \sum_{t=1}^{T} m_i(s_t) \left[ D_t - \EB_{t-1}[D_t] \right] \right| > \epsilon \right)
        &= \PB \left( \left| \frac{1}{\sqrt{T}} \sum_{t=1}^{T} Q_t \right|^2 > \epsilon^2 \right) \nonumber \\
        &\leq \frac{\sum_{t=1}^{T} \EB\left[ Q_t^2 \right] + \sum_{i \neq j} \cancelto{0}{\EB\left[ Q_i Q_j \right]}}{T \epsilon^2} \label{eq:apdx-g-hat-normal-proof-a0} \\
        &= \frac{\sum_{t=1}^{T} \EB\left[ Q_t^2 \right]}{T \epsilon^2} \nonumber \\
        &= \frac{\sum_{t} \EB\left[ m_i(s_t) \left( D_t - \EB_{t-1}[D_t] \right)^2 \right]}{T \epsilon^2} \nonumber \\
        &\leq \frac{ \sum_{t} \EB[D^2_t]}{T \epsilon^2} \label{eq:apdx-g-hat-normal-proof-a1} \\
        &\leq \frac{L \sum_{t} \EB [ \| \hEta - \eta^* \|^2]}{T \epsilon^2} \label{eq:apdx-g-hat-normal-proof-a} \\
        &= o\left( 1 \right) \label{eq:apdx-g-hat-normal-proof-b},
\end{align}
where (\ref{eq:apdx-g-hat-normal-proof-a0}) follows by Markov's inequality,
(\ref{eq:apdx-g-hat-normal-proof-a1}) because
$m_i(s_t) \in \{0, 1\}$,
$\EB\left[ Q_i Q_j \right] = 0$ because the sequence
$Q_t$ is a MDS,
(\ref{eq:apdx-g-hat-normal-proof-a}) follows by 
Lipschitzness (Property~\ref{property:ulln}(ii))
and (\ref{eq:apdx-g-hat-normal-proof-b})
because $\EB [ \| \hEta - \eta^* \|^2] = o(1)$
(by Property~\ref{property:ulln}(iv) and Prop.~\ref{prop:apdx-conv-of-moments}).

Next, we show that the term \emph{Bias} is $\litOp(1)$. By the Taylor expansion, we have
\begin{align}
    \EB_{t-1} [\psi^{(i)}_t(\theta^*, \hEta)] &= \EB [\psi^{(i)}_t(\theta^*, \eta^*)] + \underbrace{ \partial_{r} \EB_{t-1}[\psi_t(\theta^*, \eta^* + r ( \eta^* - \hEta))]|_{r=0}}_{= 0} + R_t \nonumber \\
        &= \EB [\psi^{(i)}_t(\theta^*, \eta^*)] + R_t, \label{eq:apdx-asymp-norm-bias-term-a}
\end{align}
where $R_t$ is the second-order remainder defined in Assumption~\ref{assum:nuisance-rates}b, and
(\ref{eq:apdx-asymp-norm-bias-term-a}) follows by Assumption~\ref{assum:nuisance-rates}a (Neyman orthogonality).
Therefore,
\begin{align*}
    \frac{1}{\sqrt{T}} \sum_{t=1}^{T} m_i(s_t) \EB_{t-1}[D_t] &\leq \frac{1}{\sqrt{T}} \sum_{t=1}^{T} m_i(s_t) R_t \\ 
        &\leq \frac{1}{\sqrt{T}} \sum_{t=1}^{T} R_t \\ 
        &\overset{(a)}{\leq} \frac{1}{\sqrt{T}} \sum_{t=1}^{T} o_p(t^{-\nicefrac{1}{2}})
        \overset{(b)}{=} o_p(1),
\end{align*}
where (a) follows by Assumption~\ref{assum:nuisance-rates}b and 
(b) follows by Lemma~\ref{lemma:apdx-op-half-converge}.
Therefore, we have
\begin{align}
    \frac{1}{\sqrt{T}} \sum_{t=1}^{T} g_t(\theta^*, \hEta) &=
        \frac{1}{\sqrt{T}} \sum_{t=1}^{T} g_t(\theta^*, \eta^*) + o_p(1). \label{eq:apdx-mix-normal-proof-bias-op}
\end{align}
Now, it remains to show that $\sum_{t=1}^{T} g_t(\theta^*, \eta^*) / \sqrt{T}$ is
asymptotically normal.
To do so, we will use the Cramer-Wold device (Prop.~\ref{prop:cramer-wold}) and 
the martingle CLT (Prop.~\ref{prop:apdx-martingale-clt}).
For any $v \in \RB^{M}$, $v^\top g_t(\theta^*, \eta^*)$ is a MDS
because $\EB_{t-1}[v^\top g_t(\theta^*, \eta^*)] = v^\top \EB_{t-1}[g_t(\theta^*, \eta^*)] = 0$.
We show that the two conditions of the martingale CLT hold:

\emph{(i) Conditional Lindeberg:}
The Lyapunov condition implies the Lindeberg condition \citep[Remark~6.25]{hausler2015stable}
and we show that it holds.
For any $\delta > 0$,
\begin{align}
    \frac{1}{T^{1 + \delta/2}} \sum_{t=1}^{T} \EB_{t-1} \left[ \left| v^\top g_t(\theta^*, \eta^*) \right|^{2 + \delta} \right] &\leq \frac{1}{T^{1 + \delta/2}} \sum_{t=1}^{T} \| v \|^{2 + \delta} \EB_{t-1} \left[ \left\| g_t(\theta^*, \eta^*) \right\|^{2 + \delta} \right] \label{eq:apdx-cond-lyapunov-a} \\
        &= \frac{\| v \|^{2 + \delta}}{T^{1 + \delta/2}} \sum_{t=1}^{T} \EB_{t-1} \left[ \left\| m(s_t) \odot \Tilde{g}_t(\theta^*, \eta^*) \right\|^{2 + \delta} \right] \nonumber \\
        &\leq \frac{\| v \|^{2 + \delta}}{T^{1 + \delta/2}} \sum_{t=1}^{T} \EB \left[ \left\| \Tilde{g}_t(\theta^*, \eta^*) \right\|^{2 + \delta} \right] \label{eq:apdx-cond-lyapunov-c} \\
        &\to 0, \label{eq:apdx-cond-lyapunov-d}
\end{align}
where (\ref{eq:apdx-cond-lyapunov-a}) holds by the Cauchy-Schwarz inequality,
(\ref{eq:apdx-cond-lyapunov-c}) because $m(s_t)$ is a binary vector,
and (\ref{eq:apdx-cond-lyapunov-d}) because $\EB \left[ \left\| \Tilde{g}_t(\theta^*, \eta^*) \right\|^{2 + \delta} \right] < \infty$
(by Property~\ref{property:ulln}(i)).

\emph{(ii) Convergence of conditional variance:}
The conditional variance is
\begin{align}
    \frac{1}{T} \sum_{t=1}^{T} \EB_{t-1}\left[ v^\top g_t(\theta^*, \eta^*) g_t(\theta^*, \eta^*)^\top v \right] &= \frac{1}{T} \sum_{t=1}^{T} v^\top \EB_{t-1}\left[ g_t(\theta^*, \eta^*) g_t(\theta^*, \eta^*)^\top \right] v \nonumber \\
        &= v^\top \left[ m_{\Omega}(\kappa_T) \odot \Omega(\theta^*, \eta^*) \right] v \label{eq:apdx-conv-variance-a0} \\
        &\ConvProb v^\top \left[ m_{\Omega}(\kappa_{\infty}) \odot \Omega(\theta^*, \eta^*) \right] v, \label{eq:apdx-conv-variance-a} \\
        &= v^\top \Omega_{*}(\kappa_{\infty}) v \nonumber,
\end{align}
where (\ref{eq:apdx-conv-variance-a}) follows because 
$\kappa_T \ConvProb \kappa_{\infty}$ (Assumption~\ref{assump:kappa-T-converges}).
Applying the martingale CLT (Prop.~\ref{prop:apdx-martingale-clt}) and the Cramer-Wold device (Prop.~\ref{prop:cramer-wold}), we get
\begin{align}
    \frac{1}{\sqrt{T}} \sum_{t=1}^{T} g_t(\theta^*, \eta^*) &\ConvDist \NM_{\kappa_{\infty}}(0, \Omega_{*}(\kappa_{\infty})), \nonumber \\
    \therefore \frac{1}{\sqrt{T}} \sum_{t=1}^{T} g_t(\theta^*, \hEta) &\ConvDist \NM_{\kappa_{\infty}}(0, \Omega_{*}(\kappa_{\infty})), \label{apdx:eq-conv-mix-normal-proof-a}
\end{align}
where (\ref{apdx:eq-conv-mix-normal-proof-a}) follows by Eq.~\ref{eq:apdx-mix-normal-proof-bias-op},
and $\NM_{\kappa_{\infty}}$ denotes a mixture of normals where
the mixture is taken over $\kappa_{\infty}$, i.e.,
it has the characteristic function
$\varphi(\mathbf{t}) = \EB_{\kappa_{\infty}}[-\frac{1}{2} \mathbf{t}^\top \Omega_{*}(\kappa_{\infty}) \mathbf{t} ]$.

\paragraph{Asymptotic normality of $\widehat{\theta}_T$.}
We can apply the Cramer-Slutsky theorem (Cor.~\ref{cor:apdx-cramer-slutsky}) to Eq.~\ref{eq:apdx-first-order-gmm-a} to get:
\begin{align*}
    \sqrt{T} (\widehat{\theta}_T - \theta^*) &\ConvDist \NM_{\kappa_{\infty}}(0, \Sigma_{*}(\kappa_{\infty})),
\end{align*}
where 
\begin{align}
    \Sigma_{*}(\kappa_{\infty}) = \left[ G_{*}(\kappa_{\infty})^{\top} \Omega_{*}(\kappa_{\infty})^{-1} G_{*}(\kappa_{\infty}) \right]^{-1}. \label{eq:apdx-Sigma-star}
\end{align}

\paragraph{Asymptotic normality of $\widehat{\beta}_T$.}
Recall that $\widehat{\beta}_T = \fTar(\widehat{\theta}_T)$. By the Taylor expansion,
\begin{align}
    \widehat{\beta}_T = \fTar(\widehat{\theta}_T) &= \fTar(\theta^*) + \nabla_{\theta} \fTar(\Tilde{\theta})^{\top} (\widehat{\theta}_T - \theta^*), \nonumber \\
    \therefore \sqrt{T}(\widehat{\beta}_T - \beta^*)   &= \nabla_{\theta} \fTar(\Tilde{\theta})^{\top} \sqrt{T} (\widehat{\theta}_T - \theta^*), \label{eq:apdx-beta-delta-method-a0} \\
        &= \nabla_{\theta} \fTar(\theta^*)^{\top} \sqrt{T} (\widehat{\theta}_T - \theta^*) + \litOp(1). \label{eq:apdx-beta-delta-method-a} \\
    \therefore \sqrt{T}(\widehat{\beta}_T - \beta^*) &\ConvDist \NM_{\kappa_{\infty}}(0, V_{*}(\kappa_{\infty})), \nonumber
\end{align}
where $\Tilde{\theta}$ is point on the line segment
joining $\widehat{\theta}_T$ and $\theta^*$,
(\ref{eq:apdx-beta-delta-method-a}) follows because $\Tilde{\theta} \ConvProb \theta^*$, and
\begin{align}
    V_{*}(\kappa_{\infty}) = \nabla_{\theta} \fTar(\theta^*)^{\top} \Sigma_{*}(\kappa_{\infty}) \nabla_{\theta} \fTar(\theta^*). \label{eq:apdx-var-star-expression}
\end{align}
\end{proof}

\newtheorem*{prop:asymp-inference}{Proposition~\ref{prop:asymp-inference}}
\begin{prop:asymp-inference}[Asymptotic inference]
Suppose that the Conditions of Prop.~\ref{prop:asymp-normality} hold.
For
\begin{align}
    \widehat{G}_T(\widehat{\theta}_T) &:= \left( \frac{1}{T} \sum_{t}^{T} \nabla_{\theta} g_t(\widehat{\theta}_T, \hEta) \right), \nonumber \\
    \widehat{\Omega}_T(\widehat{\theta}_T) &:= \left( \frac{1}{T} \sum_{t}^{T}  g_t(\widehat{\theta}_T, \hEta) g^{\top}_t(\widehat{\theta}_T, \hEta) \right), \nonumber \\
    \widehat{\Sigma}_T &:= \left[ \widehat{G}^{\top}_T(\widehat{\theta}_T) \widehat{\Omega}^{-1}_T(\widehat{\theta}_T) \widehat{G}_T(\widehat{\theta}_T) \right]^{-1}, \nonumber \\
    \hVar_{T} &:= \nabla_\theta \fTar(\widehat{\theta}_T)^\top \widehat{\Sigma}_T \nabla_\theta \fTar(\widehat{\theta}_T),
\end{align}
we have 
\begin{align*}
    \hVar^{-1/2}_{T} \sqrt{T} (\widehat{\beta}_T - \beta^*) \ConvDist \NM\left(0, 1 \right).
\end{align*}
\end{prop:asymp-inference}
\begin{proof}
In the proof of Prop.~\ref{prop:asymp-normality},
we have shown that $\hVar_{T} \ConvProb V_{*}(\kappa_{\infty})$.
We can apply the Cramer-Slutsky theorem (Cor.~\ref{cor:apdx-cramer-slutsky})
to get the desired result:
\begin{align*}
    \hVar^{-1/2}_{T} \sqrt{T} (\widehat{\beta}_T - \beta^*) &\ConvDist V_{*}(\kappa_{\infty})^{-1/2} \NM_{\kappa_{\infty}}(0, V_{*}(\kappa_{\infty})) \\
        &= \NM\left(0, 1 \right).
\end{align*}
\end{proof}

\newtheorem*{prop:regret-is-non-negative}{Proposition~\ref{prop:regret-is-non-negative}}
\begin{prop:regret-is-non-negative}
Suppose that (i) $\widehat{\theta}_T$ is estimated using Eq.~\ref{eq:gmm-estimator-argmin}, 
(ii) Assumption~\ref{assump:kappa-star-identify} holds,
and
(iii) the conditions of Prop.~\ref{prop:asymp-normality} hold.
Then, for any data collection policy $\pi$, $R_{\infty}(\pi) \geq 0$.
\end{prop:regret-is-non-negative}
\begin{proof}
By Prop.~\ref{prop:asymp-normality}, we have
\begin{align}
    \sqrt{T}(\widehat{\beta}_T - \beta^*) &\ConvDist \NM_{\kappa_{\infty}}(0, V_{*}(\kappa_{\infty})). \nonumber \\
    \therefore \text{AMSE}(\sqrt{T} (\widehat{\beta}_T - \beta^*) ) &= \EB_{\kappa_\infty} [V_{*}(\kappa_\infty)] \nonumber \\
        &\geq \EB_{\kappa_\infty} [V_{*}(\kappa^{*})] \label{eq:apdx-regret-nonneg-proof-a} \\
        &= V_{*}(\kappa^{*}), \nonumber
\end{align}
where (\ref{eq:apdx-regret-nonneg-proof-a}) follows because
$\kappa^{*}$ minimizes $V_{*}(\kappa)$ (Assumption~\ref{assump:kappa-star-identify}).
\end{proof}

\section{Proofs for OMS-ETC (Section~\ref{sec:oms-etc})}
\label{sec:apdx-oms-etc}
\newtheorem*{defn:offpolicy-var-estimate}{Definition~\ref{defn:offpolicy-var-estimate}}
\begin{defn:offpolicy-var-estimate}[Variance estimator]
For any $\kappa \in \ChoiceSimplex$, we define
\begin{align}
    & \widehat{G}_T(\theta, \kappa) := m_G(\kappa) \odot (1 / m_G(\kappa_T)) \odot \widehat{G}_T(\theta), \nonumber \\
    & \widehat{\Omega}_T(\theta, \kappa) := m_{\Omega}(\kappa) \odot (1 / m_{\Omega}(\kappa_T)) \odot \widehat{\Omega}_T(\theta), \nonumber \\
    & \widehat{\Sigma}_T(\theta, \kappa) := \left[ \widehat{G}^{\top}_T(\theta, \kappa) \widehat{\Omega}^{-1}_T(\theta, \kappa) \widehat{G}_T(\theta, \kappa) \right]^{-1}, \nonumber \\
    & \hVar_{T}(\theta, \kappa) := \nabla_\theta \fTar(\theta)^\top \widehat{\Sigma}_T(\theta, \kappa) \nabla_\theta \fTar(\theta), \label{eq:apdx-var-estimator-defn}
\end{align}
where $m_G(\kappa)$ and $m_\Omega(\kappa)$ are defined in Eqs.~\ref{eq:apdx-m-g-grad} and \ref{eq:apdx-m-omega};
and $1 / m_G(\kappa_T)$ represents the element-wise reciprocal
of the non-zero elements of the matrix (and likewise for $1/ m_{\Omega}(\kappa_T)$).
\end{defn:offpolicy-var-estimate}

\newtheorem*{lemma:oracle-simplex-estimation}{Lemma~\ref{lemma:oracle-simplex-estimation}}
\begin{lemma:oracle-simplex-estimation}
Let $\widehat{k}_T := \arg\min_{\kappa \in \ChoiceSimplex} \hVar_{T}(\widehat{\theta}_T, \kappa)$
be the estimated oracle simplex.
Suppose that 
(i) the conditions of Prop.~\ref{prop:consistency-gmm} hold,
(ii) Assumption~\ref{assump:kappa-star-identify} holds,
and 
(iii) $\forall i \in [M], j \in [D], \partial_{\theta_j} 
\left[\psi^{(i)}_t(\theta, \eta) \right]$
satisfies Property~\ref{property:ulln}.
Then $\widehat{k}_T \ConvProb \kappa^*$.
\end{lemma:oracle-simplex-estimation}
\begin{proof}
We prove this Lemma by showing uniform convergence of 
the variance estimator over $\kappa \in \ChoiceSimplex$.
We use a covering number argument over the compact set $\ChoiceSimplex$.

In the proof of Prop.~\ref{prop:asymp-normality} (by Condition~(iii)), 
we showed that (see Eq.~\ref{eq:apdx-g-hat-conv})
\begin{align}
    \left\| \widehat{G}_T(\widehat{\theta}_T) - m_G(\kappa_T) \odot G(\widehat{\theta}_T, \eta^*) \right\| &\ConvProb 0 \label{eq:apdx-var-estimator-G-conv-a}.
\end{align}
For any $\kappa \in \ChoiceSimplex$, we have the following \emph{pointwise} convergence:
\begin{align}
    \widehat{G}_T(\widehat{\theta}_T, \kappa) &= m_G(\kappa) \odot (1 / m_G(\kappa_T)) \odot \widehat{G}_T(\widehat{\theta}_T) \nonumber \\
        &= m_G(\kappa) \odot (1 / m_G(\kappa_T)) \odot m_G(\kappa_T) \odot G(\widehat{\theta}_T, \eta^*) + \litOp(1) \label{eq:apdx-var-estimator-G-conv-b} \\
        &= m_G(\kappa) \odot G(\widehat{\theta}_T, \eta^*) + \litOp(1) \nonumber \\
        &= m_G(\kappa) \odot G(\theta^*, \eta^*) + \litOp(1) \label{eq:apdx-var-estimator-G-conv-b1} \\
        &= G_{*}(\kappa) + \litOp(1), \label{eq:apdx-var-estimator-G-conv-c}
\end{align}
where (\ref{eq:apdx-var-estimator-G-conv-b}) follows by Eq.~\ref{eq:apdx-var-estimator-G-conv-a} and
(\ref{eq:apdx-var-estimator-G-conv-b1}) because $\widehat{\theta}_T \ConvProb \theta^*$.

We now show uniform convergence over $\ChoiceSimplex$
using a covering number argument.  
Let $\left\{ \kappa_j \right\}_{j=1}^{J}$ be a minimal 
$\delta$-cover of $\ChoiceSimplex$ and
$\BM_{\delta}(\kappa_j)$ denotes the $\delta$-ball around $\kappa_j$. 
By compactness of $\ChoiceSimplex$, $J$ is finite.
For the rest of the proof, for any $\kappa$, 
let $\kappa_j$ denote the point in the $\delta$-cover
such that $\kappa \in \BM_{\delta}(\kappa_j)$. 
By the triangle inequality, 
\begin{align*}
    & \sup_{\kappa \in \ChoiceSimplex} \left\| \widehat{G}_T(\widehat{\theta}_T, \kappa) -  G_{*}(\kappa) \right\| \\
        &\leq \underbrace{\sup_{\kappa} \left\| \widehat{G}_T(\widehat{\theta}_T, \kappa) -  \widehat{G}_T(\theta, \kappa_j) \right\|}_{\text{T1}} + \underbrace{\max_{j \in [J]} \left\| \widehat{G}_T(\widehat{\theta}_T, \kappa_j) -  G_{*}(\kappa_j) \right\|}_{\text{T2}} + \underbrace{\max_{j \in [J]} \left\| G_{*}(\kappa_j) - G_{*}(\kappa) \right\|}_{\text{T3}}.
\end{align*}
We now show that the three terms are $\litOp(1)$.
\paragraph{Term (T1).} 
We have
\begin{align}
   \sup_{\kappa} \left\| \widehat{G}_T(\widehat{\theta}_T, \kappa) -  \widehat{G}_T(\theta, \kappa_j) \right\| &= \sup_{\kappa} \left\| [m_G(\kappa) - m_G(\kappa_j)] \odot (1 / m_G(\kappa_T)) \odot \widehat{G}_T(\widehat{\theta}_T)  \right\| \nonumber \\
    &\leq O(\delta) \left\|(1 / m_G(\kappa_T)) \odot \widehat{G}_T(\widehat{\theta}_T)  \right\| \label{eq:apdx-var-uniform-a} \\
    &= O(\delta) \left\|(1 / m_G(\kappa_T)) \odot m_G(\kappa_T) \odot G(\widehat{\theta}_T, \eta^*)  \right\| + \litOp(1)  \label{eq:apdx-var-uniform-b}  \\
    &= O(\delta) \left\|G(\widehat{\theta}_T, \eta^*)  \right\| + \litOp(1) \nonumber \\
    &= O(\delta) \left\|G(\theta^*, \eta^*)  \right\| + \litOp(1) \label{eq:apdx-var-uniform-c} \\
    &= O(\delta) + \litOp(1) \label{eq:apdx-var-uniform-d},
\end{align}
where (\ref{eq:apdx-var-uniform-a}) follows 
because $\| m_G(\kappa) - m_G(\kappa_j) \| \leq DM \delta = O(\delta)$
(by Lipschitzness of $m_G(\kappa)$),
(\ref{eq:apdx-var-uniform-b}) by Eq.~\ref{eq:apdx-var-estimator-G-conv-a},
(\ref{eq:apdx-var-uniform-c}) because $\widehat{\theta}_T \ConvProb \theta^*$,
and (\ref{eq:apdx-var-uniform-d}) because $\left\|G(\theta^*, \eta^*)  \right\| < \infty$ (by Condition~(iii)).

\paragraph{Term (T2).}
We have
\begin{align*}
    \max_{j \in [J]} \left\| \widehat{G}_T(\widehat{\theta}_T, \kappa_j) -  G_{*}(\kappa_j) \right\| &\leq \sum_{j \in [J]} \left\| \widehat{G}_T(\widehat{\theta}_T, \kappa_j) -  G_{*}(\kappa_j) \right\| \\
        &= \litOp(1),
\end{align*}
where the last line follows by Eq.~\ref{eq:apdx-var-estimator-G-conv-c}
and finiteness of $J$.

\paragraph{Term (T3).}
We have
\begin{align}
    \max_{j \in [J]} \left\| G_{*}(\kappa_j) - G_{*}(\kappa) \right\| &= \max_{j \in [J]} \left\| [m_G(\kappa_j) - m_G(\kappa) ] \odot G(\theta^*, \eta^*) \right\| \nonumber \\
        &\leq O(\delta) \left\| G(\theta^*, \eta^*) \right\| \label{eq:apdx-var-uniform-t3-a}, \\
        &= O(\delta), \label{eq:apdx-var-uniform-t3-b}
\end{align}
where (\ref{eq:apdx-var-uniform-t3-a}) follows because $\| m_G(\kappa) - m_G(\kappa_j) \| = O(\delta)$ and
(\ref{eq:apdx-var-uniform-t3-b}) because $\left\|G(\theta^*, \eta^*)  \right\| < \infty$.

Combining the three terms, we have
\begin{align*}
    \sup_{\kappa \in \ChoiceSimplex} \| \widehat{G}_T(\widehat{\theta}_T, \kappa) - G_{*}(\kappa) \| &= \litOp(1), \nonumber \\
    \therefore \| \widehat{G}_T(\widehat{\theta}_T, \widehat{k}_T) -  G_{*}(\widehat{k}_T) \| &= \litOp(1).
\end{align*}

In the same way, we can show that
\begin{align*}
    \sup_{\kappa \in \ChoiceSimplex} \| \widehat{\Omega}_T(\widehat{\theta}_T, \kappa) - \Omega_{*}(\kappa) \| &= \litOp(1), \\
    \therefore \| \widehat{\Omega}_T(\widehat{\theta}_T, \widehat{k}_T) - \Omega_{*}(\widehat{k}_T) \| &=  \litOp(1).
\end{align*}
By the continuous mapping theorem,
\begin{align*}
    | \hVar_{T}(\widehat{\theta}_T, \widehat{k}_T) - V_{*}(\widehat{k}_T) | &= \litOp(1).
\end{align*}

To summarize, we have the following:
\begin{align}
    & \kappa^* = \arg\min_{\kappa \in \ChoiceSimplex} V_{*}(\kappa), \label{eq:apdx-var-argmin-a} \\
    & \widehat{k}_{T} = \arg\min_{\kappa \in \ChoiceSimplex} \hVar_{T}(\widehat{\theta}_T, \kappa), \label{eq:apdx-var-argmin-b} \\
    & | \hVar_{T}(\widehat{\theta}_T, \widehat{k}_T) - V_{*}(\widehat{k}_T)| =  \litOp(1), \label{eq:apdx-var-argmin-c} \\
    & | \hVar_{T}(\widehat{\theta}_T, \kappa^*) - V_{*}(\kappa^*) |  = \litOp(1). \label{eq:apdx-var-argmin-d}
\end{align}
Next, we follow the proof of 
\citet[Thm.~2.1]{newey1994large}.
For any $\epsilon > 0$, with probability approaching (w.p.a.) $1$,
\begin{align}
    V_{*}(\widehat{k}_T) &\leq \hVar_{T}(\widehat{\theta}_T, \widehat{k}_T) + \epsilon \label{eq:apdx-var-argmin-e} \\
        &\leq \hVar_{T}(\widehat{\theta}_T, \kappa^{*}) + \epsilon \label{eq:apdx-var-argmin-f} \\
        &\leq V_{*}(\kappa^{*}) + 2 \epsilon \label{eq:apdx-var-argmin-g},
\end{align}
where (\ref{eq:apdx-var-argmin-e}) follows by Eq.~\ref{eq:apdx-var-argmin-c},
(\ref{eq:apdx-var-argmin-f}) by Eq.~\ref{eq:apdx-var-argmin-b},
and (\ref{eq:apdx-var-argmin-g}) by Eq.~\ref{eq:apdx-var-argmin-d}.
Let $\BM$ be any open subset of $\ChoiceSimplex$ containing $\kappa^*$
and let $\BM^{c}$ denote its complement.
Let $\Tilde{\kappa} = \arg\min_{\kappa \in (\ChoiceSimplex \cap \BM^{c})} V_{*}(\kappa)$.
The minimum exists because $(\ChoiceSimplex \cap \BM^{c})$ is compact.
By Eq.~\ref{eq:apdx-var-argmin-a},
we have $V_{*}(\kappa^*) \leq V_{*}(\Tilde{\kappa})$.
Thus, for $\epsilon = \frac{1}{2} ( V_{*}(\Tilde{\kappa}) -  V_{*}(\kappa^*)) > 0$ 
(by Assumption~\ref{assump:kappa-star-identify}),
we have $V_{*}(\widehat{k}_T) \leq V_{*}(\Tilde{\kappa})$ w.p.a. $1$.
Therefore $\widehat{k}_T \in \BM$, completing the proof.
\end{proof}

\section{Proofs for OMS-ETG (Section~\ref{sec:oms-etg})}
\label{sec:apdx-oms-etg}
\newtheorem*{thm:etg-finite-rounds}{Theorem~\ref{thm:etg-finite-rounds}}
\begin{thm:etg-finite-rounds}
Suppose that
(i) the conditions of Theorem~\ref{thm:etc-regret} hold
and
(ii) (Finite rounds) $\lim_{T \to \infty} S / T = r$
for some constant $r \in (0, 1)$.
Then, $R_\infty(\pi_{\text{ETG}}) = 0$.
\end{thm:etg-finite-rounds}
\begin{proof}
For any $\epsilon > 0$ and $t_{j} = Te + jS$ for $j \in [0, \hdots, J]$, we have
\begin{align}
    \PB\left( \forall j \in [0, \hdots, J], \widehat{k}_{t_j} \in \BM_{\epsilon}(\kappa^*) \right) &\geq 1 - \sum_{j = 0}^ {J} \PB\left( \widehat{k}_{t_j} \notin \BM_{\epsilon}(\kappa^*) \right) \label{eq:etg-finite-proof-a} \\
        &= 1 - o_p(1), \label{eq:etg-finite-proof-b}
\end{align}
where (\ref{eq:etg-finite-proof-a}) follows by the union bound
and (\ref{eq:etg-finite-proof-b}) because $\forall j, \widehat{k}_{t_j} \ConvProb \kappa^*$ (by Lemma~\ref{lemma:oracle-simplex-estimation}).
Note that $\kappa_{t_{j+1}}$ moves as close as possible to $\widehat{k}_{t_j}$ after every round.
By Eq.~\ref{eq:etg-finite-proof-b}, 
this means that
$\kappa_T$ moves towards $\BM_{\epsilon}(\kappa^*)$
after every round and thus
we have
$\| \kappa_T - \text{proj}(\kappa^*, \Tilde{\Delta}_e )\| \ConvProb 0$.
Due to negligible exploration ($e \in o(1)$),
the set $\Tilde{\Delta}_e$ asymptotically covers the entire simplex $\ChoiceSimplex$, i.e.,
$\text{proj}(\kappa^*, \Tilde{\Delta}_e ) \ConvProb \kappa^*$.
Thus $\kappa_T \ConvProb \kappa^*$ and
by Lemma~\ref{lemma:zero-regret-policy}, $\pi_{\text{ETG}}$ has zero regret.
\end{proof}

\begin{lemma}[{\citep[Prop.~B.4]{waudby2021time}}]\label{lemma:conv-as-lim-sup-conv-prob}
Let $(X_t)_{t=1}^{\infty}$ be a sequence of random variables.
Then $X_t \ConvAS 0 \iff \sup_{T \geq t} X_T \ConvProb 0$.
Equivalently, $X_t \ConvAS 0$ if and only if
\begin{align*}
    \forall \epsilon > 0, \, \lim_{t \to \infty} \PB(\exists T > t: |X_T| > \epsilon) = 0.
\end{align*}
\end{lemma}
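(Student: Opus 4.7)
The plan is to establish the equivalence via the standard characterization of almost sure convergence as the probability of a $\limsup$ event, and then use continuity of the probability measure along a decreasing sequence of sets to connect this with convergence in probability of the tail supremum.

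First, I would rewrite the event $\{X_t \to 0\}$ in set-theoretic form. By definition,
\begin{align*}
    \{X_t \to 0\}^c = \bigcup_{\epsilon > 0} \bigcap_{t \geq 1} \bigcup_{T \geq t} \{|X_T| > \epsilon\},
\end{align*}
and since the inner union over $\epsilon > 0$ is increasing in $\epsilon$, it suffices to take $\epsilon$ ranging over a countable dense subset of $(0, \infty)$. Therefore $X_t \ConvAS 0$ if and only if, for every $\epsilon > 0$,
\begin{align*}
    \PB\!\left( \bigcap_{t \geq 1} \bigcup_{T \geq t} \{|X_T| > \epsilon\} \right) = 0.
\end{align*}

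Next, I would apply continuity of probability from above. The sets $A_t(\epsilon) := \bigcup_{T \geq t} \{|X_T| > \epsilon\}$ are decreasing in $t$, so by downward monotone continuity,
\begin{align*}
    \PB\!\left( \bigcap_{t \geq 1} A_t(\epsilon) \right) = \lim_{t \to \infty} \PB(A_t(\epsilon)).
\end{align*}
The crucial observation is that $A_t(\epsilon) = \{\exists T \geq t : |X_T| > \epsilon\} = \{\sup_{T \geq t} |X_T| > \epsilon\}$, so $\PB(A_t(\epsilon)) = \PB(\sup_{T \geq t}|X_T| > \epsilon)$.

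Combining these steps, $X_t \ConvAS 0$ if and only if $\lim_{t \to \infty} \PB(\sup_{T \geq t} |X_T| > \epsilon) = 0$ for every $\epsilon > 0$, which is exactly the definition of $\sup_{T \geq t} |X_T| \ConvProb 0$. The second equivalent formulation in the statement follows immediately since $\PB(\exists T > t : |X_T| > \epsilon) = \PB(\sup_{T > t}|X_T| > \epsilon)$, and shifting the index from $T > t$ to $T \geq t$ does not affect the limit as $t \to \infty$.

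There is no real obstacle here, as the argument is a textbook application of measure-theoretic continuity; the only subtle point is the reduction from uncountably many $\epsilon$ to a countable sub-collection, which is handled by the monotonicity of $A_t(\epsilon)$ in $\epsilon$.
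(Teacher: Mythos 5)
Your argument is correct: it is the standard measure-theoretic proof, rewriting $\{X_t \to 0\}^c$ as $\bigcup_{\epsilon>0}\bigcap_t\bigcup_{T\ge t}\{|X_T|>\epsilon\}$, reducing to countably many $\epsilon$ by monotonicity, and applying continuity from above to the decreasing sets $A_t(\epsilon)=\{\sup_{T\ge t}|X_T|>\epsilon\}$ (your phrase ``increasing in $\epsilon$'' should read that the sets shrink as $\epsilon$ grows, but the reduction you draw from it is right). The paper itself gives no proof of this lemma---it simply imports it as Proposition~B.4 of \citet{waudby2021time}---and that cited result is established by essentially the same elementary argument you give, so there is nothing further to reconcile.
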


\begin{lemma}\label{lemma:avg-eta-convas}
Let $X_t$ be a sequence of
non-negative random variables
such that
(i) $X_t \ConvAS 0$ and 
(ii) $\sup_t \EB[X_{t}^{1+\delta}] < \infty$ for some $\delta > 0$. 
Then, $\sum_{t=1}^{T} X_t / T = \litOas(1)$.
\end{lemma}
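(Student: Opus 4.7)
The plan is to apply the classical Ces\`aro mean theorem pointwise on the almost sure event where $X_t(\omega) \to 0$. This reduces the claim to a deterministic fact about numerical sequences: any sequence of reals converging to $0$ has Ces\`aro averages also converging to $0$, regardless of how large the initial terms are. Condition (ii) is not essential for the core argument; it is included to match the form of Lemma~\ref{lemma:avg-eta-convprob} and to guarantee $\PB(X_t < \infty) = 1$ for every $t$, so that the initial partial sums below are finite.

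Concretely, let $A := \{ \omega : X_t(\omega) \to 0 \text{ and } X_t(\omega) < \infty \text{ for all } t \}$. Condition (i), together with the a.s.\ finiteness of each $X_t$ furnished by (ii) (since $\EB[X_t^{1+\delta}] < \infty$ implies $X_t < \infty$ a.s.\ and a countable intersection of full-measure sets is still full measure), gives $\PB(A) = 1$. Fix any $\omega \in A$ and any $\varepsilon > 0$. Since $X_t(\omega) \to 0$, there is a finite $N(\omega)$ such that $0 \leq X_t(\omega) < \varepsilon / 2$ for every $t > N(\omega)$. Set $C(\omega) := \sum_{t=1}^{N(\omega)} X_t(\omega) < \infty$. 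Then for every $T > N(\omega)$,
\[
\frac{1}{T} \sum_{t=1}^{T} X_t(\omega) \leq \frac{C(\omega)}{T} + \frac{T - N(\omega)}{T} \cdot \frac{\varepsilon}{2} \leq \frac{C(\omega)}{T} + \frac{\varepsilon}{2}.
\]
Letting $T \to \infty$ gives $\limsup_T T^{-1} \sum_{t=1}^{T} X_t(\omega) \leq \varepsilon/2$, and since $\varepsilon$ is arbitrary, $T^{-1} \sum_{t=1}^{T} X_t(\omega) \to 0$ for every $\omega \in A$, i.e.\ $T^{-1} \sum_{t=1}^{T} X_t \ConvAS 0$, which is the desired $\litOas(1)$ statement.

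There is no substantive obstacle here: the heavy lifting is done by condition (i) and the deterministic Ces\`aro argument. If one preferred a proof that mirrors the structure of Lemma~\ref{lemma:avg-eta-convprob} more closely, the alternative route would be to use Markov's inequality on $T^{-1} \sum_{t=1}^{T} X_t$ and then invoke Lemma~\ref{lemma:conv-as-lim-sup-conv-prob} to upgrade convergence in probability to almost sure convergence along the tail supremum; that path genuinely uses condition (ii) and would explain its presence in the hypotheses, but the pointwise Ces\`aro argument above is cleaner and already delivers the conclusion.
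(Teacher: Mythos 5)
Your proof is correct, but it takes a genuinely different and more elementary route than the paper. You work pathwise: on the full-measure event where $X_t(\omega) \to 0$ (and all terms are finite), the deterministic Ces\`aro argument immediately gives $T^{-1}\sum_{t=1}^T X_t(\omega) \to 0$, so condition (ii) plays no real role beyond a.s.\ finiteness. The paper instead argues through the tail-supremum characterization of almost sure convergence (Lemma~\ref{lemma:conv-as-lim-sup-conv-prob}): it splits $S_T/T \leq S_t/t + T^{-1}\sum_{i=t+1}^T X_i$, controls the head term $S_t/t$ via Markov's inequality --- which is where condition (ii) genuinely enters, through Prop.~\ref{prop:apdx-conv-of-moments} to get $\EB[X_i] \to 0$ and hence $\sum_{i \leq t}\EB[X_i] = o(t)$ --- and controls the tail term using $X_t \ConvAS 0$ via Lemma~\ref{lemma:conv-as-lim-sup-conv-prob}, before converting back to an almost sure statement with the same lemma. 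Your alternative sketch in the last paragraph is essentially the paper's argument. What your route buys is brevity and the observation that (ii) is redundant here; what the paper's route buys is a template that parallels its in-probability analogue (Lemma~\ref{lemma:avg-eta-convprob}) and extends directly to the rate version (Lemma~\ref{lemma:apdx-log-t-alm-sure-conv}), where the moment condition is no longer dispensable because one needs $\EB[X_i] = o(\sqrt{\log i / i})$ rather than mere pathwise convergence.
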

\begin{proof}
Let $S_T := \sum_{t=1}^{T} X_t$. For any $T > t$,
\begin{align*}
    \frac{S_T}{T} &\leq \frac{S_{t}}{t} + \frac{1}{T} \sum_{i=t+1}^{T} X_i.
\end{align*}
By the union bound, for any $\epsilon > 0$,
\begin{align}
    \PB\left( \forall T > t : \frac{S_{T}}{T} \leq 2 \epsilon \right) &\geq 1 - \underbrace{\PB\left( \frac{S_{t}}{t} > \epsilon \right)}_{:= a} - \underbrace{\PB\left( \exists T > t : \frac{1}{T} \sum_{i=t+1}^{T} X_i > \epsilon \right)}_{:= b}. \label{eq:apdx-avg-eta-convas-proof-a}
\end{align}
We show that Term \emph{a} in Eq.~\ref{eq:apdx-avg-eta-convas-proof-a} is $o(1)$. By Markov's inequality, we have
\begin{align}
    \PB\left( \frac{S_{t}}{t} > \epsilon \right) &\leq \frac{ \sum_{i=1}^{t} \EB[X_i]}{t \epsilon} \nonumber \\
        &= \frac{o(t)}{t \epsilon} \label{eq:apdx-avg-eta-convas-proof-b} \\
        &= o(1), \nonumber
\end{align}
where (\ref{eq:apdx-avg-eta-convas-proof-b}) follows by Condition~(ii) and Prop.~\ref{prop:apdx-conv-of-moments}.
Next, we show that Term \emph{b} in Eq.~\ref{eq:apdx-avg-eta-convas-proof-a} is $o(1)$. 
For any $\epsilon > 0$,
\begin{align}
    \PB\left( \forall T > t : \frac{1}{T} \sum_{i=t+1}^{T} X_{i} < \epsilon \right) &\geq \PB\left( \forall T > t : X_{T} < \epsilon \right) \nonumber \\
    &= 1 - o(1), \label{eq:apdx-avg-eta-convas-proof-c} \\
    \therefore \PB\left( \exists T > t : \frac{1}{T} \sum_{i=t}^{T} X_i > \epsilon \right) &= o(1), \nonumber
\end{align}
where (\ref{eq:apdx-avg-eta-convas-proof-c}) follows
by Lemma~\ref{lemma:conv-as-lim-sup-conv-prob}
because $X_t \ConvAS 0$.
Using these results in Eq.~\ref{eq:apdx-avg-eta-convas-proof-a}, we get
\begin{align*}
    \PB\left( \forall T > t : \frac{S_{T}}{T} \leq 2 \epsilon \right) &\geq 1 - o(1).
\end{align*}
Therefore, by Lemma~\ref{lemma:conv-as-lim-sup-conv-prob}, $S_T / T = \litOas(1)$.
\end{proof}

\begin{lemma}[Strong uniform convergence]\label{lemma:apdx-uniform-conv-almost-sure-semiparametric}
Suppose that (i) $a_t(\theta, \eta) := a(X_t; \theta, \eta)$ satisfies Property~\ref{property:ulln};
(ii) (Nuisance strong consistency) $\| \hEta - \eta^* \| \ConvAS 0$; and
(iii) $\forall \eta \in \TM, \, \EB[\| \eta - \eta^* \|^2] < \infty$.
Let $a_{*}(\theta, \eta) = \EB[a(X_t; \theta, \eta)]$ and
$s_t \in \{ 0, 1 \}$ be $H_{t-1}$-measurable. Then,
\begin{align*}
    \sup_{\theta \in \Theta} \left| \frac{1}{T} \sum_{t=1}^{T} s_t
    \left\{ a_t(\theta, \hEta)
     - a_*(\theta, \eta^*)
    \right\} \right| \ConvAS 0.
\end{align*}
\end{lemma}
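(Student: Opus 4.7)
The plan is to mirror the proof of Lemma~\ref{lemma:apdx-uniform-conv-prob-semiparametric} but upgrade every convergence-in-probability step to almost-sure convergence, using the nuisance strong consistency in Condition~(ii) and Lemma~\ref{lemma:avg-eta-convas} in place of Lemma~\ref{lemma:avg-eta-convprob}. Specifically, I would fix a minimal $\delta$-cover $\{\theta_k\}_{k=1}^K$ of the compact set $\Theta$ (finite by Assumption~\ref{assump:standard-gmm}b), and apply the same triangle inequality decomposition into three terms T1, T2, T3.

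For T1, Lipschitzness (Property~\ref{property:ulln}(ii)) gives $T^{-1}\sum_t \sup_\theta |a_t(\theta,\hEta) - a_t(\theta_k,\hEta)| \leq \delta \cdot T^{-1}\sum_t L(X_t)$. Since $L(X_t) - \EB[L(X_t)]$ is a martingale difference sequence with bounded second moment (again by Property~\ref{property:ulln}(ii)), Corollary~\ref{cor:apdx-mds-slln} yields $T^{-1}\sum_t L(X_t) \to \EB[L(X_t)] \leq \sqrt{L}$ almost surely. Choosing $\delta$ small then bounds T1 a.s. by an arbitrarily small $\epsilon$. For T3, the same Lipschitz bound applied to the population function $a_*(\cdot,\eta^*)$ gives a deterministic bound of order $\delta$.

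The main work is T2, which I decompose exactly as in the original proof into the three sub-terms $a$, $b$, $c$. Sub-term $a$ is a MDS at the true nuisance with $\sup_t \EB[a_t(\theta_k,\eta^*)^2]<\infty$ (Property~\ref{property:ulln}(i)), so Corollary~\ref{cor:apdx-mds-slln} gives $\litOas(1)$. Sub-term $b$ is the MDS of $m_i(s_t)(D_t - \EB_{t-1}[D_t])$ with $D_t = a_t(\theta_k,\hEta) - a_t(\theta_k,\eta^*)$; bounding its conditional second moment by $L \EB[\|\hEta-\eta^*\|^2]$ via Property~\ref{property:ulln}(ii) and using Condition~(iii) to ensure uniform boundedness of this second moment over $t$, another application of Corollary~\ref{cor:apdx-mds-slln} yields $\litOas(1)$. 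Sub-term $c$ is bounded by $\sqrt{L}\cdot T^{-1}\sum_t \|\hEta - \eta^*\|$; since $\|\hEta - \eta^*\| \ConvAS 0$ by Condition~(ii) and $\sup_t \EB[\|\hEta - \eta^*\|^{2+\delta}] < \infty$ by Property~\ref{property:ulln}(iv), Lemma~\ref{lemma:avg-eta-convas} (applied with $X_t = \|\hEta - \eta^*\|$) gives this as $\litOas(1)$. Summing over the finite cover via the union of a.s. null sets controls the maximum over $k \in [K]$, and letting $\epsilon \to 0$ (via $\delta \to 0$) completes the argument.

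The main obstacle I anticipate is sub-term $c$: the original proof only needed convergence in probability of an average of the nuisance errors, which followed from Markov's inequality. Upgrading to almost-sure convergence requires a Cesàro-type lemma for a.s.-converging sequences of dependent random variables with uniformly bounded $(1+\delta)$ moments, which is exactly what Lemma~\ref{lemma:avg-eta-convas} provides. Everything else is a mechanical strengthening from Corollary~\ref{cor:apdx-mds-slln} already delivering a.s. (not just in-probability) convergence for MDS averages, so no further technical work is needed there.
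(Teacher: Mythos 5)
Your proposal is correct and follows essentially the same route as the paper: the same $\delta$-cover decomposition into T1--T3, reusing the almost-sure MDS arguments (Corollary~\ref{cor:apdx-mds-slln}) for T1 and the first two sub-terms of T2, and replacing the Markov-inequality step for sub-term $c$ with Lemma~\ref{lemma:avg-eta-convas} applied to $\|\hEta-\eta^*\|$, which is exactly the one place where the strong nuisance consistency is needed. The only nitpick is the stray $m_i(s_t)$ in sub-term $b$, which should just be $s_t$ in this lemma's notation; it does not affect the argument.
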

\begin{proof}
We prove this in the same way as Lemma~\ref{lemma:apdx-uniform-conv-prob-semiparametric},
strengthening convergence in probability to almost sure convergence
using the stronger assumptions on the nuisance estimators.
We begin with the same decomposition as in Lemma~\ref{lemma:apdx-uniform-conv-prob-semiparametric}
and show that each term converges almost surely to zero.
Let $\left\{ \theta_i \right\}_{i=1}^{K}$ be a minimal $\delta$-cover of $\Theta$ and
$\BM_{\delta}(\theta_k)$ denote the $\delta$-ball around $\theta_k$. 
By compactness of $\Theta$, $K$ is finite.
Going forward, for any $\theta \in \Theta$, 
let $\theta_k$ denote the element of the $\delta$-cover
such that $\theta \in \BM_{\delta}(\theta_k)$.
By the triangle inequality,
\begin{align*}
    \sup_{\theta \in \Theta} & \left| \frac{1}{T} \sum_{t=1}^{T}
    s_t \left\{ a_t(\theta, \hEta)
     - a_*(\theta, \eta^*) \right\} \right| \\
        &\leq \begin{aligned}[t]
            & \underbrace{\frac{1}{T} \sum_{t=1}^{T} \sup_{\theta} \left| a_t(\theta, \hEta) - a_t(\theta_k, \hEta) \right|}_{\text{T1}} +
            \underbrace{\max_{k \in [K]} \left| \frac{1}{T} \sum_{t=1}^{T} s_t \left\{ a_t(\theta_k, \hEta) - a_{*}(\theta_k, \eta^*) \right\}  \right|}_{\text{T2}} + \\
            & \underbrace{\max_{k \in [K]} \left| a_{*}(\theta_k, \eta^*) - a_{*}(\theta, \eta^*) \right|}_{\text{T3}}
        \end{aligned}.
\end{align*}

\paragraph{Term (T1).}
We showed in the proof of Lemma~\ref{lemma:apdx-uniform-conv-prob-semiparametric}
that this term is $\litOas(1)$.

\paragraph{Term (T2).}
We have
\begin{align*}
    & \left| \frac{1}{T} \sum_t s_t \left[ a_t(\theta_k, \hEta) - a_{*}(\theta_k, \eta^*) \right] \right| \\
    &= \begin{aligned}[t]
        & \frac{1}{T} \sum_t s_t \left[ a_t(\theta_k, \eta^*) - a_{*}(\theta_k, \eta^*) \right] + \\
        & \frac{1}{T} \sum_t s_t \left[ a_t(\theta_k, \hEta) - a_t(\theta_k, \eta^*) - \EB_{t-1}[a_t(\theta_k, \hEta) - a_t(\theta_k, \eta^*) ]  \right] + \\
        & \frac{1}{T} \sum_t s_t \EB_{t-1}[a_t(\theta_k, \hEta) - a_t(\theta_k, \eta^*)]
    \end{aligned}
\end{align*}
In the proof of Lemma~\ref{lemma:apdx-uniform-conv-prob-semiparametric},
we showed that the first two terms are $\litOas(1)$.
For the last term, we have
\begin{align}
    \frac{1}{T} \sum_t s_t \EB_{t-1}[a_t(\theta_k, \hEta) - a_t(\theta_k, \eta^*)] 
        &\leq \frac{\sqrt{L}}{T} \sum_t \| \hEta - \eta^* \| \label{eq:apdx-alm-sure-term-t2-3-a} \\
        &= \litOas(1) \label{eq:apdx-alm-sure-term-t2-3-b},
\end{align}
where (\ref{eq:apdx-alm-sure-term-t2-3-a}) follows by Property~\ref{property:ulln}(ii, iii)
and (\ref{eq:apdx-alm-sure-term-t2-3-b}) follows by Lemma~\ref{lemma:avg-eta-convas}.

\paragraph{Term (T3).}
We showed in the proof of Lemma~\ref{lemma:apdx-uniform-conv-prob-semiparametric}
that this term is $\litOas(1)$.
\end{proof}

\newtheorem*{lemma:oracle-simplex-estimation-almost-sure}{Lemma~\ref{lemma:oracle-simplex-estimation-almost-sure}}
\begin{lemma:oracle-simplex-estimation-almost-sure}
Suppose that (i) the conditions of Lemma~\ref{lemma:oracle-simplex-estimation} hold and
(ii) (Nuisance strong consistency) $\| \hEta - \eta^* \| \ConvAS 0$.
Then $\widehat{\theta}_T \ConvAS \theta^*$
and $\widehat{k}_T \ConvAS \kappa^*$.
\end{lemma:oracle-simplex-estimation-almost-sure}
\begin{proof}
This can be proved in the same way as Lemma~\ref{lemma:oracle-simplex-estimation},
using Lemma~\ref{lemma:apdx-uniform-conv-almost-sure-semiparametric}
to strengthen convergence in probability to
almost sure convergence.
\end{proof}

\newtheorem*{thm:etg-infinite-rounds}{Theorem~\ref{thm:etg-infinite-rounds}}
\begin{thm:etg-infinite-rounds}
Suppose that (i) the conditions of Theorem~\ref{thm:etc-regret} hold and
(ii) the conditions of Lemma~\ref{lemma:oracle-simplex-estimation-almost-sure} hold.
Then, for any batch size $S$, $R_\infty(\pi_{\text{ETG}}) = 0$.
\end{thm:etg-infinite-rounds}
\begin{proof}
Let $\widehat{A}_{\epsilon}(t)$ denote the event that 
the estimated oracle simplex $\widehat{k}$ is inside 
$\BM_{\epsilon}(\kappa^*)$ for all time steps after $t$: 
$\widehat{A}_{\epsilon}(t) := \{ \forall t' > t : \widehat{k}_{t'} \in \BM_{\epsilon}(\kappa^*) \}$.
We define an analogous event for the selection simplex $\kappa_t$ (Defn.~\ref{defn:selection-simplex}):
$A_{\epsilon}(t) := \{ \forall t' > t : \kappa_{t'} \in \BM_{\epsilon}(\kappa^*) \}$.
Since $Te \to \infty$ as $T \to \infty$,
by Lemma~\ref{lemma:oracle-simplex-estimation-almost-sure}, 
$\widehat{k}_{Te} \ConvAS \kappa^*$. 
By Lemma~\ref{lemma:conv-as-lim-sup-conv-prob}, this is equivalent to
\begin{align}
    \forall \epsilon > 0, \, \lim_{T \to \infty} \PB\left( \widehat{A}_{\epsilon}(Te) \right) = 1. \label{eq:etg-proof-infinite-a}
\end{align}
Observe that $\kappa_t$ is getting as close as possible to $\widehat{k}_t$ 
after each round.
Since the exploration is negligible ($e \in o(1)$),
if the event $\widehat{A}_{\epsilon}(Te)$ occurs,
for some $t_0 > Te$, the event $A_{\epsilon}(t_0)$ 
also occurs.
That is, when $\widehat{A}_{\epsilon}(Te)$ occurs,
$\kappa_{t_0}$ will eventually enter the $\epsilon$-ball around $\kappa^*$ 
at some time $t_0 > Te$ and remain inside it for subsequent time steps.
Therefore, $\forall \epsilon > 0$,
\begin{align}
     \lim_{T \to \infty} \PB\left( A_{\epsilon}(t_0) \right) &\geq \lim_{T \to \infty} \PB\left( \widehat{A}_{\epsilon}(Te) \right) \nonumber \\
      &= 1, \label{eq:etg-proof-infinite-b} \\
    \therefore \kappa_T &\ConvAS \kappa^*, \label{eq:etg-proof-infinite-c}
\end{align}
where (\ref{eq:etg-proof-infinite-b}) follows by Eq.~\ref{eq:etg-proof-infinite-a}
and (\ref{eq:etg-proof-infinite-c}) by Lemma~\ref{lemma:conv-as-lim-sup-conv-prob}.
By Lemma~\ref{lemma:zero-regret-policy}, the regret is zero.
\end{proof}

\begin{figure}[t]
\centering
{
    \setlength{\interspacetitleruled}{0pt}%
    \setlength{\algotitleheightrule}{0pt}%
    \begin{algorithm}[H]
    \SetAlgoLined
    \KwInput{Horizon $T \in \mathbb{N}$, Exploration policy $(\epsilon_t)_{t=1}^{T}$.}
    $\widehat{k} \gets \ctrSim$\;
    $n \gets 0$\;
    \For{$t \in [1, 2, \hdots, T]$}{
        $n \gets n + 1$\;
        Sample $u \sim \text{Uniform}([0, 1])$\;
        \lIf{ $u \leq \epsilon_t$ }{
            Query one of the data sources with equal probability
        }
        \uElse{
            Collect the next samples s.t. $\kappa_{n} = \widehat{k}$\;
            $\widehat{\theta}_{n} \gets \arg\min_{\theta \in \Theta} \widehat{Q}_{n}(\theta, (\hEta)_{t=1}^{n})$\;
            $\widehat{k}_{n} \gets \arg\min_{\kappa \in \ChoiceSimplex} \hVar_n(\widehat{\theta}_{n}, \kappa)$\;
            $\widehat{k} \gets \text{proj}(\widehat{k}_{n}, \Tilde{\Delta}_{n+1})$\;
        }
    }
    $\widehat{\theta}_T = \arg\min_{\theta \in \Theta} \widehat{Q}_T(\theta, (\hEta)_{t=1}^{T})$\;
    \KwOutput{$f_{\text{tar}}(\widehat{\theta}_T)$}
    \end{algorithm}
}
\label{fig:policy-algorithm-eps-greedy}
\caption{The $\epsilon$-greedy data collection policy.}
\end{figure}

\newtheorem*{thm:eps-greedy}{Theorem~\ref{thm:eps-greedy}}
\begin{thm:eps-greedy}
Let $(\epsilon_t)_{t=1}^{\infty}$ be a non-increasing sequence and
$E_T = \sum_{t=1}^{T} \epsilon_t$.
Suppose that (i) the conditions of Prop.~\ref{prop:asymp-normality} hold; 
(ii) the conditions of Lemma~\ref{lemma:oracle-simplex-estimation-almost-sure} hold;
and
(iii) $E_T = o(T)$ and 
$E_T \to \infty$ (e.g., $\epsilon_t \asymp 1 / t$).
Then, the $\epsilon$-greedy policy suffers zero regret: 
$R_\infty(\pi_{\epsilon\text{-greedy}}) = 0$.
\end{thm:eps-greedy}
\begin{proof}
This can be proved similarly to Theorem~\ref{thm:etg-infinite-rounds}.
Since $E_T \to \infty$, by Lemma~\ref{lemma:oracle-simplex-estimation-almost-sure}, 
$\widehat{k}_{E_T} \ConvAS \kappa^*$. 
Applying Lemma~\ref{lemma:conv-as-lim-sup-conv-prob}, we have
\begin{align}
    \forall \epsilon > 0, \, \lim_{T \to \infty} \PB\left( \widehat{A}_{\epsilon}(E_T) \right) = 1. \label{eq:epg-proof-infinite-a}
\end{align}
Since $\kappa_t$ is getting as close as possible to $\widehat{k}_t$,
by similar reasoning as Thm.~\ref{thm:etg-infinite-rounds},
for some $t_0 > E_T$, we have
\begin{align}
    \forall \epsilon > 0, \, \lim_{T \to \infty} \PB\left( A_{\epsilon}(t_0) \right) &\geq \lim_{T \to \infty} \PB\left( \widehat{A}_{\epsilon}(E_T) \right) 
    \nonumber \\
    &= 1 \label{eq:epg-proof-infinite-b} \\
    \therefore \kappa_T &\ConvAS \kappa^*, \label{eq:epg-proof-infinite-c}
\end{align}
where (\ref{eq:epg-proof-infinite-b}) follows by Eq.~\ref{eq:epg-proof-infinite-a},
and (\ref{eq:epg-proof-infinite-c}) by Lemma~\ref{lemma:conv-as-lim-sup-conv-prob}.
By Lemma~\ref{lemma:zero-regret-policy}, the regret is zero.
\end{proof}

\section{OMS with cost structure}
\label{sec:apdx-cost-structure}
\begin{figure}[t]
\centering
\begin{subfigure}[b]{0.49\textwidth}
{
    \setlength{\interspacetitleruled}{0pt}%
    \setlength{\algotitleheightrule}{0pt}%
    \begin{algorithm}[H]
    \SetAlgoLined
    \KwInput{Budget $B$, Exploration $e$, Cost $c$}
    $n \gets \floor{Be / (\ctrSim^\top c)}$\;
    Collect $n$ samples s.t. $\kappa_{n} = \ctrSim$\;
    $\widehat{\theta}_{n} \gets \arg\min_{\theta \in \Theta} \widehat{Q}_n(\theta, (\hEta)_{t=1}^{n})$\;
    $\widehat{k} \gets \arg\min_{\kappa \in \ChoiceSimplex} \hVar_n(\widehat{\theta}_{n}, \kappa) (\kappa^\top c)$\;
    Use remaining budget such that $\kappa_T = \text{proj}(\widehat{k}, \Tilde{\Delta})$ (see Eq.~\ref{eq:apdx-etc-cs-feasible-region})\;
    $\widehat{\theta}_T \gets \arg\min_{\theta \in \Theta} \widehat{Q}_T(\theta, (\hEta)_{t=1}^{T})$\;
    \KwOutput{$f_{\text{tar}}(\widehat{\theta}_T)$}
    \end{algorithm}
}
\caption{The ETC-CS policy.}
\label{fig:policy-algorithm-etc-cs}
\end{subfigure}
\hfill
\begin{subfigure}[b]{0.49\textwidth}
{
    \setlength{\interspacetitleruled}{0pt}%
    \setlength{\algotitleheightrule}{0pt}%
    \begin{algorithm}[H]
    \SetAlgoLined
    \KwInput{Budget $B$, Batch size $S$, Exploration $e$, Cost $c$}
    $\widehat{k} \gets \ctrSim$, Rounds $J \gets \floor*{B(1-e) / S}$\;
    $n \gets 0$\;
    \For{$H \in [Be, \underbrace{S, \hdots, S}_{J \, \text{times}}]$}{
        $s \gets \floor{H / (\widehat{k}^\top c)}$\;
        $n \gets n + s$\;
        Collect $s$ samples s.t. $\kappa_{n} = \widehat{k}$\;
        $\widehat{\theta}_{n} \gets \arg\min_{\theta \in \Theta} \widehat{Q}_{n}(\theta, (\hEta)_{t=1}^{n})$\;
        $\widehat{k}_{n} \gets \arg\min_{\kappa \in \ChoiceSimplex} \hVar_n(\widehat{\theta}_{n}, \kappa) (\kappa^\top c)$\;
        $\widehat{k} \gets \text{proj}(\widehat{k}_{n}, \Tilde{\Delta}_{j+1})$ (see Eq.~\ref{eq:apdx-etg-cs-feasible-region})\;
    }
    $\widehat{\theta}_T = \arg\min_{\theta \in \Theta} \widehat{Q}_T(\theta, (\hEta)_{t=1}^{T})$\;
    \KwOutput{$f_{\text{tar}}(\widehat{\theta}_T)$}
    \end{algorithm}
}
\caption{The ETG-CS policy.}
\label{fig:policy-algorithm-etg-cs}
\end{subfigure}
\caption{Algorithms for OMS-ETC-CS and OMS-ETG-CS.}
\label{fig:policy-algorithm-oms-cs}
\end{figure}

\subsection{Feasible values of the selection simplex}\label{sec:apdx-cost-feasible}

\paragraph{Feasible values of $\kappa_T$ for OMS-ETC-CS.}
The agent uses $Be$ budget for exploration. The number of samples collected after
exploration is 
\begin{align*}
    T_e = \floor*{\frac{Be}{\ctrSim^\top c}}.
\end{align*}
With the remaining $B (1 - e)$ budget, the agent can collect samples with
any $\kappa \in \ChoiceSimplex$. The total number of samples collected is
\begin{align*}
    T = T_e + \floor*{\frac{B(1 - e)}{\kappa^\top c}}.
\end{align*}
Therefore, the values of $\kappa_T$ that can be achieved are
\begin{align}
    \Tilde{\Delta} &= \left\{ \frac{T_e \ctrSim + (T - T_e) \kappa}{T} : \kappa \in \ChoiceSimplex \right\}. \label{eq:apdx-etc-cs-feasible-region}
\end{align}

\paragraph{Feasible values of $\kappa_T$ for OMS-ETG-CS.}
Let the number of samples collected after round $j$ be $T_j$.
For any $\kappa \in \ChoiceSimplex$, the number of samples collected after
round $j + 1$ is
\begin{align*}
    T_{j+1} = T_j + \floor*{\frac{S}{\kappa^\top c}}.
\end{align*}
Therefore, the values that $\kappa_{T_{j+1}}$ can achieve are
\begin{align}
    \Tilde{\Delta}_{j+1} = \left\{ \frac{ T_j \kappa_{T_j} + (T_{j+1} - T_j) \kappa }{T_{j+1}} : \kappa \in \ChoiceSimplex \right\}. \label{eq:apdx-etg-cs-feasible-region}
\end{align}

\subsection{Proofs}\label{sec:apdx-cost-proofs}

When the data sources have an associated cost structure,
the horizon $T$ is a random variable that depends on the policy $\pi$: 
\begin{align}
    T = \max \{ t \in \NB :  t \cdot (\kappa^\top_t c) \leq B \}. \label{eq:apdx-random-T-cost}
\end{align}
The proofs of consistency and asymptotic inference
in the cost-structure setting
are similar to the uniform cost setting
and so we only highlight the differences.
Let $c_{\min} = \min_{i} c_i$ and $c_{\max} = \max_{i} c_i$
be the minimum and maximum costs of the data sources.
Observe that
\begin{align*}
    \underbrace{\frac{B}{c_{\max}}}_{T_{\min}} \leq T \leq \underbrace{\frac{B}{c_{\min}}}_{T_{\max}}.
\end{align*}
The horizon $T \to \infty$ as $B \to \infty$
and for a given budget $B$, the horizon $T$ is a stopping time.
Therefore, if a sequence $X_t$ is a MDS, then the sequence
$X_t \mathbbm{1}(t \leq T)$ is also a MDS.

To extend the consistency results to this setting,
we show that the MDS SLLN (Cor.~\ref{cor:apdx-mds-slln}) also
holds for this random $T$.

\begin{lemma}
Let $\{ X_t, t \in \mathbb{N} \}$ be a martingale difference sequence such that $\sup_k \EB[X^2_k] < \infty$. 
Then $\sum_{t=1}^{T} X_t / T \ConvAS 0$ as $B \to \infty$, 
where $T$ is defined in Eq.~\ref{eq:apdx-random-T-cost}.
\end{lemma}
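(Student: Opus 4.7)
The plan is to deduce the result directly from the deterministic-horizon martingale SLLN (Corollary~\ref{cor:apdx-mds-slln}), exploiting the fact that the random horizon $T$ is sandwiched between two deterministic sequences, both of which diverge with $B$. The argument is essentially a ``random subsequence'' reduction: because $T$ admits a deterministic lower bound going to infinity, any almost-sure convergence along the full integer sequence transfers automatically to convergence along $T$.

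First, I would apply Corollary~\ref{cor:apdx-mds-slln} to the MDS $\{X_t\}$, which is legitimate because $\sup_k \EB[X_k^2] < \infty$ by hypothesis. This yields $S_n/n \ConvAS 0$ where $S_n := \sum_{t=1}^n X_t$ and $n$ ranges over $\NB$. Let $A$ denote the probability-one event on which this convergence holds.

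Next, I would use the deterministic bound $T \geq T_{\min} := \lfloor B/c_{\max} \rfloor$ noted just above the lemma. Fix $\omega \in A$ and $\epsilon > 0$; by the definition of a.s.~convergence, there exists $N(\omega) \in \NB$ with $|S_n(\omega)/n| < \epsilon$ for every integer $n \geq N(\omega)$. Since $T_{\min}$ is deterministic and $T_{\min} \to \infty$ as $B \to \infty$, for all sufficiently large $B$ we have $T(\omega) \geq T_{\min} \geq N(\omega)$, and therefore $|S_{T(\omega)}(\omega)/T(\omega)| < \epsilon$. As $\epsilon > 0$ and $\omega \in A$ were arbitrary and $\PB(A) = 1$, this gives $S_T/T \ConvAS 0$ as $B \to \infty$.

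There is no real obstacle here: the only subtlety worth flagging is that $T$ is a random stopping time rather than a deterministic index, but this does not require a maximal inequality or any uniform control over $[T_{\min}, T_{\max}]$. The deterministic lower bound $T \geq T_{\min}$, combined with the fact that a.s.~convergence along all integers is a sample-path statement that may be evaluated at any a.s.-divergent integer-valued random index, suffices. The square-integrability hypothesis is used only once, to invoke Corollary~\ref{cor:apdx-mds-slln} in the first step.
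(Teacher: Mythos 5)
Your proof is correct, but it takes a different route from the paper's. The paper keeps the random horizon inside the sum: it writes $\frac{1}{T}\sum_{t=1}^{T}X_t = \frac{T_{\max}}{T}\cdot\frac{1}{T_{\max}}\sum_{t=1}^{T_{\max}}X_t\mathbbm{1}(t\leq T)$, uses the fact that $T$ is a stopping time so that the stopped sequence $X_t\mathbbm{1}(t\leq T)$ is still a martingale difference sequence, applies Corollary~\ref{cor:apdx-mds-slln} to that stopped sequence over the deterministic horizon $T_{\max}=B/c_{\min}$, and controls the prefactor by $T_{\max}/T\leq c_{\max}/c_{\min}$. You instead apply Corollary~\ref{cor:apdx-mds-slln} once to the original sequence, obtaining the pathwise statement $S_n/n\to 0$ for all integers $n$ on a probability-one event, and then transfer this limit to the random index $T$ using only the deterministic divergent lower bound $T\geq \lfloor B/c_{\max}\rfloor$. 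Your argument never needs the stopping-time structure of $T$ (nor any uniform control over $[T_{\min},T_{\max}]$), and it sidesteps a mild awkwardness in the paper's route, namely that the stopped sequence $X_t\mathbbm{1}(t\leq T)$ changes with $B$, so the SLLN is really being invoked for a family of sequences rather than a single one; your pathwise transfer is the more elementary and self-contained of the two. What the paper's formulation buys in exchange is a template that meshes with the rest of Appendix~\ref{sec:apdx-cost-proofs} (e.g., the CLT argument in Prop.~\ref{prop:asymp-norm-cost}), where rewriting sums over the random horizon as stopped sums over $T_{\max}$ is the device actually needed.
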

\begin{proof}
We have
\begin{align*}
    \left| \frac{1}{T} \sum_{t=1}^{T} X_t \right| &= \frac{T_{\max}}{T} \left| \frac{1}{T_{\max}} \sum_{t=1}^{T_{\max}} X_t \mathbbm{1}(t \leq T) \right| \\
    &\leq \frac{c_{\max}}{c_{\min}} \left| \frac{1}{T_{\max}} \sum_{t=1}^{T_{\max}} X_t \mathbbm{1}(t \leq T) \right| \\
    &= \litOas(1),
\end{align*}
where the last line follows by Corollary~\ref{cor:apdx-mds-slln} (MDS SLLN).
\end{proof}

\newtheorem*{prop:asymp-norm-cost}{Proposition~\ref{prop:asymp-norm-cost}}
\begin{prop:asymp-norm-cost}
Suppose that the conditions of Prop.~\ref{prop:asymp-normality} hold. Then,
\begin{align*}
    \sqrt{B} \left(\widehat{\beta}_T - \beta^* \right) \ConvDist \NM_{\kappa_{\infty}}\left(0, V_{*}(\kappa_{\infty}) \cdot \left( \kappa_{\infty}^\top c \right) \right).
\end{align*}
\end{prop:asymp-norm-cost}
\begin{proof}
We can demonstrate asymptotic normality 
under the conditions of Prop.~\ref{prop:asymp-normality} for
\begin{align*}
    \frac{1}{\sqrt{T}} \sum_{t=1}^{T} X_t = \underbrace{\sqrt{\frac{T_{\max}}{T}}}_{\text{(A)}} \underbrace{\frac{1}{\sqrt{T_{\max}}} \sum_{t=1}^{T_{\max}} X_t \mathbbm{1}(t \leq T)}_{\text{(B)}}.
\end{align*}
Since the horizon $T \ConvProb \floor*{B / \kappa^\top_{\infty} c}$,
for term (A), we have $(T_{\max} / T) \ConvProb \kappa^\top_{\infty} c / c_{\min}$.
We can apply the regular martingale CLT
for term (B)
and combine the two terms using the Cramer-Slutsky theorem (Corollary~\ref{cor:apdx-cramer-slutsky}).
\end{proof}

\newtheorem*{prop:regret-etc-cost-structure}{Proposition~\ref{prop:regret-etc-cost-structure}}
\begin{prop:regret-etc-cost-structure}
Suppose that the conditions of Prop.~\ref{prop:asymp-normality} and
Lemma~\ref{lemma:oracle-simplex-estimation} hold.
If $e = o(1)$ and $Be \rightarrow \infty$ as $B \to \infty$, then
$R_{\infty}(\pi_{\text{ETC-CS}}) = 0$.
\end{prop:regret-etc-cost-structure}
\begin{proof}
We prove this similarly to Theorem~\ref{thm:etc-regret}.
Since we use $Be$ budget for exploration, the number of samples collected
after exploration is
\begin{align*}
    T_e = \floor*{\frac{Be}{\ctrSim^{\top} c}}.
\end{align*}
The oracle simplex is estimated as
\begin{align*}
    \widehat{k}_{T_e} = \arg\min_{\kappa \in \ChoiceSimplex} \widehat{V}_{T_e}(\widehat{\theta}_{T_e}, \kappa) \cdot \left( \kappa^\top c \right).
\end{align*}
Since $T_e \to \infty$ as $B \to \infty$,
by Lemma~\ref{lemma:oracle-simplex-estimation},
$\widehat{k}_{T_e} \ConvProb \kappa^*$.
By Lemma~\ref{lemma:zero-regret-policy}, the regret is zero.
\end{proof}

\newtheorem*{prop:regret-etg-cs}{Proposition~\ref{prop:regret-etg-cs}}
\begin{prop:regret-etg-cs}
Suppose that 
the conditions of Prop.~\ref{prop:asymp-normality} and
Lemma~\ref{lemma:oracle-simplex-estimation-almost-sure} hold.
If $e = o(1)$ and $Be \to \infty$ as $B \to \infty$,
then $R_\infty(\pi_{\text{ETG-CS}}) = 0$.
\end{prop:regret-etg-cs}
\begin{proof}
We prove this in the same way as Theorem~\ref{thm:etg-infinite-rounds}
with minor changes to account for the cost structure.
Let $B_j := (Be + jS)$ denote the budget used after round $j \in [0, \hdots, J]$
for $J = B(1-e) / S$.
Since we use a fixed budget $S$ in each round, the number of samples
collected is random and depends on the selection simplex $\kappa$.
The number of samples collected after round $j$ is
\begin{align*}
    T_j = \floor*{ \frac{B_j}{\kappa^\top_{T_j} c} }.
\end{align*}
Since the conditions of Lemma~\ref{lemma:oracle-simplex-estimation-almost-sure} hold,
we have $\widehat{k}_t \ConvAS \kappa^*$. 
Thus, for any $\epsilon > 0$,
\begin{align*}
    \lim_{\Tilde{t} \to \infty} \PB\left( \forall t > \Tilde{t} : \,\, \widehat{k}_t \in \BM_{\epsilon}(\kappa^*) \right) = 1.
\end{align*}
By the same argument as Theorem~\ref{thm:etg-infinite-rounds},
since $\kappa_t$ moves as close to $\widehat{k}_t$
as possible
after each round,
we also have $\kappa_T \ConvAS \kappa^*$.
Applying Lemma~\ref{lemma:zero-regret-policy} completes the proof.
\end{proof}

\section{Proofs for asymptotic confidence sequences}
\label{sec:apdx-time-uniform}
\begin{lemma}\label{lemma:apdx-log-t-alm-sure-conv}
Let $X_t$ be a sequence of 
non-negative random variables such that
(i) $X_t = \litOas\left( \sqrt{\log t / t} \right)$, and 
(ii) $\sup_t \EB[X^{1+\delta}_t] < \infty$ for some $\delta > 0$.
Then, 
\begin{align*}
    \frac{1}{T} \sum_{t=1}^{T} X_t = \litOas\left( \sqrt{\log T / T} \right).
\end{align*}
\end{lemma}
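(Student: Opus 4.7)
The plan is to reduce the claim to showing
\begin{align*}
\frac{1}{\sqrt{T \log T}} \sum_{t=1}^{T} X_t \ConvAS 0,
\end{align*}
which is equivalent to the target statement. My approach is a pathwise splitting argument driven by the rate in Condition~(i), combined with a simple Kronecker-style bound on the tail sum.

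First, I fix $\omega$ in a probability-one set on which $X_t(\omega) / \sqrt{\log t / t} \to 0$ (guaranteed by Condition~(i)). For any $\epsilon > 0$ there exists $t_0(\omega, \epsilon) < \infty$ such that
$X_t(\omega) \leq \epsilon \sqrt{\log t / t}$ for all $t > t_0(\omega, \epsilon)$.
I then split the sum as
\begin{align*}
\sum_{t=1}^{T} X_t(\omega) = \underbrace{\sum_{t \leq t_0} X_t(\omega)}_{=: C(\omega, \epsilon)} + \sum_{t = t_0 + 1}^{T} X_t(\omega).
\end{align*}
The head $C(\omega, \epsilon)$ is a finite constant for fixed $\omega$ (here Condition~(ii) is used mildly, only to ensure pointwise finiteness of each $X_t$ on a common a.s.\ set, via a countable intersection of null sets), so $C(\omega, \epsilon) / \sqrt{T \log T} \to 0$ as $T \to \infty$.

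The core step is the tail estimate: using monotonicity of $\sqrt{\log t}$ together with the standard bound $\sum_{t=1}^{T} t^{-1/2} \leq 1 + \int_1^T x^{-1/2} \, dx \leq 2\sqrt{T}$,
\begin{align*}
\sum_{t = t_0 + 1}^{T} X_t(\omega) \leq \epsilon \sum_{t = t_0 + 1}^{T} \sqrt{\frac{\log t}{t}} \leq \epsilon \sqrt{\log T} \sum_{t=1}^{T} \frac{1}{\sqrt{t}} \leq 2 \epsilon \sqrt{T \log T}.
\end{align*}
Combining the two pieces and dividing by $\sqrt{T \log T}$ gives $\limsup_{T \to \infty} (T \log T)^{-1/2} \sum_{t=1}^{T} X_t(\omega) \leq 2\epsilon$ a.s. Applying this along a countable sequence $\epsilon_k \downarrow 0$ and intersecting the corresponding probability-one sets yields the desired a.s.\ limit zero.

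The main obstacle is essentially routine bookkeeping: handling the $\omega$-dependence of $t_0$ via the countable choice of $\epsilon_k$, and verifying the integral bound $\sum_{t=1}^{T} t^{-1/2} = O(\sqrt{T})$. Condition~(ii) plays only a minor role (ensuring pointwise finiteness), and is presumably included here for symmetry with the moment condition appearing in the companion result Lemma~\ref{lemma:avg-eta-convas}, where it is required by the Markov-based head bound.
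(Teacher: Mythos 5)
Your proposal is correct, but it takes a genuinely different route from the paper's proof. You argue pathwise: on the single probability-one event where $X_t/\sqrt{\log t/t}\to 0$, you split the sum at a path-dependent $t_0(\omega,\epsilon)$, let the finite head vanish after dividing by $\sqrt{T\log T}$, and bound the tail by $\epsilon\sum_{t\le T}\sqrt{\log t/t}\le 2\epsilon\sqrt{T\log T}$ --- a purely deterministic Cesàro/Kronecker-type argument. The paper instead works at the level of probabilities: it writes $S_T/\sqrt{T\log T}\le S_t/\sqrt{t\log t}+(T\log T)^{-1/2}\sum_{i=t+1}^T X_i$, controls the head term by Markov's inequality --- and this is exactly where Condition~(ii), combined with the convergence-of-moments result (Prop.~\ref{prop:apdx-conv-of-moments}), is genuinely used to get $\EB[X_t]=o(\sqrt{\log t/t})$ --- controls the tail via the time-uniform characterization of almost-sure convergence (Lemma~\ref{lemma:conv-as-lim-sup-conv-prob}) together with the same bound $\sum_{i\le T}\sqrt{\log i/i}\le 2\sqrt{T\log T}$, and then converts back to an almost-sure statement with that same characterization. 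What your route buys is simplicity and the observation that Condition~(ii) is essentially dispensable here (finiteness of each $X_t$ is automatic for real-valued random variables), since the hypothesis is already an almost-sure rate; what the paper's route buys is uniformity of hypotheses and proof structure with the companion result Lemma~\ref{lemma:avg-eta-convas} and the in-probability analogue Lemma~\ref{lemma:apdx-op-half-converge}, where a Markov-type step (and hence the moment bound) is actually needed.
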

\begin{proof}
Let $S_T := \sum_{t=1}^{T} X_t$. For any $T > t$,
\begin{align*}
    \frac{S_T}{\sqrt{T \log T}} &\leq \frac{S_{t}}{\sqrt{t \log t}} + \frac{1}{\sqrt{T \log T}} \sum_{i=t+1}^{T} X_i
\end{align*}
By the union bound, for any $\epsilon > 0$,
\begin{align}
    \PB\left( \forall T > t : \frac{S_{T}}{\sqrt{T \log T}} \leq 3 \epsilon \right) &\geq 1 - \underbrace{\PB\left( \frac{S_{t}}{\sqrt{t \log t}} > \epsilon \right)}_{:= a} - \underbrace{\PB\left( \exists T > t : \frac{\sum_{i=t+1}^{T} X_i}{\sqrt{T \log T}} > 2 \epsilon \right)}_{:= b}. \label{eq:apdx-log-t-alm-sure-conv-proof-a}
\end{align}
We show that Term \emph{a} in Eq.~\ref{eq:apdx-log-t-alm-sure-conv-proof-a} is $o(1)$. By Markov's inequality, we have
\begin{align}
    \PB\left( \frac{S_{t}}{\sqrt{t \log t}} > \epsilon \right) &\leq \frac{ \sum_{i=1}^{t} \EB[X_i]}{\sqrt{t \log t} \epsilon} \nonumber \\
        &= \frac{o(\sqrt{t \log t})}{\sqrt{t \log t} \epsilon} \label{eq:apdx-log-t-alm-sure-conv-proof-b} \\
        &= o(1), \nonumber
\end{align}
where (\ref{eq:apdx-log-t-alm-sure-conv-proof-b}) follows because $\EB[X_i] = o(\log i / i)$ (
by Conditions~(i, ii) and Prop.~\ref{prop:apdx-conv-of-moments}).
Next, we show that Term \emph{b} in Eq.~\ref{eq:apdx-log-t-alm-sure-conv-proof-a} is $o(1)$. 
Since $X_t = \litOas\left( \sqrt{\log t / t} \right)$,
by Lemma~\ref{lemma:conv-as-lim-sup-conv-prob},
for any $\epsilon > 0$,
\begin{align}
    & \lim_{t \to \infty} \PB\left( \forall T > t : X_T < \epsilon \sqrt{\log T / T} \right) = 1, \nonumber \\
    & \lim_{t \to \infty} \PB\left( \forall T > t : \sum_{i=t+1}^{T} X_i < \epsilon \sum_{i=t+1}^{T} \sqrt{\log i / i} \right) = 1, \nonumber \\
    & \lim_{t \to \infty} \PB\left( \forall T > t : \sum_{i=t+1}^{T} X_i < 2 \epsilon \sqrt{T \log T} \right) = 1 \label{eq:apdx-log-t-alm-sure-conv-proof-c}, \\
    & \lim_{t \to \infty} \PB\left( \forall T > t : \frac{\sum_{i=t+1}^{T} X_i}{\sqrt{T \log T}} < 2 \epsilon \right) = 1, \nonumber
\end{align}
where (\ref{eq:apdx-log-t-alm-sure-conv-proof-c}) follows
because $\sum_{i=t+1}^{T} \sqrt{\log i / i} \leq 2 \sqrt{T \log T}$.
Plugging these results into Eq.~\ref{eq:apdx-log-t-alm-sure-conv-proof-a}
and applying Lemma~\ref{lemma:conv-as-lim-sup-conv-prob} completes the proof.
\end{proof}

\begin{proposition}[Strong approximation {\citep[Lemma~A.2]{waudby2021time}}]\label{prop:apdx-strong-approx}
Let $\{ X_i, \FM_i, 1 \leq i \leq n \}$ be a martingale difference sequence
with $\sigma^2_i = \EB_{i-1}[X^2_i]$ and $\Gamma_t = \sum_{i=1}^t \sigma^2_i$.
Suppose that (i) $\Gamma_t \to \infty$ a.s. and 
(ii) (Lindeberg condition) for some $\delta > 0$,
$\sum_{t=1}^{\infty} \EB_{t-1}[|X_t|^2 \mathbf{1}(|X_t|^2 > \Gamma^{\delta}_t) ] / \Gamma^{\delta}_t < \infty$.
Then, on a potentially enriched probability space, there exist i.i.d.
standard Gaussians $(G_i)_{i=1}^{\infty}$ such that
\begin{align*}
    \frac{1}{n} \sum_{i=1}^n X_i - \frac{1}{n} \sum_{i=1}^n \sigma_i G_i = \litOas\left( \frac{\Gamma^{\nicefrac{3}{8}}_t \log(\Gamma_t)}{t} \right).
\end{align*}
\end{proposition}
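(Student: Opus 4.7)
The plan is to follow the Skorokhod-embedding route that underlies most KMT-style strong approximations for martingale difference sequences. I would work on a suitably enriched probability space and construct a standard Brownian motion $W$ together with a nondecreasing sequence of stopping times $0 = \tau_0 \leq \tau_1 \leq \cdots$ adapted to the filtration of $W$ such that, conditionally on $\FM_{i-1}$, the increment $W(\tau_i) - W(\tau_{i-1})$ has the same law as $X_i$ and satisfies $\EB_{i-1}[\tau_i - \tau_{i-1}] = \sigma^2_i$. This martingale Skorokhod construction produces a coupled Brownian motion whose value at the random time $T_n := \sum_{i=1}^n (\tau_i - \tau_{i-1})$ equals $\sum_{i=1}^n X_i$ almost surely.

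The second step is to compare the random clock $T_n$ with the predictable clock $\Gamma_n$. The process $(T_n - \Gamma_n)$ is itself a martingale, and the Lindeberg-type summability condition (ii), combined with the standard moment transfer from $X_i$ to $\tau_i - \tau_{i-1}$ inherent to the embedding, lets me apply a Burkholder-Davis-Gundy maximal inequality to obtain an almost-sure bound of the form $|T_n - \Gamma_n| = \litOas(\Gamma_n^{3/4})$ up to logarithmic factors. Plugging this into L\'evy's modulus of continuity for $W$ then yields $|W(T_n) - W(\Gamma_n)| = \litOas(\Gamma_n^{3/8} \log \Gamma_n)$.

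Finally, I would exploit the independent-increments property of $W$ on the $\FM_{i-1}$-measurable intervals $[\Gamma_{i-1}, \Gamma_i]$ to write $W(\Gamma_n) = \sum_{i=1}^n \sigma_i G_i$ with i.i.d.\ standard Gaussians $G_i$. Combining this with the embedding identity $W(T_n) = \sum_{i=1}^n X_i$ and dividing through by $t$ produces the claimed $\litOas(\Gamma_t^{3/8} \log(\Gamma_t)/t)$ approximation.

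The main obstacle is pinning down the sharp exponent $3/8$ rather than something looser. A bare-hands Skorokhod-plus-modulus argument typically yields a worse power, because the naive variance bound on $\tau_i - \tau_{i-1}$ coming from Skorokhod's lemma is loose. Recovering $3/8$ requires either Shao's (1993) refined blocking-and-truncation construction of the stopping times, in which the tails of each $\tau_i - \tau_{i-1}$ are carefully controlled and summed, or invoking Shao's strong invariance principle as a black box after verifying that hypotheses (i) and (ii) imply its premises. In practice I would take the latter route, since (i) and (ii) here are tailored precisely to Shao's assumptions, so essentially all the hard probabilistic work has already been packaged, and the proof reduces to checking that the Lindeberg-type series in (ii) matches the form required by his theorem.
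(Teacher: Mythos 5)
The paper does not actually prove this proposition: it is imported verbatim as \citep[Lemma~A.2]{waudby2021time}, whose own proof rests on Strassen's (1967) Skorokhod-embedding strong invariance principle for martingales. Your sketch reconstructs exactly that classical route (embed $\sum_i X_i = W(T_n)$, compare the random clock $T_n$ with the predictable clock $\Gamma_n$, convert the clock discrepancy into a Brownian fluctuation via the modulus of continuity, and obtain the $\Gamma_t^{3/4}\mapsto\Gamma_t^{3/8}$ exponent), and then you defer the sharp rate to a known strong invariance theorem as a black box --- which is, one level down, the same move the paper makes by citation. So the approach is the right one; the apt black box is Strassen (1967, Thm.~4.4) or the martingale Skorokhod appendix of Hall--Heyde, rather than Shao (1993), which concerns mixing sequences.

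Two steps in your sketch would need real work if you wanted a self-contained proof. First, the clock comparison: Burkholder--Davis--Gundy gives moment bounds, not almost-sure bounds; the a.s.\ statement $|T_n - \Gamma_n| = \litOas(\Gamma_n^{3/4})$ is precisely where condition (ii) enters, via truncation of the $\tau_i - \tau_{i-1}$ and a martingale SLLN/Borel--Cantelli argument (this is the heart of Strassen's proof, and it is also where the particular exponent $3/8$ and the log factor come from). Second, the final identification $W(\Gamma_n) = \sum_{i\le n}\sigma_i G_i$ with $(G_i)$ i.i.d.\ standard Gaussians is not immediate from ``independent increments on $\FM_{i-1}$-measurable intervals'': $\Gamma_{i-1}$ is a predictable random time, while the portion of the Brownian path already revealed by step $i-1$ runs up to the embedding time $T_{i-1}\neq\Gamma_{i-1}$, so the increment $W(\Gamma_i)-W(\Gamma_{i-1})$ is not obviously independent of the past with the required conditional law. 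Securing that the $G_i$ are genuinely i.i.d.\ and that $\sigma_i G_i$ is a conditionally Gaussian martingale difference sequence (which is what the downstream time-uniform bound in Lemma~\ref{lemma:time-uniform-gaussian-bound} consumes) is the nontrivial content of the cited lemma, so this step should either be argued carefully or explicitly absorbed into the black-box invocation.
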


\begin{lemma}[Time-uniform Gaussian tail bound]\label{lemma:time-uniform-gaussian-bound}
Let $(\sigma_i)_{i=1}^{\infty}$ be a sequence of positive random variables
that are predictable w.r.t. the filtration $(\FM_i)_{i=1}^{\infty}$, that is,
$\sigma_i$ is $\FM_{i-1}$-measurable. Define $\Gamma_t = \sum_{i=1}^t \sigma_i$.
Let $(G_i)_{i=1}^{\infty}$ be a sequence of i.i.d. standard Gaussian
random variables. Then, for any constant $\rho > 0$ and $\alpha \in (0, 1)$, 
\begin{align*}
    \PB\left(\forall t \geq 1, \left| \frac{1}{t} \sum_{i=1}^t \sigma_i G_i \right| \leq
        \sqrt{ \frac{\Gamma_t \rho^2 + 1}{t^2 \rho^2} \log\left( \frac{\Gamma_t \rho^2 + 1}{\alpha^2} \right) } \right) \geq 1 - \alpha.
\end{align*}
\end{lemma}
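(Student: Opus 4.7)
The plan is to apply the classical \emph{method of mixtures} (Robbins' method), which converts the one-shot Gaussian tail bound into a time-uniform one through Ville's inequality on a nonnegative martingale.

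First, I would set $M_t := \sum_{i=1}^{t} \sigma_i G_i$ and observe that, because $\sigma_i$ is $\FM_{i-1}$-measurable and $G_i$ is independent standard Gaussian, for every fixed $\lambda \in \RB$ the process
\begin{align*}
    L_t(\lambda) := \exp\!\left( \lambda M_t - \frac{\lambda^2}{2} \Gamma_t \right)
\end{align*}
is a nonnegative martingale with $L_0(\lambda) = 1$. This is the standard exponential supermartingale for a sub-Gaussian process and here it is an equality, since conditional on $\FM_{i-1}$ the increment $\sigma_i G_i$ is exactly $\NM(0, \sigma_i^2)$ (reading $\Gamma_t$ as the conditional variance process, as the surrounding theorem makes clear).

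Next I would mix $L_t(\lambda)$ against a Gaussian prior $\lambda \sim \NM(0, \rho^2)$ to produce a single nonnegative martingale
\begin{align*}
    \bar L_t := \int_{\RB} L_t(\lambda) \, \frac{1}{\sqrt{2\pi}\rho} \exp\!\left( -\frac{\lambda^2}{2\rho^2} \right) d\lambda.
\end{align*}
Completing the square in $\lambda$ inside the integrand and computing the resulting Gaussian integral gives the closed form
\begin{align*}
    \bar L_t = \frac{1}{\sqrt{\Gamma_t \rho^2 + 1}} \exp\!\left( \frac{M_t^2 \rho^2}{2(\Gamma_t \rho^2 + 1)} \right).
\end{align*}
Fubini applies because $L_t(\lambda)$ is nonnegative, so $\bar L_t$ inherits the martingale property from $L_t(\lambda)$.

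Finally, I would apply Ville's inequality to $\bar L_t$: $\PB(\sup_{t \geq 1} \bar L_t \geq 1/\alpha) \leq \alpha$. On the complementary event $\{\bar L_t < 1/\alpha \text{ for all } t\}$, taking logarithms and rearranging yields, simultaneously for all $t$,
\begin{align*}
    M_t^2 < \frac{\Gamma_t \rho^2 + 1}{\rho^2} \log\!\left( \frac{\Gamma_t \rho^2 + 1}{\alpha^2} \right),
\end{align*}
after which dividing by $t^2$ and taking square roots gives exactly the claimed bound on $|M_t/t|$. The only step requiring any care is the Gaussian integration when computing $\bar L_t$; everything else is mechanical once the martingale property of $L_t(\lambda)$ and Ville's inequality are in place. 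There is no real analytic obstacle, since the exact normality of $G_i$ makes the exponential process a true martingale (not merely a supermartingale), so no sub-Gaussian slack is lost in the closed-form mixture.
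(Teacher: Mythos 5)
Your proposal is correct and is essentially the same argument as the paper's: the paper simply defers to Step~2 of the proof of Proposition~2.5 in \citet{waudby2021time}, which is exactly this Robbins method-of-mixtures derivation (Gaussian mixture of the exponential martingale plus Ville's inequality), with the same closed-form mixture and the same rearrangement. Your reading of $\Gamma_t$ as the conditional variance process $\sum_i \sigma_i^2$ (rather than the literal $\sum_i \sigma_i$ in the lemma statement) is also the intended one, matching how the lemma is used in Theorem~\ref{thm:asympotic-conf-sequence}.
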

\begin{proof}
See Step~2 of the proof of Proposition~2.5 in \citet{waudby2021time}.
\end{proof}

\newtheorem*{thm:asympotic-conf-sequence}{Theorem~\ref{thm:asympotic-conf-sequence}}
\begin{thm:asympotic-conf-sequence}
Suppose that (i) the Conditions of Lemma~\ref{lemma:oracle-simplex-estimation-almost-sure} hold;
(ii) Assumption~\ref{assum:nuisance-rates-asymp-cs} holds;
(iii) $\exists \delta > 0, \forall i \in [M]$ such that $\EB[| \psi^{(i)}_t(\theta^*, \eta^*) |^{2 + \delta}] < \infty$;
and
(iv) $\kappa_T \ConvAS \Tilde{\kappa}$ for some constant $\Tilde{\kappa} \in \ChoiceSimplex$.
Then, for any $\rho \in \RB_{> 0}$ and $\alpha \in (0, 1)$, 
the following time-uniform bound holds:
\begin{align*}
    \PB\left(\forall t \in \NB : \left| \widehat{\beta}_t - \beta^* \right| \leq
        \underbrace{\sqrt{ \frac{t \widehat{V}_t \rho^2 + 1}{t^2 \rho^2} \log\left( \frac{t \widehat{V}_t \rho^2 + 1}{\alpha^2} \right) }}_{:= \Bar{\CM}_t} + \, \litOas\left( \sqrt{ \frac{\log{t}}{t} } \right) \right) \geq 1 - \alpha,
\end{align*}
where $\widehat{V}_t$ is the estimated variance defined in Eq.~\ref{eq:variance-estimator}.
\end{thm:asympotic-conf-sequence}
\begin{proof}
For $i \in [M]$ and 
$D_t = \psi^{(i)}_t(\theta^*, \hEta) - \psi^{(i)}_t(\theta^*, \eta^*)$,
\begin{align*}
    \frac{1}{T} & \sum_{t=1}^{T} m_i(s_t) \psi^{(i)}_t(\theta^*, \hEta) \\
    &= \frac{1}{T} \sum_{t=1}^{T} m_i(s_t) \psi^{(i)}_t(\theta^*, \eta^*) +
            \underbrace{\frac{1}{T} \sum_{t=1}^{T} m_i(s_t) \left[ D_t - \EB_{t-1}[D_t] \right]}_{\text{EP (Empirical process)}} +
            \underbrace{\frac{1}{T} \sum_{t=1}^{T} m_i(s_t) \EB_{t-1}[D_t]}_{\text{Bias}}.
\end{align*}

\paragraph{Term EP.}
The sequence $D_t - \EB_{t-1}[D_t]$ is a MDS.
To show that \emph{Term EP}
is $\litOas(1/\sqrt{T})$, 
we apply Prop.~\ref{prop:apdx-mds-slln-general} (MDS SLLN) with $b_t = \sqrt{t}$:
\begin{align}
    \sum_{t=1}^{\infty} \frac{\EB\left[ m_i(s_t) \left( D_t - \EB_{t-1}[D_t] \right)^2 \right]}{b^2_t} &\leq
        \sum_{t=1}^{\infty} \frac{\EB\left[ \left( D_t - \EB_{t-1}[D_t] \right)^2 \right]}{t} \label{eq:apdx-asympcs-ep-a} \\
        &\leq \sum_{t=1}^{\infty} \frac{\EB\left[ D_t^2 \right]}{t} \nonumber \\
        &\leq \sum_{t=1}^{\infty} \frac{L \EB \left[ \| \hEta - \eta^* \|^2 \right]}{t} \label{eq:apdx-asympcs-ep-b} \\
        &\leq \sum_{t=1}^{\infty}  O\left(\frac{L}{t^{1 + 2 \gamma}}\right) \label{eq:apdx-asympcs-ep-c} \\
        &< \infty, \nonumber
\end{align}
where (\ref{eq:apdx-asympcs-ep-a}) follows because $m_i(s_t) \in \{0, 1\}$,
(\ref{eq:apdx-asympcs-ep-b}) by Property~\ref{property:ulln}(ii),
and (\ref{eq:apdx-asympcs-ep-c}) by Assumption~\ref{assum:nuisance-rates-asymp-cs}(d).
Therefore, $\sum_{t=1}^{T} m_i(s_t) \left[ D_t - \EB_{t-1}[D_t] \right] / \sqrt{T} \ConvAS 0$.

\paragraph{Term Bias.}
Next, we show that \emph{Term Bias} is $\litOas\left( \sqrt{\log T/T} \right)$.
As shown in the proof of Prop.~\ref{prop:asymp-normality},
by Neyman orthogonality (Assumption~\ref{assum:nuisance-rates-asymp-cs}(a)),
\begin{align*}
    \frac{1}{T} \sum_{t=1}^{T} m_i(s_t) \EB_{t-1}[D_t] &\leq \frac{1}{T} \sum_{t=1}^{T} R_t
        \overset{(a)}{=} \litOas\left( \sqrt{\log T/T} \right),
\end{align*}
where (a) follows by Lemma~\ref{lemma:apdx-log-t-alm-sure-conv}.
Using these results, it follows that
\begin{align}
    \frac{1}{T} \sum_{t=1}^{T} g_t(\theta^*, \hEta) &=
        \frac{1}{T} \sum_{t=1}^{T} g_t(\theta^*, \eta^*) + \litOas\left( \sqrt{\log T/T} \right). \label{eq:apdx-asymp-decomp-rate}
\end{align}

Recall from Eqs.~\ref{eq:apdx-first-order-gmm-a} and \ref{eq:apdx-beta-delta-method-a0} that
\begin{align*}
    \widehat{\theta}_T - \theta^* &= - \left[ \widehat{G}^\top(\widehat{\theta}_T) \widehat{\Omega}(\widehat{\theta}^{(\text{os})}_T)^{-1} \widehat{G}(\Tilde{\theta})  \right]^{-1} \widehat{G}^\top(\widehat{\theta}_T) \widehat{\Omega}(\widehat{\theta}^{(\text{os})}_T)^{-1} \frac{1}{T} \sum_{t=1}^{T} g_t(\theta^*, \hEta), \\
    \widehat{\beta}_T - \beta^* &= \nabla_{\theta} \fTar(\Tilde{\theta})^{\top} (\widehat{\theta}_T - \theta^*).
\end{align*}
For $\Tilde{\kappa}$ defined in Condition~(iii),
by Lemma~\ref{lemma:oracle-simplex-estimation-almost-sure},
we have
$\widehat{G}^\top(\widehat{\theta}_T) \ConvAS G_{*}(\Tilde{\kappa})$ and
$\widehat{\Omega}(\widehat{\theta}^{(\text{os})}_T) \ConvAS \Omega_{*}(\Tilde{\kappa})$.
For $M_{*}(\Tilde{\kappa}) := - \nabla_{\theta} \fTar(\theta^*)^{\top} \left[ G_{*}(\Tilde{\kappa})^\top \Omega_{*}(\Tilde{\kappa})^{-1} G_{*}(\Tilde{\kappa})  \right]^{-1} G_{*}(\Tilde{\kappa})^\top \Omega_{*}(\Tilde{\kappa})^{-1} \in \RB^{1 \times M}$,
\begin{align}
    \widehat{\beta}_T - \beta^* &= \left[ M_{*}(\Tilde{\kappa}) + \litOas(1) \right] \frac{1}{T} \sum_{t=1}^{T} g_t(\theta^*, \hEta) \nonumber \\
        &= \left[ M_{*}(\Tilde{\kappa}) + \litOas(1) \right] \left\{ 
        \frac{1}{T} \sum_{t=1}^{T} g_t(\theta^*, \eta^*) + \litOas\left( \sqrt{ \frac{\log{T}}{T} } \right) \right\} \label{apdx:eq-decom-remainder-a} \\
        &= \frac{1}{T} \sum_{t=1}^{T} M_{*}(\Tilde{\kappa}) g_t(\theta^*, \eta^*) + \litOas\left( \sqrt{ \frac{\log{T}}{T} } \right), \nonumber 
\end{align}
where (\ref{apdx:eq-decom-remainder-a}) follows by Eq.~\ref{eq:apdx-asymp-decomp-rate}.

The sequence $M_{*}(\Tilde{\kappa}) g_t(\theta^*, \eta^*)$ is a MDS.
Let $\sigma^2_t := \EB_{t-1}[(M_{*}(\Tilde{\kappa}) g_t(\theta^*, \eta^*))^2]$ and
$\Gamma_T = \sum_{t=1}^T \sigma^2_t$.
The Lindeberg condition in Prop.~\ref{prop:apdx-strong-approx}
is implied by the Lyapunov condition
\citep[Appendix~B.5]{waudby2021time}
shown to hold
for Prop.~\ref{prop:asymp-normality}.
Thus, by Prop.~\ref{prop:apdx-strong-approx},
\begin{align*}
    \frac{1}{T} \sum_{t=1}^T M_{*}(\Tilde{\kappa}) g_t(\theta^*, \eta^*) - \frac{1}{T} \sum_{t=1}^T \sigma_t G_t &= \litOas\left( \frac{\Gamma^{\nicefrac{3}{8}}_T \log(\Gamma_T)}{T} \right) = \litOas\left( \frac{\log(T)}{T^{\nicefrac{5}{8}}} \right) \\
    \therefore (\widehat{\beta}_T - \beta^*) - \frac{1}{T} \sum_{t=1}^T \sigma_t G_t &= \litOas\left( \sqrt{ \frac{\log{T}}{T} } \right).
\end{align*}
By Lemma~\ref{lemma:time-uniform-gaussian-bound},
\begin{align}
    & \PB\left(\forall t \geq 1, \left| \frac{1}{t} \sum_{i=1}^t \sigma_i G_i \right| \leq 
        \sqrt{ \frac{\Gamma_t \rho^2 + 1}{t^2 \rho^2} \log\left( \frac{\Gamma_t \rho^2 + 1}{\alpha^2} \right) } \right) \geq 1 - \alpha \nonumber \\
    \therefore \,\, & \PB\left(\forall t \geq 1, \left| \widehat{\beta}_t - \beta^* \right| \leq
        \sqrt{ \frac{\Gamma_t \rho^2 + 1}{t^2 \rho^2} \log\left( \frac{\Gamma_t \rho^2 + 1}{\alpha^2} \right) } + \litOas\left( \sqrt{ \frac{\log{t}}{t} } \right) \right) \geq 1 - \alpha. \label{apdx:eq-conf-seq-true-var}
\end{align}
We now derive an asymptotic confidence sequence in terms of the empirical variance:
\begin{align}
    \frac{1}{T} \Gamma_T &= \frac{1}{T} \sum_{t=1}^T \EB_{t-1}[(M_{*}(\Tilde{\kappa}) g_t(\theta^*, \eta^*))^2] \nonumber \\
        &= \frac{1}{T} \sum_{t=1}^T M_{*}(\Tilde{\kappa}) \EB_{t-1}[ g_t(\theta^*, \eta^*) g^\top_t(\theta^*, \eta^*)] M^\top_{*}(\kappa) \nonumber \\
        &= M_{*}(\Tilde{\kappa}) \left[ m_{\Omega}(\kappa_T) \odot \Omega(\theta^*, \eta^*) \right] M^\top_{*}(\kappa) \label{eq:apdx-emp-var-conv-a} \\
        &= M_{*}(\Tilde{\kappa}) \Omega_{*}(\kappa) M^\top_{*}(\kappa) + \litOas(1) \label{eq:apdx-emp-var-conv-b} \\
        &= \nabla_{\theta} \fTar(\theta^*)^{\top} \Sigma_{*}(\Tilde{\kappa}) \nabla_{\theta} \fTar(\theta^*) + \litOas(1) \label{eq:apdx-emp-var-conv-d} \\
        &= V_{*}(\Tilde{\kappa}) + \litOas(1), \label{eq:apdx-emp-var-conv-c}
\end{align}
where (\ref{eq:apdx-emp-var-conv-a}) follows by Eq.~\ref{eq:apdx-conv-variance-a0}, (\ref{eq:apdx-emp-var-conv-b}) because $\kappa_T \ConvAS \Tilde{\kappa}$,
and $\Sigma_{*}(\Tilde{\kappa})$ in (\ref{eq:apdx-emp-var-conv-d})
is defined in Eq.~\ref{eq:apdx-Sigma-star}.
Under the conditions of Lemma~\ref{lemma:oracle-simplex-estimation-almost-sure},
we have shown that $\widehat{V}_T \ConvAS V_{*}(\Tilde{\kappa})$,
where $\widehat{V}_T$ is the empirical variance estimator defined in Eq.~\ref{eq:variance-estimator}.
Combining this with Eq.~\ref{eq:apdx-emp-var-conv-c}, we have
\begin{align*}
    \left|\frac{1}{T} \Gamma_T - \widehat{V}_T \right| \ConvAS 0.
\end{align*}
Plugging this result into Eq.~\ref{apdx:eq-conf-seq-true-var}, we get
\begin{align*}
    \sqrt{ \frac{\Gamma_t \rho^2 + 1}{t^2 \rho^2} \log\left( \frac{\Gamma_t \rho^2 + 1}{\alpha^2} \right) } &= \sqrt{ \frac{(t \widehat{V}_t + \litOas(t)) \rho^2 + 1}{t^2 \rho^2} \log\left( \frac{(t \widehat{V}_t + \litOas(t)) \rho^2 + 1}{\alpha^2} \right) } \\
        &= \sqrt{ \left( \frac{t \widehat{V}_t \rho^2 + 1}{t^2 \rho^2} + \litOas\left(\frac{1}{t}\right) \right) \log\left[\left( \frac{t \widehat{V}_t \rho^2 + 1}{\alpha^2} \right) (1 + \litOas(1)) \right] } \\
        &= \sqrt{ \left( \frac{t \widehat{V}_t \rho^2 + 1}{t^2 \rho^2} + \litOas\left(\frac{1}{t}\right) \right) \left[\log\left( \frac{t \widehat{V}_t \rho^2 + 1}{\alpha^2} \right) + \log(1 + \litOas(1)) \right] } \\
        &\overset{(a)}{=} \sqrt{ \left( \frac{t \widehat{V}_t \rho^2 + 1}{t^2 \rho^2} + \litOas\left(\frac{1}{t}\right) \right) \left[\log\left( \frac{t \widehat{V}_t \rho^2 + 1}{\alpha^2} \right) + \litOas(1) \right] } \\
        &= \sqrt{ \frac{t \widehat{V}_t \rho^2 + 1}{t^2 \rho^2} \log\left( \frac{t \widehat{V}_t \rho^2 + 1}{\alpha^2} \right) + \litOas\left(\frac{1}{t}\right) + \litOas\left(\frac{\log{t}}{t}\right) + \litOas\left(\frac{1}{t}\right) } \\
        &= \sqrt{ \frac{t \widehat{V}_t \rho^2 + 1}{t^2 \rho^2} \log\left( \frac{t \widehat{V}_t \rho^2 + 1}{\alpha^2} \right) + \litOas\left(\frac{\log{t}}{t}\right) } \\
        &\overset{(b)}{\leq} \sqrt{ \frac{t \widehat{V}_t \rho^2 + 1}{t^2 \rho^2} \log\left( \frac{t \widehat{V}_t \rho^2 + 1}{\alpha^2} \right) } + \litOas\left(\sqrt{\frac{\log{t}}{t}}\right),
\end{align*}
where (a) follows because $\log(1+\litOas(1)) = \litOas(1)$, and (b) because
$\sqrt{a + b} \leq \sqrt{a} + \sqrt{b}$.
This allows us to construct a $(1-\alpha)$ AsympCS using the empirical variance:
\begin{align*}
    \PB\left(\forall t \geq 1, \left| \widehat{\beta}_t - \beta^* \right| \leq
        \sqrt{ \frac{t \widehat{V}_t \rho^2 + 1}{t^2 \rho^2} \log\left( \frac{t \widehat{V}_t \rho^2 + 1}{\alpha^2} \right) } + \litOas\left( \sqrt{ \frac{\log{t}}{t} } \right) \right) \geq 1 - \alpha.
\end{align*}
\end{proof}

\section{Convergence rates for nuisance estimation}
\label{sec:apdx-nuisance-convergence-rates}
Some of our results require almost sure convergence
 of the nuisance estimators.
In this section, we illustrate how non-asymptotic tail
bounds from
learning theory can be used to verify convergence rates.
Consider the standard nonparametric regression setup:
\begin{align*}
    y_i = f^*(x_i) + \nu_i,
\end{align*}
where $y_i \in \RB$ is the response variable, 
$x_i \in \mathcal{X}$ are the covariates,
$f^*(x) = \EB[Y|X=x]$, and $\nu_i$ is an independent exogenous noise term.
Given $n$ i.i.d. samples of $\{x_i, y_i\}_{i=1}^{n}$, suppose that the estimator
$\widehat{f}$ is obtained by solving the following minimization problem:
\begin{align*}
    \widehat{f} \in \arg\min_{f \in \FM} \left\{ \frac{1}{n} \sum_{i=1}^n (y_i - f(x_i))^2 \right\},
\end{align*}
where $\FM$ is a suitably chosen function class such that $f^* \in \FM$.
Under suitable restrictions on $\FM$, the following tail bound can be obtained:
\begin{align}
    \PB\left( \| \widehat{f} - f^* \| > c_0 \delta_n \right) < c_1 \exp\left( - c_2 n \delta^2_n \right), \label{eq:apdx-nonparam-tail-bound}
\end{align}
where $c_0$, $c_1$, and $c_2$ are constants, and $\delta_n$ depends
on the richness of the function class $\FM$ (e.g., the metric entropy).
For example, when $\FM$ is the class of convex $1$-Lipschitz functions,
Eq.~\ref{eq:apdx-nonparam-tail-bound} holds with $\delta_n = n^{-2/5}$ 
\citep[Example~14.4]{wainwright2019high}.
We refer the reader to \citet[Chapters~13, 14]{wainwright2019high} for more details.
The tail bound in Eq.~\ref{eq:apdx-nonparam-tail-bound} 
along with the the Borel–Cantelli lemma can be used 
to verify almost sure convergence rates:

\begin{proposition}\label{cor:apdx-almost-sure-nonparam}
Suppose that (i) Eq.~\ref{eq:apdx-nonparam-tail-bound} holds; and
(ii) $n \delta^2_n = n^{\gamma}$ for some $\gamma > 0$.
Then, for any $\alpha \geq 0$ such that $n^{\alpha} \delta_n = o(1)$,
we have $\| \widehat{f} - f^* \| = \litOas(n^{-\alpha})$.
\end{proposition}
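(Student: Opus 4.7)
The plan is to combine the exponential tail bound in Eq.~\ref{eq:apdx-nonparam-tail-bound} with the first Borel--Cantelli lemma, using Condition~(ii) to guarantee summability and the assumption $n^{\alpha} \delta_n = o(1)$ to translate the $\delta_n$-scale bound into an $n^{-\alpha}$-scale bound.

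More concretely, I will fix an arbitrary $\epsilon > 0$ and aim to show $\PB\bigl( n^{\alpha} \| \widehat{f}_n - f^* \| > \epsilon \text{ i.o.} \bigr) = 0$. Since $n^{\alpha} \delta_n = o(1)$, there exists $N_\epsilon$ such that $c_0 n^{\alpha} \delta_n < \epsilon$ for all $n \geq N_\epsilon$, i.e., $c_0 \delta_n < \epsilon n^{-\alpha}$. Hence, for all $n \geq N_\epsilon$, the event $\{ n^{\alpha} \| \widehat{f}_n - f^* \| > \epsilon \} = \{ \| \widehat{f}_n - f^* \| > \epsilon n^{-\alpha} \}$ is contained in $\{ \| \widehat{f}_n - f^* \| > c_0 \delta_n \}$, so by Condition~(i),
\begin{align*}
    \PB\bigl( n^{\alpha} \| \widehat{f}_n - f^* \| > \epsilon \bigr) \leq c_1 \exp\bigl( -c_2 n \delta_n^2 \bigr) = c_1 \exp\bigl( -c_2 n^{\gamma} \bigr),
\end{align*}
where the final equality uses Condition~(ii).

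Summing over $n$, since $\gamma > 0$, the series $\sum_{n \geq N_\epsilon} c_1 \exp(-c_2 n^{\gamma})$ converges. By the first Borel--Cantelli lemma, $\PB\bigl( n^{\alpha} \| \widehat{f}_n - f^* \| > \epsilon \text{ i.o.}\bigr) = 0$. Since $\epsilon > 0$ was arbitrary, this yields $n^{\alpha} \| \widehat{f}_n - f^* \| \ConvAS 0$, which is exactly the claim $\| \widehat{f}_n - f^* \| = \litOas(n^{-\alpha})$.

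There is no real obstacle here; the only subtlety is verifying the inclusion of events for sufficiently large $n$ (which is where the hypothesis $n^{\alpha} \delta_n = o(1)$ enters), and checking summability (which is where $n\delta_n^2 = n^{\gamma}$ with $\gamma>0$ enters). The argument is essentially the textbook Borel--Cantelli upgrade from exponential concentration to an almost-sure rate, specialized to the nonparametric regression setting.
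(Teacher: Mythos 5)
Your proposal is correct and follows essentially the same route as the paper's proof: choose $n$ large enough that $c_0 n^{\alpha}\delta_n < \epsilon$, bound the tail probability by $c_1\exp(-c_2 n^{\gamma})$ via Eq.~\ref{eq:apdx-nonparam-tail-bound} and Condition~(ii), and conclude by summability and the Borel--Cantelli lemma. The only cosmetic difference is that the paper bounds the finitely many early terms of the series by $n_0$ while you simply start the sum at $N_\epsilon$; the argument is the same.
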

\begin{proof}
By Eq.~\ref{eq:apdx-nonparam-tail-bound}, we have
\begin{align*}
    \PB\left( \| \widehat{f} - f^* \| > c_0 \delta_n \right) &= \PB\left( n^{\alpha} \| \widehat{f} - f^* \|  > c_0 n^{\alpha} \delta_n \right) \\
        &< c_1 \exp\left( - c_2 n \delta^2_n \right).
\end{align*}
Since $n^{\alpha} \delta_n = o(1)$, 
for any $\epsilon > 0$, there is a large enough $n_0$ such that
$\forall n > n_0, \epsilon > c_0 n^{\alpha} \delta_{n}$. Then,
\begin{align*}
    \sum_{n=1}^{\infty} \PB\left( n^{\alpha} \| \widehat{f} - f^* \| > \epsilon \right) &\leq n_0 + \sum_{n=n_0}^{\infty} \PB\left( n^{\alpha} \| \widehat{f} - f^* \| > c_0 n^{\alpha} \delta_n \right) \\
        &< n_0 + \sum_{n=n_0}^{\infty} c_1 \exp\left( - c_2 n \delta^2_n \right) \\
        &= n_0 + \sum_{n=n_0}^{\infty} c_1 \exp\left( - c_2 n^{\gamma} \right) \\
        &< \infty.
\end{align*}
Therefore, by the Borel–Cantelli lemma, we have $\| \widehat{f} - f^* \| = \litOas(n^{-\alpha})$.
\end{proof}

\section{Experiments}
\label{sec:apdx-experiments}
\subsection{Nonlinear two-sample IV}\label{sec:apdx-expt-nonlinear-iv}

In this section, we present additional details
for the synthetic experiments 
with nonlinear causal models (Section~\ref{sec:expr-synthetic}). 
The structural equations for generating the data are:
\begin{align*}
    W &\sim \text{Uniform}[-\theta_w, \theta_w], \\
    U &\sim \text{Uniform}[-\theta_u, \theta_u], \\
    Z &\sim \text{Bernoulli}( \omega( f_Z(W) )), \\
    X &\sim \text{Bernoulli}( \omega( f_U(U) + f_{WZ}(W, Z) + b_Z )), \\
    Y &:= f_{Y.U}(U) + f_{WX}(W, X) + b_Y + \text{Uniform}[-\theta_y, \theta_y],
\end{align*}
where $\omega(.)$ is the sigmoid function;
$\theta_w, \theta_u, b_Z, b_Y \in \RB$ are the model parameters;
$f_{*}(.)$ are nonlinear functions generated using
$100$ random Fourier features \citep{rahimi2007random}
that approximate a Gaussian Process \footnote{We used the code from \href{https://random-walks.org/book/papers/rff/rff.html}{https://random-walks.org/book/papers/rff/rff.html}.} ;
and $U$ is an unmeasured confounder.

\subsection{Synthetic Linear Causal Models}\label{sec:apdx-expt-linear-models}

We also evaluate our methods on three tasks where 
the data is generated from linear causal models (Fig.~\ref{fig:synthetic-data-linear-results}).
We use linear models for nuisance estimation.
Since a correctly specified parametric model is used, we obtain
$\sqrt{n}$-convergence rates for nuisance estimation.

We first test our methods on the
two-sample LATE estimation task 
(see Example~\ref{example:two-sample-late} and Fig.~\ref{fig:disjoint-iv-graph})
where the data sources are $\DM = \left( \PB(W, Z, Y), \PB(W, Z, X) \right)$.
We set the model parameters such that the oracle simplex is
$\kappa^* \approx [0.65, 0.35]^\top$.
We compare the relative regrets of our proposed policies 
to a fixed policy that queries both data sources equally 
with $\kappa_T = [0.5, 0.5]^\top$
(Fig.~\ref{fig:two-sample-iv-regret}).
The fixed policy suffers constant relative regret of $\approx 15\%$.
By contrast, the ETC and ETG policies have close to zero relative regret as the horizon $T$
increases, demonstrating the efficiency gained using
adaptive data collection.

Next, we test our methods on the ATE estimation task for
the overidentified confounder-mediator causal graph
where both the backdoor and frontdoor identification strategies hold
(see Example~\ref{example:confounder-mediator} and Fig.~\ref{fig:confounder-mediator-graph}).
The data sources are $\DM = \left( \PB(W, X, Y), \PB(X, M, Y) \right)$.
We set the model parameters such that the oracle simplex is
$\kappa^* = [0, 1]^\top$, i.e., the frontdoor estimator is more efficient.
We compare our proposed policies to a fixed policy that queries both
data sources uniformly (see Fig.~\ref{fig:frontdoor-backdoor-regret}).
We observe that all adaptive policies outperform the fixed
policy significantly 
(we omit \emph{etc} because it performs similarly
as \emph{etg}).
However, the relative regret increases as the exploration increases
(\emph{etg}\_$0.1$ performs the best)
since it reduces the available budget for getting close
to the oracle simplex.

\begin{figure}
\centering
\begin{subfigure}[b]{1\textwidth}
\centering
\includegraphics[scale=0.35]{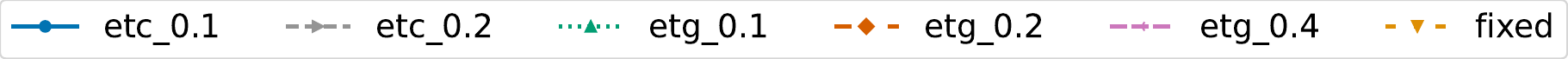}
\vspace{10px}
\end{subfigure}
\begin{subfigure}[b]{0.31\textwidth}
\centering
\includegraphics[scale=0.31]{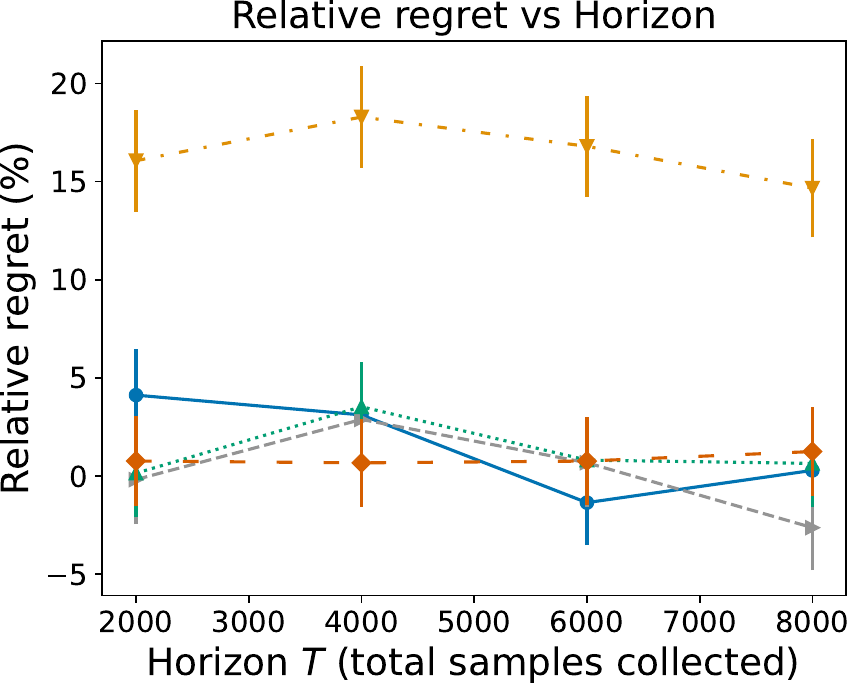}
\caption{Example~\ref{example:two-sample-late}}
\label{fig:two-sample-iv-regret}
\end{subfigure}
\begin{subfigure}[b]{0.31\textwidth}
\centering
\includegraphics[scale=0.31]{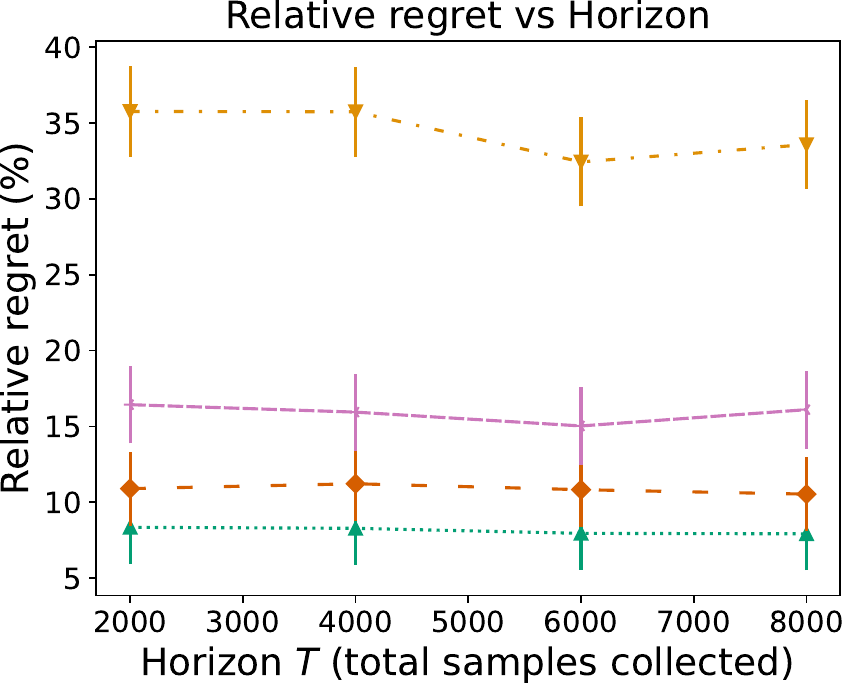}
\caption{Example~\ref{example:confounder-mediator}}
\label{fig:frontdoor-backdoor-regret}
\end{subfigure}
\begin{subfigure}[b]{0.31\textwidth}
\centering
\includegraphics[scale=0.31]{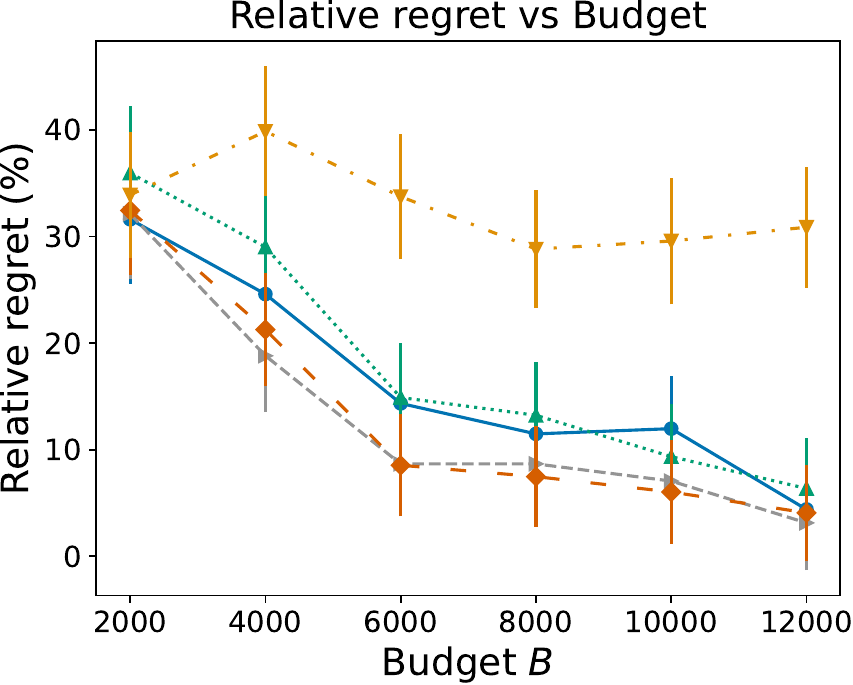}
\caption{Example~\ref{example:combine-observational-datasets}}
\label{fig:combine-obs-datasets-regret}
\end{subfigure}
\caption{The relative regret of different policies across various horizons/budgets for three ATE estimation tasks (error bars denote $95\%$ CIs).
We simulate data from synthetic linear causal models and use
linear nuisance estimators.
In all cases, the online data collection policies outperform the fixed policy as the horizon increases.
}
\label{fig:synthetic-data-linear-results}
\end{figure}

Next, we test our methods on the ATE estimation task 
in Example~\ref{example:combine-observational-datasets}
(also see Fig.~\ref{fig:two-covariates-graph})
with a non-uniform cost structure.
The data sources are $\DM = \left( \PB(U, W, X, Y), \PB(W, X, Y) \right)$.
We set the cost structure $c = [2, 1]$, i.e., querying the first data source is
twice as costly.
We set the model parameters such that the oracle simplex is
$\kappa^* \approx [0.4, 0.6]^\top$.
We compare our proposed policies to a fixed policy that queries only the 
first data source with $\kappa_T = [1, 0]^\top$ (see Fig.~\ref{fig:combine-obs-datasets-regret}).
The fixed policy suffers a constant relative regret of $\approx 35\%$.
We observe that the ETC and ETG policies suffer high relative regret at lower budgets.
As the budget increases, the ETC and ETG policies converge to zero relative regret.

\section{Examples of Online Moment Selection}
\label{sec:apdx-data-fusion-examples}
\newtheorem*{example:confounder-mediator}{Example~\ref{example:confounder-mediator} (more details)}
\begin{example:confounder-mediator}
Consider the confounder-mediator causal graph (Fig.~\ref{fig:confounder-mediator-graph})
with a binary treatment $X$, mediator $M$, outcome $Y$, confounder $W$.
The target parameter is the causal effect of $X$ on $Y$,
i.e., $\beta^* = \EB[Y|\text{do}(X=1)] - \EB[Y|\text{do}(X=0)]$ \citep{pearl2009causality}.
With $\DM = \{ \PB(W, X, Y), \PB(X, M, Y) \}$,
$\beta^*$ can be estimated with
the backdoor 
or the frontdoor criterion \citep[Sec.~3.3]{pearl2009causality}. 
The moment conditions can be written as
\begin{align*}
    g_t(\theta) = \begin{bmatrix}
        s_{t, 1} \\
        (1-s_{t, 1})
    \end{bmatrix} \odot \begin{bmatrix}
    \psi_{\text{AIPW}}((W_t, X_t, Y_t); \eta^{\text{(AIPW)}}) - \beta \\
    \psi_{\text{fd}}((X_t, M_t, Y_t); \eta^{\text{(fd)}}) - \beta
    \end{bmatrix},
\end{align*}
where $\psi_{\text{AIPW}}$ is defined in Eq.~\ref{eq:aipw-influence-func} and $\psi_{\text{fd}}$ is the efficient
influence function for the frontdoor estimand \citep{fulcher2020robust}:
\begin{align}
    \psi_{\text{fd}}((X_t, M_t, Y_t); & \eta^{\text{(fd)}}_{*}) = \frac{p_{1}(M_t) - p_{0}(M_t)}{p_{X_t}(M_t)}(Y_t - \EB[Y|X_t, M_t]) + \nonumber \\
        &\begin{aligned}[t]
            & \frac{X_t - (1-X_t)}{p(X_t)} \left( \sum_{x \in \{0, 1\}} \EB[Y|x, M_t] p(x) - \sum_{x, m} \EB[Y|x, m] p_{X_t}(m) p(x) \right)\\
            & + \sum_{m} \EB[Y|X_t, m] (p_{1}(m) - p_{0}(m)).
        \end{aligned} \label{eq:apdx-frontdoor-eif},
\end{align}
where $p_x(m) = \PB(M = m | X=x)$.
\end{example:confounder-mediator}

\begin{example}[Adaptive Neyman allocation]
Consider the setting of a randomized controlled trial
with a binary treatment.
Let $Y(1)$ and $Y(0)$ denote the potential outcomes \citep{imbens2015causal}
for an individual in the treatment and control group, respectively.
For each incoming subject, the experimenter must decide
whether to assign them to the treatment or the control
group \citep{zhao2023adaptive, neyman1934}.
For this setting, take 
$\DM = \left\{ \PB(Y(1)), \, \PB(Y(0)) \right\}$, and
\begin{align*}
    g_t(\theta) = \begin{bmatrix}
        s_{t, 1} \\
        1-s_{t, 1}
    \end{bmatrix} \odot \begin{bmatrix}
    Y_t(1) - \beta - \alpha \\
    Y_t(0) - \alpha
    \end{bmatrix},
\end{align*}
where $\theta = [\beta, \alpha]^\top$ and $\fTar(\theta) = \beta$
is the target parameter (the ATE).
\end{example}

\begin{example}[Long-term treatment effects]
In this setting, the aim is to combine experimental data on short-term outcomes and
observational data on long-term outcomes to estimate the long-term treatment effect \citep{athey2020combining}.
The efficient influence function in \citet[Theorem~3.1]{chen2023semiparametric} can 
be incorporated into our framework. In the notation of \citet{chen2023semiparametric}, we have
\begin{align*}
    g_t(\theta) = \begin{bmatrix}
        s_{t, 1} \\
        (1-s_{t, 1})
    \end{bmatrix} \odot \begin{bmatrix}
    \frac{w - (1 - w)}{\rho_w(s, x)}(y - \mu_w(s, x)) + \Bar{\mu}_1(x) - \Bar{\mu}_0(x) - \beta + \alpha \\
    \frac{w - (1 - w)}{w \varrho(x) + (1-w) (1-\varrho(x))} (\mu_w(s, x) - \Bar{\mu}_w(x)) - \alpha
    \end{bmatrix},
\end{align*}
where the target parameter $\beta$ is the long-term ATE. 
\end{example}

\end{document}